\documentclass[lettersize,journal]{IEEEtran}

\usepackage{amsmath,amsthm,amssymb,amsfonts}
\usepackage{xcolor,colortbl}
\usepackage{graphicx,color}
\usepackage{multirow} 
\usepackage{textcomp}
\usepackage{dirtytalk}
\usepackage{hyperref}
\usepackage{cleveref}
\usepackage{xcolor}
\usepackage{cite}
\usepackage{color}
\usepackage{booktabs}
\usepackage{algorithmic}
\usepackage{algorithm}
\usepackage{array}
\usepackage[caption=false,font=normalsize,labelfont=sf,textfont=sf]{subfig}
\usepackage{textcomp}
\usepackage{stfloats}
\usepackage{url}
\usepackage{verbatim}

\definecolor{darkred}{rgb}{1, 0.1, 0.3}
\definecolor{darkgreen}{rgb}{0.5, 0.8, 0.1}
\definecolor{darkpurple}{rgb}{1.0, 0, 1.0}
\definecolor{darkblue}{rgb}{0, 0, 1.0}
\definecolor{Gray}{gray}{0.85}
\newcolumntype{a}{>{\columncolor{Gray}}c}
\newcolumntype{b}{>{\columncolor{white}}c}

\newtheorem{proposition}{Proposition}
\newtheorem{definition}{Definition}
\newtheorem{corollary}{Corollary}
\newtheorem{theorem}{Theorem}
\newtheorem{problem}{Problem}
\newtheorem{example}{Example}
\newtheorem{lemma}{Lemma}
\newtheorem{claim}{Claim}

\newtheorem{remark}{Remark}

\providecommand{\bydef}{\overset{\text{def}}{=}}
\DeclareMathOperator*{\argmin}{arg\,min}

\hyphenation{op-tical net-works semi-conduc-tor IEEE-Xplore}

\begin{document}

\title{Principal Component Analysis in Space Forms}

\author{Puoya Tabaghi, Michael Khanzadeh, Yusu Wang, and Siavash Mirarab
\thanks{P. Tabaghi and Y. Wang are with the Hal{\i}c{\i}o\u{g}lu Data Science Institute, UCSD (e-mail: \href{mailto:ptabaghi@ucsd.edu}{ptabaghi@ucsd.edu}; \href{mailto:yusuwang@ucsd.edu}{yusuwang@ucsd.edu}). M. Khanzadeh was formerly with the Computer Science and Engineering Department, UCSD. S. Mirarab is with the Electrical and Computer Engineering Department, UCSD (e-mail: \href{mailto:smirarabbaygi@ucsd.edu}{smirarabbaygi@ucsd.edu}).}}

\markboth{Journal of \LaTeX\ Class Files,~Vol.~14, No.~8, August~2021}%
{Shell \MakeLowercase{\textit{et al.}}: A Sample Article Using IEEEtran.cls for IEEE Journals}


\maketitle

\begin{abstract}
Principal Component Analysis (PCA) is a workhorse of modern data science. While PCA assumes the data conforms to Euclidean geometry, for specific data types, such as hierarchical and cyclic data structures, other spaces are more appropriate. We study PCA in space forms; that is, those with constant curvatures. At a point on a Riemannian manifold, we can define a Riemannian affine subspace based on a set of tangent vectors. Finding the optimal low-dimensional affine subspace for given points in a space form amounts to dimensionality reduction. Our Space Form PCA (SFPCA) seeks the affine subspace that best represents a set of manifold-valued points with the minimum projection cost. We propose proper cost functions that enjoy two properties: (1) their optimal affine subspace is the solution to an eigenequation, and (2) optimal affine subspaces of different dimensions form a nested set. These properties provide advances over existing methods, which are mostly iterative algorithms with slow convergence and weaker theoretical guarantees. We evaluate the proposed SFPCA on real and simulated data in spherical and hyperbolic spaces. We show that it outperforms alternative methods in estimating true subspaces (in simulated data) with respect to convergence speed or accuracy, often both.
\end{abstract}

\begin{IEEEkeywords}
principal component analysis, Riemannian manifolds, hyperbolic and spherical spaces.
\end{IEEEkeywords}

\section{Introduction}
\IEEEPARstart{G}{iven} a set of multivariate points, principal component analysis (PCA) finds orthogonal basis vectors so that different components of the data, in the new coordinates, become uncorrelated and the leading bases carry the largest projected variance of the points. PCA is related to factor analysis~\cite{thurstone1931multiple}, Karhunen-Lo\'eve expansion, and singular value decomposition~\cite{wold1987principal} --- with a history going back to the 18th century~\cite{stewart1993early}. The modern formalism of PCA goes back to the work of Hotelling~\cite{hotelling1933analysis}. Owing to its interpretability and flexibility, PCA has been an indispensable tool in data science applications~\cite{jolliffe2002principal}. The PCA formulation has been studied numerously in the literature. Tipping and Bishop~\cite{tipping1999probabilistic} established a connection between factor analysis and PCA in a probabilistic framework. Other extensions have been proposed~\cite{vidal2005generalized}, e.g., Gaussian processes~\cite{lawrence2003gaussian} and sensible~\cite{roweis1997algorithms}, Bayesian~\cite{bishop1998bayesian}, sparse~\cite{jolliffe2003modified,zou2006sparse,cai2013sparse,guan2009sparse}, and Robust PCA~\cite{xu2010robust}.

PCA's main features are its linearity and nested optimality of subspaces with different dimensions. PCA uses a linear transformation to extract features. Thus, applying PCA to non-Euclidean data ignores their geometry, produces points that may not belong to the original space, and breaks downstream applications relying on this geometry~\cite{fletcher2004principal,jiang2022learning,tabaghi2020hyperbolic,fletcher2004principal2}.

We focus on \emph{space forms}: complete, simply connected Riemannian manifolds of dimension $d \geq 2$ and constant curvature --- spherical, Euclidean, or hyperbolic spaces (positive, zero, and negative curvatures)~\cite{lee2006riemannian}. Space forms have gained attention in the machine learning community due to their ability to represent many forms of data. Hyperbolic spaces are suitable for hierarchical structures~\cite{sonthalia2020tree,tabaghi2020geometry,chien2021highly,tabaghi2020hyperbolic,chien2022hyperaid}, biological data~\cite{klimovskaia2020poincare,zhou2021hyperbolic}, and phylogenetic trees~\cite{jiang2022learning}. Spherical spaces find application in text embeddings~\cite{meng2019spherical}, longitudinal data~\cite{dai2018principal}, and cycle-structures in graphs~\cite{gu2018learning}.

To address the shortcomings of Euclidean PCA for non-Euclidean data, several authors propose Riemannian PCA methods~\cite{rahman2005multiscale,tournier2009motion,anirudh2015elastic,fletcher2003statistics,fletcher2004principal,sommer2014optimization,huckemann2010intrinsic,lazar2017scale,pennec2018barycentric,sommer2010manifold}. 
Riemannian manifolds generally lack a vector space structure~\cite{tabaghi2021linear}, posing challenges for defining \emph{principal components}. A common approach for dimensionality reduction of manifold-valued data relies on tangent spaces. Fletcher \emph{et al.}~\cite{fletcher2003statistics} propose a cost to quantify the quality of a Riemannian affine subspace but use a heuristic approach, principal geodesics analysis (PGA), to optimize it: (1) the base point (intrinsic mean) is the solution to a fixed-point problem, (2) Euclidean PCA in tangent space estimates the low-rank tangent vectors. Even more principled approaches do not readily yield tractable solutions necessary for analyzing large-scale data~\cite{sommer2014optimization}, as seen in spherical and hyperbolic PCAs~\cite{liu2019spherical,dai2018principal,chami2021horopca}.

Despite recent progress, PCA in space forms remains inadequately explored. In general, cost-based Riemannian (e.g., spherical and hyperbolic) PCAs rely on finding the optimal Riemannian affine subspace by minimizing a nonconvex function. The cost function, proxy, or methodology is usually inspired by the $\ell_2^2$ cost, with no definite justification for it~\cite{chami2021horopca,dai2018principal,liu2019spherical,fletcher2004principal,fletcher2003statistics,huckemann2010intrinsic}. These algorithms rely on iterative methods to estimate the Riemannian affine subspaces, e.g., gradient descent, fixed-point iterations, and proximal alternate minimization, and they are slow to converge and require parameter tuning. There is also no guarantee that estimated Riemannian affine subspaces form a total order under inclusion (i.e., optimal higher-dimensional subspaces include lower-dimensional ones) unless they perform cost minimization in a greedy (suboptimal) fashion by building high-dimensional subspaces based on previously estimated low-dimensional subspaces. Notably, Chakraborty \emph{et al.} propose a greedy PGA for space forms by estimating one principal geodesic at a time~\cite{chakraborty2016efficient}. They derive an analytic formula for projecting a point onto a parameterized geodesic. This simplifies the projection step of the PGA. However, we still have to solve a nonconvex optimization problem (with no theoretical guarantees) to estimate the principal geodesic at each iteration.

We address PCA limitations in space forms by proposing a \emph{closed-form, theoretically optimal, and computationally efficient} method to derive all principal geodesics at once. We begin with a differential geometric view of Euclidean PCA (\Cref{sec:Principal_Component_Analysis_Revisited}), followed by a generic description of Riemannian PCA (\Cref{sec:Principal_Components_Analysis_in_Riemannian_Manifolds}). In this view, a \emph{proper} PCA cost function must (1) naturally define a centroid for manifold-valued points and (2) yield \emph{theoretically optimal} affine subspaces forming a total order under inclusion. We introduce proper costs for spherical (\Cref{sec:spherical_PCA}) and hyperbolic (\Cref{sec:hyperbolic_PCA}) PCA problems. Minimizing each cost function leads to an eigenequation, which can be effectively solved. For hyperbolic PCA, the optimal affine subspace solves an eigenequation in Lorentzian space which is equipped with an indefinite inner product. These results give us efficient algorithms to derive hyperbolic principal components. Proofs are in the Appendix.

\subsection{Preliminaries and Notations} \label{supp:prelim}
Let $(\mathcal{M}, g)$ be a Riemannian manifold. The tangent space $T_p \mathcal{M}$ is the collection of all tangent vectors at $p \in \mathcal{M}$. The Riemannian metric $g_p : T_p \mathcal{M} \times T_p \mathcal{M} \rightarrow \mathbb{R}$ is given by a positive-definite inner product and depends smoothly on $p$. We use $g_p$ to define notions such as subspace, norms, and angles, similar to inner product spaces. For any subspace $H \subseteq T_p \mathcal{M}$, we define its \emph{orthogonal complement} as follows:
\begin{equation}\label{eq:orthogonal_complement}
H^{\perp} = \{ h^{\prime} \in T_{p} \mathcal{M} : g_{p}(h , h^{\prime}) = 0, \ \forall h \in H \} \subseteq T_p \mathcal{M}.
\end{equation}
The norm of $v \in T_p \mathcal{M}$ is $\|v\| \bydef \sqrt{g_p(v,v)}$. We denote the length of a smooth curve $\gamma : [0,1] \rightarrow \mathcal{M}$ as $L[\gamma] = \int_{0}^{1} \| \gamma^{\prime}(t) \| \, dt$. A geodesic $\gamma_{p_1,p_2}$ is the shortest-length path between $p_1$ and $p_2 \in \mathcal{M}$, that is, $\gamma_{p_1,p_2} = \argmin_{\gamma} L[\gamma] : \gamma(0) = p_1, \gamma(1) = p_2$.  Interpreting the parameter $t$ as \emph{time}, if a geodesic $\gamma(t)$ starts at $\gamma(0) = p \in \mathcal{M}$ with initial velocity $\gamma^{\prime}(0) = v \in T_p \mathcal{M}$, the exponential map $\mathrm{exp}_p(v)$ gives its position at $t = 1$. For $p$ and $x \in \mathcal{M}$, the logarithmic map $\mathrm{log}_p(x)$ gives the initial velocity to move (with constant speed) along the geodesic from $p$ to $x$ in one time step. A Riemannian manifold $\mathcal{M}$ is geodesically complete if the exponential and logarithmic maps, at every point $p \in \mathcal{M}$, are well-defined operators~\cite{gallier2020differential}. A submanifold $\mathcal{M}^{\prime}$ of a Riemannian manifold $(\mathcal{M},g)$ is \emph{geodesic} if any geodesic on $\mathcal{M}^{\prime}$ with its induced metric $g$ is also a geodesic on $(\mathcal{M},g)$. For $N \in \mathbb{N}$, we let $[N] \bydef \{1, \ldots, N\}$ and $[N]_0 \bydef [N] \cup \{0\}$. The variable $x_1$ is an element of the vector $x = (x_0, \ldots, x_{D-1})^{\top} \in \mathbb{R}^{D}$. It can also be an indexed vector, e.g., $x_1, \ldots, x_N \in \mathbb{R}^{D}$. This distinction will be clarified in the context. We use $\mathbb{E}_{N}[\cdot]$ to denote the empirical mean of its inputs with indices in $[N]$.

\section{Principal Component Analysis --- Revisited} \label{sec:Principal_Component_Analysis_Revisited}
Similar to the notion by Pearson~\cite{pearson1901liii}, PCA finds the optimal low-dimensional affine space to represent data. Let $p \in \mathbb{R}^D$ and let the column span of $H \in \mathbb{R}^{D \times K}$ be a subspace. For the affine subspace $p+H$, PCA assumes the following cost:
\begin{equation*}
    \mathrm{cost}(p+H|\mathcal{X}) =  \mathbb{E}_{N}\Big[ f \circ d\big( x_n , \mathcal{P}_{p+H}( x_n ) \big) \Big],
\end{equation*}
where $\mathcal{X} = \{ x_n \in \mathbb{R}^D: n \in [N] \}$, $\mathcal{P}_{p+H}( x_n ) = \argmin_{x \in  p + H } d(x, x_n)$, $d(\cdot, \cdot)$ computes the $\ell_2$ distance, and the distortion function $f(x) = x^2$. This formalism relies on $(1)$ an affine subspace $p + H$, $(2)$ the projection operator $\mathcal{P}_{p+H}$, and $(3)$ the distortion function $f$. To generalize affine subspaces to Riemannian manifolds, consider parametric lines:
\begin{equation}\label{eq:point_line_and_normal_line}
    \gamma_{p,x}(t) = (1-t)p+tx, \ \mbox{ and } \ \gamma_{h^{\prime}}(t) = p + t h^{\prime},
\end{equation}
where $h^{\prime} \in H^{\perp}$, the orthogonal complement of the subspace $H$; see \Cref{fig:affine_subspaces} $(a,b)$. We reformulate affine subspaces as:
\begin{align*}
    p+H &= \{ x \in \mathbb{R}^{D}: \langle x-p , h^{\prime} \rangle = 0, \text{ for all } h^{\prime} \in H^{\perp} \}\\
    &= \{ x \in \mathbb{R}^{D}: \langle \gamma_{p,x}'(0) , \gamma_{h^{\prime}}'(0) \rangle = 0, \text{ for all } h^{\prime} \in H^{\perp} \},
\end{align*}
where $\langle \cdot, \cdot \rangle$ is the dot product and $\gamma^{\prime}(t_0) \bydef \frac{d}{dt}\gamma(t)|_{t=t_0}$.

\begin{definition}\label{def:affine_subspace_tangent_vectors}
An affine subspace is a set of points, e.g., $x$, where there exists $p \in \mathbb{R}^{D}$ such that (tangent of) the line $\gamma_{x,p}$ (i.e., $\gamma^{\prime}_{p,x}(0)$) is normal to a set of tangent vectors at $p$.
\end{definition}
\Cref{def:affine_subspace_tangent_vectors} requires $\mathrm{dim}(H^{\perp})$ parameters to describe $p+H$. For example, in $\mathbb{R}^3$, we need two orthonormal vectors to represent a one-dimensional affine subspace; see \Cref{fig:affine_subspaces} $(a)$. We use \Cref{def:affine_subspace_tangent_vectors} since it describes affine subspaces in terms of \emph{lines} and \emph{tangent vectors}, not a global coordinate system.
\begin{figure*}[t!]
\centering
\includegraphics[width=.9\textwidth]{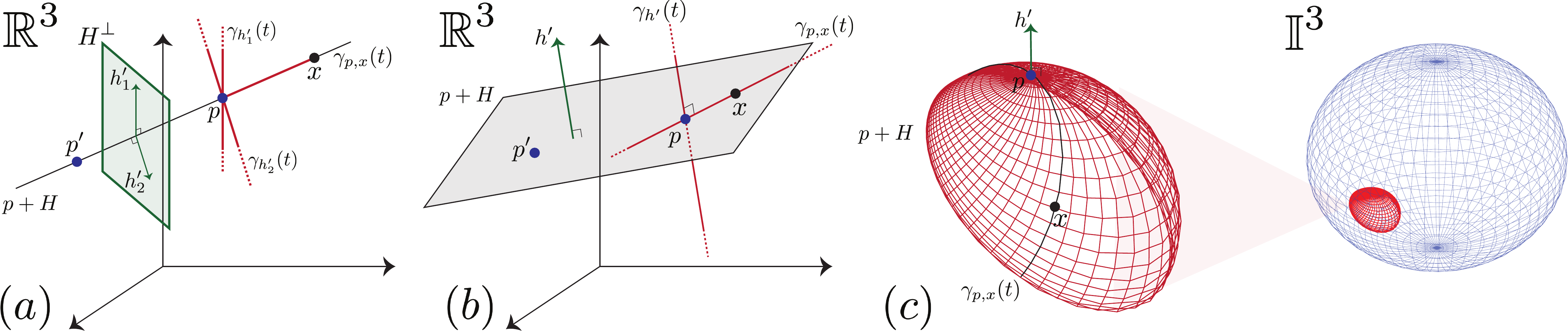}
\caption{ $(a, b)$ 
One- $(a)$ and two-dimensional $(b)$ affine subspaces in $\mathbb{R}^3$. We show subspaces ($H^{\perp}$) at point $p$ instead of the origin. We may define the same Riemannian affine subspace using other base points, e.g., $p^{\prime}$. $(c)$ Two-dimensional affine subspace in a hyperbolic space (Poincar\'e) where $h^{\prime} \in T_p\mathbb{I}^3 = \mathbb{R}^3$.} \label{fig:affine_subspaces}
\end{figure*}

\section{Riemannian Principal Component Analysis} \label{sec:Principal_Components_Analysis_in_Riemannian_Manifolds}
We next introduce Riemannian affine subspaces and then propose a generic framework for Riemannian PCA. 

\subsection{Riemannian Affine Subspaces}
The notion of affine subspaces can be extended to Riemannian manifolds using tangent subspaces of the Riemannian manifold $\mathcal{M}$~\cite{fletcher2003statistics}. The Riemannian affine subspace is the image of a subspace of $T_p \mathcal{M}$ under the exponential map, i.e.,
\begin{equation}
\label{eq:affine_subspace_manifold_exp}
    \mathcal{M}_H = \mathrm{exp}_p(H) \bydef  \{ \mathrm{exp}_p(h): h \in H \}, 
\end{equation}
where $H$ is a subspace of $T_p \mathcal{M}$ and $p \in \mathcal{M}$. Equation \ref{eq:affine_subspace_manifold_exp} is equivalent to the following definition. 
\begin{definition} \label{def:affine_subspace_manifold}
    Let $( \mathcal{M}, g)$ be a geodesically complete Riemannian manifold, $p \in \mathcal{M}$, and subspace $H \subseteq T_p \mathcal{M}$. We let $\mathcal{M}_{H}= \{ x \in \mathcal{M}: g_{p}( \mathrm{log}_p(x), h^{\prime} ) = 0, \ \forall h^{\prime} \in H^{\perp}\}$ where $H^{\perp}$ is the orthogonal complement of $H$; see equation \eqref{eq:orthogonal_complement}.
\end{definition}
The set $\mathcal{M}_{H}$ is a collection of points on geodesics originating at $p$ such that their initial velocities are normal to vectors in $H^{\perp}  \subseteq T_p \mathcal{M}$, or form a subspace $H \subseteq T_{p} \mathcal{M}$; cf. \Cref{def:affine_subspace_tangent_vectors}.
\begin{example}
    When $H$ is a one-dimensional subspace, then $\mathcal{M}_H$ contains geodesics that pass through $p$ with the initial velocity in $H$, i.e., $\mathcal{M}_H = \{ \gamma(t): \ \mbox{geodesic } \ \gamma \ \mbox{ where } \ \gamma (0) = p, \gamma^{\prime}(0) \in H, t \in \mathbb{R} \}$. Thus, with equation \eqref{eq:affine_subspace_manifold_exp}, geodesics are one-dimensional Riemannian affine subspaces. 
\end{example}
\begin{example} \label{ex:euclidean_from_riem}
    The Euclidean exponential map is $\mathrm{exp}_p(h) = p+h$; see \Cref{tab:diff_geom_ingredients}. Therefore, equation \eqref{eq:affine_subspace_manifold_exp} recovers the affine subspaces, i.e., $\mathrm{exp}_p(H) =\{ p+h: h \in H\} \bydef p+H$. 
\end{example}
Recall that, a nonempty set $V$ is a Euclidean affine subspace if and only if there exists $p \in V$ such that $\alpha (v_1-p)+p \in V$ and $(v_1-p) + (v_2-p) + p \in V$ for all $\alpha \in \mathbb{R}$ and $v_1, v_2 \in V$. We have a similar definition for Riemannian affine subspaces.
\begin{definition}
Let $(\mathcal{M},g)$ be a geodesically complete Riemannian manifold. Then, nonempty $V\subseteq \mathcal{M}$ is an affine set if and only if there exists $p \in V$ such that $\mathrm{exp}_p \big( \alpha \ \mathrm{log}_p(v_1) \big) \in V$ and $\mathrm{exp}_p \big( \mathrm{log}_p(v_1)+\mathrm{log}_p(v_2) \big) \in V$ for all $\alpha \in \mathbb{R}$ and $v_1, v_2 \in V$.
\end{definition}

\subsection{Proper Cost for Riemannian PCA}
Similar to Euclidean PCA, Riemannian PCA aims to find a (Riemannian) affine subspace with minimum average distortion between points and their projections. 

\begin{definition}\label{def:distortion}
Let $(\mathcal{M},g)$ be a geodesically complete Riemannian manifold that is equipped with distance function $d$, $p \in \mathcal{M}$, and subspace $H \subseteq T_p \mathcal{M}$. A geodesic projection of $x \in \mathcal{M}$ onto $\mathcal{M}_H$ is $\mathcal{P}_H(x) \in \argmin_{y \in \mathcal{M}_H} d(x,y)$. If $\argmin_{y \in \mathcal{M}_H} d(x,y) \neq \emptyset$, then $\min_{y \in \mathcal{M}_H} d(x,y) = \| \mathrm{log}_{\mathcal{P}_H(x)}(x) \|$ for any geodesic projection $\mathcal{P}_H(x)$.
\end{definition}
\begin{remark}
Projecting a manifold-valued point onto a Riemannian affine subspace is not a trivial task, often requiring the solution of a nonconvex optimization problem over a submanifold, i.e., solving $\argmin_{y \in \mathcal{M}_H} d(x,y)$. \Cref{def:distortion} states that if a solution exists (which is not always guaranteed), the projection distance must be equal to $\| \mathrm{log}_{\mathcal{P}_H(x)}(x) \|$. This also requires computing the logarithmic map, which may not be available for all Riemannian manifolds.
\end{remark}

In Euclidean PCA, minimizing the $\ell_2^2$ cost is equivalent to maximizing the variance of the projected data. To avoid confusing arguments regarding the notion of variance and centroid, we formalize the cost (parameterized by $\mathcal{M}_{H}$) in terms of the projection distance, viz.,
\begin{align}\label{eq:distortion_general}
\mathrm{cost}(\mathcal{M}_{H}| \mathcal{X})  = \mathbb{E}_N \big[ f(\|  \mathrm{log}_{\mathcal{P}_H(x_n)}(x_n) \|  )\big],    
\end{align}
where $\mathcal{X} = \{x_n \in \mathcal{M}: n \in [N] \}$ and $f: \mathbb{R}^{+} \rightarrow \mathbb{R}$ is a monotonically increasing distortion function. The projection point $\mathcal{P}_H(x)$ may not be unique. Its minimizer, if it exists, is the best affine subspace to represent manifold-valued points.
\begin{definition}
    Riemannian PCA aims to minimize the cost in equation \eqref{eq:distortion_general} --- for a specific choice of distortion function $f$.
\end{definition}

\emph{Choice of $f$.} The closed-form solution for the optimal Euclidean affine subspace is due to letting $f(x)=x^2$. This is a proper cost function with the following properties:
\begin{enumerate}
    \item \emph{Consistent Centroid.} The optimal $0$-dimensional affine subspace (a point) is the centroid of data points, i.e., $p^* = \argmin_{y \in \mathbb{R}^D}  \mathbb{E}_N [ f \circ d(x_n,y)] = \mathbb{E}_N [x_n]$. 
    \item \emph{Nested Optimality.} The optimal affine subspaces form a nested set, i.e., $p^{*} \subseteq \big(p + H_1\big)^{*} \subseteq  \cdots$ where $(p + H_d)^{*}$ is the optimal $d$-dimensional affine subspace.
\end{enumerate}
\begin{definition}
For Riemannian PCA, we call $\mathrm{cost}(\mathcal{M}_{H}| \mathcal{X})$ a proper cost function if its minimizers satisfy the consistent centroid and nested optimality conditions.
\end{definition}
Deriving the logarithm operator is not a trivial task for general Riemannian manifolds, e.g., the manifold of rank-deficient positive semidefinite matrices \cite{lahav2023procrustes}. Focusing on constant-curvature Riemannian manifolds, we propose distortion functions that, unlike existing methods, arrive at proper cost functions with closed-form optimal solutions.
\section{SPHERICAL PCA} \label{sec:spherical_PCA}
\begin{table*}[t]
  \centering
  \caption{Summary of relevant operators in Euclidean, spherical, and hyperbolic spaces.}\label{tab:diff_geom_ingredients}
  \begin{tabular}{ccccccc}
    \toprule
    $\mathcal{M}$ & $T_p \mathcal{M}$ & $g_p(u,v)$ & $\mathrm{log}_p(x): \theta = d(x,p)$ & $\mathrm{exp}_p(v)$ & $d(x,p)$ \\
    \midrule
    $\mathbb{R}^D$ & $\mathbb{R}^D$ & $\langle u, v \rangle$ & $x-p$ & $p+v$ & $\|x-p\|_2$\\
    $\mathbb{S}^D$ & $p^{\perp}$ & $\langle u, v \rangle$ & $\frac{\theta}{\sin(\theta)}(x - \cos(\theta) p)$ & $\cos(\sqrt{C}\|v\|)p + \sin(\sqrt{C}\|v\|) \frac{v}{\sqrt{C}\|v\|}$  &  $\frac{1}{\sqrt{C}}\arccos(C\langle x,p \rangle)$ \\
    $\mathbb{H}^D$ & $p^{\perp}$ & $[u,v]$ & $\frac{\theta}{\sinh(\theta)}(x - \cosh(\theta) p)$ & $\cosh(\sqrt{-C}\|v\|) p + \sinh(\sqrt{-C}\|v\|) \frac{v}{\sqrt{-C}\|v\|}$ & $\frac{1}{\sqrt{-C}}\operatorname{acosh}(C [x,p])$\\
  \bottomrule
\end{tabular}
\end{table*}
Consider the spherical manifold $(\mathbb{S}^{D}, g^{\mathbb{S}})$ with curvature $C>0$, where $\mathbb{S}^{D} = \{x \in \mathbb{R}^{D+1}: \langle x,x \rangle = C^{-1} \}$, a sphere with radius $C^{-\frac{1}{2}}$, and $g_p^{\mathbb{S}}(u,v) = \langle u, v \rangle$ computes the dot product of $u,v \in T_p \mathbb{S}^D = \{ x \in \mathbb{R}^{D+1}: \langle x,p \rangle = 0 \} \bydef p^{\perp}$.

\subsection{Spherical affine subspace and the projection operator} 
Let $p \in \mathbb{S}^D$ and the subspace $H \subseteq T_p \mathbb{S}^D = p^{\perp}$. Following \Cref{def:affine_subspace_manifold} and \Cref{tab:diff_geom_ingredients}, the \emph{spherical affine subspace} is:
\begin{equation*}
    \mathbb{S}^D_{H} = \{ x \in \mathbb{S}^D: \langle x, h^{\prime} \rangle = 0, \ \forall h^{\prime} \in H^{\perp} \} = \mathbb{S}^D \cap (p \oplus H),
\end{equation*}
where $\oplus$ is the direct sum operator, i.e., $p \oplus H = \{ \alpha p + h: h \in H, \alpha \in \mathbb{R} \}$, and $H^{\perp}$ is the orthogonal complement of $H$; see equation \eqref{eq:orthogonal_complement}. This matches Pennec's notion of the
metric completion of exponential barycentric spherical subspace~\cite{pennec2018barycentric}.
\begin{claim}\label{claim:geodesic_submanifold_spherical}
    $\mathbb{S}^D_{H}$ is a geodesic submanifold. 
\end{claim}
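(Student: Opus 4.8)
The plan is to show that $\mathbb{S}^D_H$, with its metric induced from $\mathbb{S}^D$, is isometric to a lower-dimensional round sphere sitting inside $\mathbb{S}^D$ as the intersection with a linear subspace through the origin, and then to verify that geodesics of this small sphere are also geodesics of $\mathbb{S}^D$. Concretely, set $W = p \oplus H \subseteq \mathbb{R}^{D+1}$, a linear subspace of dimension $k+1$ where $k = \dim H$; by the description $\mathbb{S}^D_H = \mathbb{S}^D \cap W$, this is exactly the set of unit-norm (radius $C^{-1/2}$) vectors in $W$, i.e.\ a round sphere $\mathbb{S}^k$ of the same radius. First I would record this identification and note that the ambient metric restricted to $W$ is still the standard dot product, so $\mathbb{S}^D_H$ with its induced metric is a genuine constant-curvature-$C$ sphere of dimension $k$.

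Next I would recall the standard fact that great circles are the geodesics of a round sphere: for any point $q \in \mathbb{S}^D$ and any tangent direction $v \in T_q\mathbb{S}^D = q^{\perp}$, the unit-speed geodesic is $\gamma(t) = \cos(\sqrt C\, t)\, q + \tfrac{1}{\sqrt C}\sin(\sqrt C\, t)\, \hat v$ (consistent with the $\exp_p$ formula in \Cref{tab:diff_geom_ingredients}), which is precisely the intersection of $\mathbb{S}^D$ with the $2$-plane $\mathrm{span}\{q, v\}$. Now take any geodesic $\gamma$ of the submanifold $\mathbb{S}^D_H$ through a point $q \in \mathbb{S}^D_H$ with initial velocity $v \in T_q(\mathbb{S}^D_H)$. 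Since $\mathbb{S}^D_H = \mathbb{S}^D \cap W$ is itself a round sphere, $\gamma$ is the great circle $\mathbb{S}^D_H \cap \mathrm{span}\{q,v\}$. The key observation is that $q \in W$ and $v \in T_q(\mathbb{S}^D_H) \subseteq W$, so $\mathrm{span}\{q,v\} \subseteq W$; hence this same great circle equals $\mathbb{S}^D \cap \mathrm{span}\{q,v\}$, which is the geodesic of the \emph{ambient} sphere $\mathbb{S}^D$ with the same initial data. Therefore every geodesic of $\mathbb{S}^D_H$ is a geodesic of $\mathbb{S}^D$, which is the definition of a geodesic submanifold given in \Cref{supp:prelim}.

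I expect the main (minor) obstacle to be the bookkeeping around tangent spaces: one must check carefully that $T_q(\mathbb{S}^D_H) = q^{\perp} \cap W$ so that the inclusion $\mathrm{span}\{q,v\} \subseteq W$ is justified, and that the induced metric genuinely agrees with the round metric of radius $C^{-1/2}$ rather than some rescaled version. A secondary point worth a sentence is nondegeneracy, i.e.\ that $\dim H \geq 1$ (or handling $H = \{0\}$, where $\mathbb{S}^D_H$ is a single point and the statement is vacuous). All of these are routine linear algebra once the identification $\mathbb{S}^D_H = \mathbb{S}^D \cap W$ is in hand, so the proof should be short.
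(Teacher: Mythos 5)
Your proposal is correct and rests on the same key observations as the paper's proof: the identification $\mathbb{S}^D_H = \mathbb{S}^D \cap (p\oplus H)$ and the fact that spherical geodesics are great circles, i.e.\ intersections with $2$-planes through the origin, which stay inside $p\oplus H$ when the relevant point and tangent lie in $p\oplus H$. The only cosmetic difference is direction --- the paper takes two points $x,y\in\mathbb{S}^D_H$ and shows the ambient geodesic $\mathrm{span}\{x,y\}\cap\mathbb{S}^D$ remains in $\mathbb{S}^D_H$, while you take a geodesic of the submanifold (first identifying $\mathbb{S}^D_H$ as a round sphere in $W=p\oplus H$) and show it coincides with an ambient great circle, matching the paper's definition of geodesic submanifold verbatim.
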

There are orthogonal tangent vectors $h_1^{\prime}, \ldots, h_{K^{\prime}}^{\prime}$ that form a complete basis for $H^{\perp}$, i.e., $\langle h^{\prime}_{i}, h^{\prime}_{j} \rangle = C \delta_{i,j}$, where $\delta_{i,j} = 0$ if $i \neq j$  and $\delta_{i,j} = 1$ if $i=j$. Using these basis vectors, we derive a simple expression for the projection distance.

\begin{proposition}\label{prop:projection_and_distance_spherical}
For any $\mathbb{S}_H^D$ and $x \in \mathbb{S}^D$, we have
\[
    \min_{y \in \mathbb{S}_H^D}d(x,y) = C^{-\frac{1}{2}}\mathrm{acos}\Big(\sqrt{1 - \sum_{k \in [K^{\prime}]} \langle x,h^{\prime}_{k} \rangle^2 } \Big),
\]
where $\{h^{\prime}_{k^{\prime}}\}_{k^{\prime} \in [K^{\prime}]}$ are the complete orthogonal basis vectors of $H^{\perp}$. Both $\mathbb{S}_H^D$ and $\mathbb{S}^D$ have a fixed curvature $C > 0$.
\end{proposition}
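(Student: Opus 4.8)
The plan is to reduce the problem to a computation in the ambient space $\mathbb{R}^{D+1}$ by exploiting the characterization $\mathbb{S}^D_H = \mathbb{S}^D \cap (p \oplus H)$ and the orthogonal decomposition it induces. First I would observe that since $\{h'_{k'}\}_{k' \in [K']}$ is an orthogonal basis of $H^\perp$ with $\langle h'_i, h'_j \rangle = C\delta_{ij}$, and $H^\perp \subseteq p^\perp$, the set $\{p, h'_1, \dots, h'_{K'}\}$ together with a basis of $H$ gives an orthogonal frame for $\mathbb{R}^{D+1}$. For any $x \in \mathbb{S}^D$, write $x = x_H + x_{H^\perp} + \alpha p$ where $x_{H^\perp} = C^{-1}\sum_{k'} \langle x, h'_{k'}\rangle h'_{k'}$ is the component in $H^\perp$ and $x_H$ lies in $H$ (note $\alpha = C\langle x, p\rangle$). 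The point $y \in \mathbb{S}^D_H$ that minimizes $d(x,y)$ — equivalently maximizes $\langle x, y\rangle$ by the distance formula $d(x,y) = C^{-1/2}\arccos(C\langle x,y\rangle)$ in \Cref{tab:diff_geom_ingredients}, since $\arccos$ is decreasing on $[-1,1]$ — is obtained by projecting $x$ onto $p \oplus H$ and renormalizing to the sphere.

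The key steps, in order: (1) Using $d(x,y) = C^{-1/2}\arccos(C\langle x, y\rangle)$, reduce $\min_{y\in\mathbb{S}^D_H} d(x,y)$ to $\max_{y \in \mathbb{S}^D_H}\langle x, y\rangle$. (2) Parametrize $y \in \mathbb{S}^D_H$ as the unit-scaled (radius $C^{-1/2}$) elements of $p \oplus H$; since $x$'s $H^\perp$-component is orthogonal to everything in $p\oplus H$, we get $\langle x, y\rangle = \langle x_H + \alpha p, y\rangle$, and Cauchy–Schwarz gives the maximizer $y^\star = C^{-1/2}\frac{x_H + \alpha p}{\|x_H + \alpha p\|}$, with maximal value $C\langle x, y^\star\rangle = \sqrt{C}\|x_H + \alpha p\|$. (3) Compute $\|x_H + \alpha p\|^2 = \|x\|^2 - \|x_{H^\perp}\|^2 = C^{-1} - C^{-1}\sum_{k'}\langle x, h'_{k'}\rangle^2$ using Pythagoras and $\|x\|^2 = C^{-1}$, so that $C\langle x, y^\star\rangle = \sqrt{1 - \sum_{k'}\langle x, h'_{k'}\rangle^2}$. (4) Substitute back: $\min_{y} d(x,y) = C^{-1/2}\arccos\!\big(\sqrt{1 - \sum_{k'}\langle x, h'_{k'}\rangle^2}\big)$. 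One should also verify consistency with \Cref{def:distortion}, i.e., that this equals $\|\log_{\mathcal{P}_H(x)}(x)\|$, but that follows since the geodesic distance on the sphere between $x$ and $y^\star$ is exactly $C^{-1/2}\arccos(C\langle x, y^\star\rangle)$.

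The main obstacle I anticipate is not the algebra but the justification that the constrained maximizer of $\langle x, y\rangle$ over $y \in \mathbb{S}^D_H$ is indeed the renormalized orthogonal projection of $x$ onto the linear subspace $p \oplus H$ — i.e., that restricting to a sphere inside a linear subspace and then maximizing a linear functional behaves as in the Euclidean case. This needs the clean observation that $\mathbb{S}^D_H$ is precisely the sphere of radius $C^{-1/2}$ in the linear subspace $p\oplus H$ (which is where \Cref{claim:geodesic_submanifold_spherical} and the identity $\mathbb{S}^D_H = \mathbb{S}^D \cap (p\oplus H)$ are used), plus handling the degenerate case $x_H + \alpha p = 0$ (where $x \perp p\oplus H$, the projection is empty/every point is equidistant, and the claimed formula gives $\arccos(0) = \pi/2$, the maximal possible distance, consistently). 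A minor point to check is that $y^\star$ genuinely lies in $\mathbb{S}^D_H$ and not merely in $\mathbb{S}^D$, which is immediate since $x_H + \alpha p \in p \oplus H$ by construction.
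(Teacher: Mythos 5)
Your proof is correct, but it takes a genuinely different route from the paper. The paper sets up a Lagrangian
\[
\mathcal{L}(y,\gamma,\Lambda) = \langle x,y\rangle + \gamma(\langle y,y\rangle - C^{-1}) + \sum_{k\in[K']}\lambda_k\langle y,h'_k\rangle,
\]
derives from stationarity that the minimizer has the ansatz $\mathcal{P}_H(x)=\sum_i\alpha_i h'_i+\beta x$, and then pins down the coefficients from the subspace and norm constraints, landing on $\mathcal{P}_H(x)=C^{-1/2}\|P_H(x)\|_2^{-1}P_H(x)$ with $P_H(x)=x-C^{-1}\sum_k\langle x,h'_k\rangle h'_k$. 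You instead observe that $\mathbb{S}^D_H$ is literally the radius-$C^{-1/2}$ sphere inside the linear subspace $p\oplus H$, convert the distance minimization to the inner-product maximization $\max_{y\in\mathbb{S}^D_H}\langle x,y\rangle$ via the monotone-decreasing $\arccos$, kill the $H^\perp$-component of $x$ by orthogonality, and then apply Cauchy--Schwarz and Pythagoras. The two arguments of course reach the same projector (your $y^\star$ is exactly the paper's $\mathcal{P}_H(x)$, since $x_H+\alpha p = P_H(x)$), but your version is more elementary and geometric: it needs no Lagrange multipliers, no ansatz justification for the form of the stationary point, and makes the non-uniqueness at $P_H(x)=0$ (and the resulting distance $C^{-1/2}\pi/2$) transparent from $\arccos(0)$. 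The paper's Lagrangian route is more mechanical and would adapt directly to other constraint sets, which is likely why it was chosen as the template for the hyperbolic analogue (\Cref{prop:projection_and_distance_hyperbolic}), where the indefinite inner product makes the geometric Cauchy--Schwarz shortcut less immediate. Both proofs are complete and correct; yours is shorter and, for the spherical case, cleaner.
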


For points at $C^{-\frac{1}{2}}\frac{\pi}{2}$ distance from $\mathbb{S}^D_H$, there is no unique projection onto the affine subspace. Nevertheless, \Cref{prop:projection_and_distance_spherical} provides a closed-form expression for the projection distance in terms of basis vectors for $H^{\perp}$. This distance monotonically increases with the length of the residual of $x$ onto $H^{\perp}$, i.e., $\sum_{k \in [K^{\prime}]} \langle x,h^{\prime}_k \rangle^2$. Since $\mathrm{dim}(H) \ll D$ is common, switching to the basis of $H$ helps us represent the projection distance with fewer parameters.
\begin{proposition}\label{prop:projection_and_distance_spherical_parallel}
For any $\mathbb{S}_H^D$ and $x \in \mathbb{S}^D$, we have
\[
    \min_{y \in \mathbb{S}_H^D}d(x,y) = C^{-\frac{1}{2}}\mathrm{acos} \Big( \sqrt{C^2\langle x,p \rangle^2 + \sum_{k \in [K]} \langle x,h_k \rangle^2 } \Big),
\]
where $\{ h_k\}_{k \in [K]}$ are complete orthogonal basis vectors of $H$.
\end{proposition}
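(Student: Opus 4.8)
The plan is to derive this directly from \Cref{prop:projection_and_distance_spherical} by switching between an orthogonal basis of $H^{\perp}$ and one of $H$, using a Parseval identity in the ambient space $\mathbb{R}^{D+1}$. First I would assemble a complete orthogonal basis of $\mathbb{R}^{D+1}$ out of the pieces already at hand. Since $H \subseteq p^{\perp} = T_p\mathbb{S}^D$ has dimension $K$ and its orthogonal complement $H^{\perp}$ inside $p^{\perp}$ has dimension $K' = D-K$, the family $\{p\}\cup\{h_k\}_{k\in[K]}\cup\{h'_{k'}\}_{k'\in[K']}$ is mutually orthogonal: $p\perp h_k$ and $p\perp h'_{k'}$ because both families lie in $p^{\perp}$, and $h_k\perp h'_{k'}$ by the definition of $H^{\perp}$ as the orthogonal complement of $H$ in $p^{\perp}$. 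A dimension count $1+K+(D-K)=D+1$ shows this family spans $\mathbb{R}^{D+1}$, hence it is a complete orthogonal basis.

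Next I would normalize: $\langle p,p\rangle=C^{-1}$ gives $\|\sqrt{C}\,p\|=1$, while $\langle h_i,h_j\rangle = C\delta_{i,j}$ and $\langle h'_i,h'_j\rangle = C\delta_{i,j}$ give $\|h_k/\sqrt{C}\|=\|h'_{k'}/\sqrt{C}\|=1$. Applying Parseval's identity to $x\in\mathbb{S}^D$ (so that $\langle x,x\rangle = C^{-1}$) with respect to this orthonormal basis, and multiplying through by $C$, yields
\[
1 = C^2\langle x,p\rangle^2 + \sum_{k\in[K]}\langle x,h_k\rangle^2 + \sum_{k'\in[K']}\langle x,h'_{k'}\rangle^2,
\]
equivalently $1 - \sum_{k'\in[K']}\langle x,h'_{k'}\rangle^2 = C^2\langle x,p\rangle^2 + \sum_{k\in[K]}\langle x,h_k\rangle^2$. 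Substituting this identity into the expression of \Cref{prop:projection_and_distance_spherical}, namely $\min_{y\in\mathbb{S}^D_H}d(x,y) = C^{-1/2}\,\mathrm{acos}\big(\sqrt{1 - \sum_{k'\in[K']}\langle x,h'_{k'}\rangle^2}\big)$, gives the stated formula.

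I do not expect a genuine obstacle here; the only points needing care are verifying the orthogonality of the combined basis $\{p\}\cup\{h_k\}\cup\{h'_{k'}\}$ and carrying the curvature-dependent normalization constants correctly through Parseval's identity. Once those are in place, the claim follows immediately from \Cref{prop:projection_and_distance_spherical}.
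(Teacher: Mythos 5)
Your proof is correct and reaches the conclusion by essentially the same decomposition the paper uses: both build the complete orthogonal set $\{p, h_1,\ldots,h_K, h'_1,\ldots,h'_{K'}\}$ of $\mathbb{R}^{D+1}$ and exploit its orthogonality to pass from the $H^\perp$-components of $x$ to its $H$-components. The only cosmetic difference is that the paper expands $P_H(x)$ in this basis and computes $\|P_H(x)\|_2$ directly, whereas you apply Parseval's identity to $x$ itself and substitute the resulting identity into \Cref{prop:projection_and_distance_spherical}; these are two phrasings of the same computation, and your Parseval formulation is, if anything, slightly more streamlined.
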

Next, we derive an isometry between $\mathbb{S}^D_H$ and $\mathbb{S}^{\mathrm{dim}(H)}$ --- where both have the fixed curvature $C > 0$.
\begin{theorem} \label{thm:spherical_affine_subspace_isometry}
The isometry $\mathcal{Q}: \mathbb{S}_H^D \rightarrow \mathbb{S}^{K}$ and its inverse are 
\[
\mathcal{Q}(x) =  C^{-\frac{1}{2}}\begin{bmatrix}
    C\langle x,p \rangle \\
    \langle x,h_{1} \rangle \\
    \vdots \\
   \langle x,h_{K} \rangle
\end{bmatrix}, \ \mathcal{Q}^{-1} ( y ) =  C^{-\frac{1}{2}} (y_0 C p+ \sum_{k = 1}^{K} y_k h_{k}),
\]
where $\{ h_k\}_{k \in [K]}$ are complete orthogonal basis vectors of $H$.
\end{theorem}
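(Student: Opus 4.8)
The plan is to identify $\mathcal{Q}$ as the map that reads off the coordinates of a point of $\mathbb{S}^D_H$ in a natural orthogonal basis of $p\oplus H$, check by direct substitution that it is a bijection onto $\mathbb{S}^{K}$, and then promote the (manifest) preservation of ambient inner products to a Riemannian isometry. Throughout I use the normalization $\langle p,p\rangle = C^{-1}$, $\langle h_i,h_j\rangle = C\,\delta_{i,j}$, $\langle p,h_k\rangle = 0$, so that $\{p,h_1,\dots,h_K\}$ is an orthogonal basis of $p\oplus H$ with ambient norms $C^{-1/2}$ and $C^{1/2}$ respectively, and the identity $\mathbb{S}^D_H = \mathbb{S}^D \cap (p\oplus H)$ established in the text.

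First I would record the coordinate formulas. Any $x \in \mathbb{S}^D_H$ has a unique expansion $x = \alpha p + \sum_k \beta_k h_k$, and pairing with $p$ and with $h_k$ gives $\alpha = C\langle x,p\rangle$, $\beta_k = C^{-1}\langle x,h_k\rangle$. Substituting these into the stated formula for $\mathcal{Q}^{-1}$ recovers $x$, so $\mathcal{Q}^{-1}\circ \mathcal{Q} = \mathrm{id}$; running the same computation in the other order gives $\mathcal{Q}\circ \mathcal{Q}^{-1} = \mathrm{id}$ on $p\oplus H$. Next I would verify the two maps land where claimed. For $x\in\mathbb{S}^D_H$ the projection distance onto $\mathbb{S}^D_H$ is zero, so \Cref{prop:projection_and_distance_spherical_parallel} forces $C^2\langle x,p\rangle^2 + \sum_k \langle x,h_k\rangle^2 = 1$; hence $\langle \mathcal{Q}(x),\mathcal{Q}(x)\rangle = C^{-1}\bigl(C^2\langle x,p\rangle^2 + \sum_k \langle x,h_k\rangle^2\bigr) = C^{-1}$, i.e. $\mathcal{Q}(x)\in\mathbb{S}^{K}$. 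Conversely $\mathcal{Q}^{-1}(y)\in p\oplus H$ by construction, and a one-line computation using $\langle y,y\rangle = y_0^2 + \sum_k y_k^2 = C^{-1}$, $\langle p,p\rangle = C^{-1}$, $\langle h_k,h_k\rangle = C$ gives $\langle \mathcal{Q}^{-1}(y),\mathcal{Q}^{-1}(y)\rangle = C^{-1}$, so $\mathcal{Q}^{-1}(y) \in \mathbb{S}^D\cap(p\oplus H) = \mathbb{S}^D_H$. Combined with the previous step, $\mathcal{Q}$ is a bijection $\mathbb{S}^D_H \to \mathbb{S}^{K}$.

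For the isometry claim I would observe that $\mathcal{Q}$ is the restriction of the \emph{linear} map $\bar{\mathcal{Q}}: p\oplus H \to \mathbb{R}^{K+1}$ that sends $p \mapsto C^{-1/2}e_0$ and $h_k \mapsto C^{1/2}e_k$ (each component of $\mathcal{Q}$ is linear in $x$). Since $\bar{\mathcal{Q}}$ maps an orthogonal basis to an orthogonal basis with matching norms, it is a linear isometry of inner-product spaces, so $\langle \mathcal{Q}(x),\mathcal{Q}(x')\rangle = \langle x,x'\rangle$ for all $x,x' \in \mathbb{S}^D_H$ (equivalently, one expands both sides in the basis and compares). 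Because the distance on a sphere of curvature $C$ is $d(u,v) = C^{-1/2}\arccos(C\langle u,v\rangle)$ (\Cref{tab:diff_geom_ingredients}), preservation of the ambient inner product yields $d_{\mathbb{S}^{K}}(\mathcal{Q}(x),\mathcal{Q}(x')) = d_{\mathbb{S}^D}(x,x')$; and since $\mathbb{S}^D_H$ is exactly the radius-$C^{-1/2}$ round sphere inside the Euclidean subspace $p\oplus H$ while $\bar{\mathcal{Q}}$ is a linear isometry of the ambient spaces, $\mathcal{Q}$ is a Riemannian isometry (or one may invoke Myers–Steenrod: a distance-preserving bijection between complete Riemannian manifolds is a Riemannian isometry).

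All the computations here are routine; the only points that require care are carrying the normalization $\langle h_i,h_j\rangle = C\,\delta_{i,j}$ consistently through every pairing and keeping the curvature of the target $\mathbb{S}^{K}$ equal to $C$, so I do not anticipate a substantive obstacle.
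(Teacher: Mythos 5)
Your proof is correct and follows essentially the same route as the paper's: both expand $x\in\mathbb{S}^D_H$ in the orthogonal basis $\{p,h_1,\dots,h_K\}$ of $p\oplus H$, use \Cref{prop:projection_and_distance_spherical_parallel} (equivalently $\|P_H(x)\|_2^2 = C^{-1}$) to verify $\mathcal{Q}(x)\in\mathbb{S}^K$, confirm $\mathcal{Q}^{-1}\circ\mathcal{Q}=\mathrm{id}$ by substitution, and conclude the isometry from preservation of ambient inner products via the spherical distance formula. You supply a bit more detail than the paper does --- explicitly checking $\mathcal{Q}\circ\mathcal{Q}^{-1}=\mathrm{id}$ and $\mathcal{Q}^{-1}(y)\in\mathbb{S}^D_H$ to close the bijection argument, and packaging the inner-product preservation as a linear isometry of the ambient subspaces --- but these are elaborations of the same argument, not a different one.
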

\begin{corollary}\label{cor:a}
The dimension of $\mathbb{S}_H^D$ is $\mathrm{dim}(H)$.
\end{corollary}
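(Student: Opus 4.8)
The plan is to read the dimension straight off \Cref{thm:spherical_affine_subspace_isometry}. An isometry between Riemannian manifolds is in particular a diffeomorphism, and the dimension of a smooth manifold is a diffeomorphism invariant; since $\mathbb{S}^{K}$ with $K=\dim(H)$ is by definition a $K$-dimensional manifold, this forces $\dim(\mathbb{S}^D_H)=K=\dim(H)$. So the corollary is essentially a bookkeeping consequence of the theorem just proved.

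Concretely, I would carry out three short steps. First, I would check that $\mathcal{Q}$ is well-defined as a map $\mathbb{S}^D_H\to\mathbb{S}^{K}$: for $x\in\mathbb{S}^D_H$ the projection distance in \Cref{prop:projection_and_distance_spherical_parallel} vanishes, hence $C^2\langle x,p\rangle^2+\sum_{k\in[K]}\langle x,h_k\rangle^2=1$, which gives $\langle\mathcal{Q}(x),\mathcal{Q}(x)\rangle=C^{-1}$, so $\mathcal{Q}(x)\in\mathbb{S}^{K}$. Second, I would note that both $\mathcal{Q}$ and $\mathcal{Q}^{-1}$ are restrictions of fixed linear maps of the ambient coordinate spaces, hence smooth, and that the explicit formulas compose to the identity on either side using $\langle h_i,h_j\rangle=C\delta_{i,j}$, $\langle p,h_k\rangle=0$, and $\langle p,p\rangle=C^{-1}$; therefore $\mathcal{Q}$ is a diffeomorphism. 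Third, I would invoke invariance of dimension under diffeomorphism to conclude $\dim(\mathbb{S}^D_H)=\dim(\mathbb{S}^{K})=K$.

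The one place that deserves a sentence of care is that $K$ is genuinely $\dim(H)$ and not some other integer: the $h_1,\dots,h_K$ are chosen as a complete orthogonal basis of $H$, so there are exactly $\dim(H)$ of them, and together with the nonzero vector $p\perp H$ they form a linearly independent family spanning $p\oplus H$. Beyond this, I do not expect any real obstacle — all of the analytic content (well-definedness, bijectivity, smoothness of the inverse) was already established in proving \Cref{thm:spherical_affine_subspace_isometry}, and the corollary merely records that the codomain of that isometry is $\mathbb{S}^{\dim(H)}$.
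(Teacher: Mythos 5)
Your argument is correct and matches the paper's intent exactly: the corollary is stated as an immediate consequence of \Cref{thm:spherical_affine_subspace_isometry}, and the paper gives no separate proof precisely because, as you say, an isometry is a diffeomorphism and dimension is a diffeomorphism invariant, so $\dim(\mathbb{S}^D_H)=\dim(\mathbb{S}^K)=K=\dim(H)$. The additional checks you list (well-definedness, bijectivity, smoothness) are already part of the theorem's proof and need not be repeated for the corollary.
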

Finally, we can provide an alternative view of spherical affine subspaces based on sliced unitary matrices $\{ G \in \mathbb{R}^{(D+1)\times (K+1)}: G^{\top}G = I_{K+1}\}$.
\begin{claim}\label{cl:spherical_alt}
For any $\mathbb{S}^D_H$, there is a sliced-unitary operator $G: \mathbb{S}^{\mathrm{dim}(H)}\rightarrow \mathbb{S}^D_H$ and vice versa.
\end{claim}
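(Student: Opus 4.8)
The plan is to read $G$ off directly from the inverse isometry $\mathcal{Q}^{-1}$ of \Cref{thm:spherical_affine_subspace_isometry}; the whole argument is then bookkeeping with the curvature-dependent normalizations. For the forward direction, fix $\mathbb{S}^D_H$ with base point $p$ and an orthogonal basis $\{h_k\}_{k\in[K]}$ of $H$, where $K = \mathrm{dim}(H)$, so that $\langle p,p\rangle = C^{-1}$, $\langle h_i,h_j\rangle = C\,\delta_{i,j}$, and $\langle p,h_k\rangle = 0$ (the last because $h_k \in T_p\mathbb{S}^D = p^{\perp}$). Define
\[
G \bydef \big[\, \sqrt{C}\,p \ \big|\ C^{-1/2} h_1 \ \big|\ \cdots \ \big|\ C^{-1/2} h_K \,\big] \in \mathbb{R}^{(D+1)\times(K+1)}.
\]
A one-line computation gives $G^{\top}G = I_{K+1}$, so $G$ is sliced-unitary, and comparing columns shows $Gy = \mathcal{Q}^{-1}(y)$ for every $y \in \mathbb{R}^{K+1}$. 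Hence the restriction of $G$ to $\mathbb{S}^{K} = \mathbb{S}^{\mathrm{dim}(H)}$ is precisely the isometry onto $\mathbb{S}^D_H$ furnished by \Cref{thm:spherical_affine_subspace_isometry}. One can also verify this without invoking $\mathcal{Q}$: if $\langle y,y\rangle = C^{-1}$ then $\langle Gy,Gy\rangle = y^{\top}G^{\top}Gy = C^{-1}$, so $Gy \in \mathbb{S}^D$, and $Gy \in \operatorname{span}(p,h_1,\dots,h_K) = p\oplus H$, so $Gy \in \mathbb{S}^D\cap(p\oplus H) = \mathbb{S}^D_H$; surjectivity onto $\mathbb{S}^D_H$ then follows because $\mathcal{Q}$ is a bijection.

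For the converse, start from any sliced-unitary $G = [g_0\,|\,g_1\,|\,\cdots\,|\,g_K]$ with $G^{\top}G = I_{K+1}$, and set $p \bydef C^{-1/2}g_0$ and $h_k \bydef \sqrt{C}\,g_k$ for $k \in [K]$. Orthonormality of the columns yields $\langle p,p\rangle = C^{-1}$ (so $p \in \mathbb{S}^D$), $\langle p,h_k\rangle = 0$ (so $h_k \in T_p\mathbb{S}^D$), and $\langle h_i,h_j\rangle = C\,\delta_{i,j}$, so $H \bydef \operatorname{span}(h_1,\dots,h_K) \subseteq T_p\mathbb{S}^D$ is a $K$-dimensional subspace with the required orthogonal basis. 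Exactly as before, $G(\mathbb{S}^{K}) = \mathbb{S}^D \cap \operatorname{span}(g_0,\dots,g_K) = \mathbb{S}^D\cap(p\oplus H) = \mathbb{S}^D_H$; the reverse inclusion holds because every point of $\mathbb{S}^D_H$ lies in that column span with squared norm $C^{-1}$, hence is $Gy$ for a (unique) $y$ with $\langle y,y\rangle = C^{-1}$, i.e., $y \in \mathbb{S}^{K}$.

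I do not expect a genuine obstacle here; the only thing to be careful about is keeping the $\sqrt{C}$ factors consistent so that $G^{\top}G = I_{K+1}$ holds exactly and the image lands on the radius-$C^{-1/2}$ sphere rather than the unit sphere. As a free byproduct, since $G$ preserves the ambient dot product on these spheres, it is an isometry for the geodesic distance $d(x,p) = C^{-1/2}\arccos(C\langle x,p\rangle)$, which re-derives the isometry content of \Cref{thm:spherical_affine_subspace_isometry}.
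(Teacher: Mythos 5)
Your construction of $G$ from $(p,H)$ and its inverse are identical to the paper's proof, which uses the same column matrix $G = [C^{1/2}p \mid C^{-1/2}h_1 \mid \cdots \mid C^{-1/2}h_K]$ and the same recovery $p = C^{-1/2}g_0$, $h_k = C^{1/2}g_k$. You supply somewhat more explicit verification (checking $G^{\top}G = I_{K+1}$, tying $G$ to $\mathcal{Q}^{-1}$, and arguing surjectivity), but the substance and route match the paper exactly.
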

\subsection{Minimum distortion spherical subspaces}
To define principal components, we need a specific choice of distortion function $f$; see equation \eqref{eq:distortion_general}. Before presenting our choice, let us discuss previously studied cost functions.
\subsubsection{Review of existing work}
Dai and M{\"u}ller consider an intrinsic PCA for smooth functional data $\mathcal{X}$ on a Riemannian manifold $\mathcal{M}$~\cite{dai2018principal} with the distortion function $f(x) = x^2$, i.e.,
\begin{equation}\label{eq:spca_dai}
    \mathrm{cost}_{\mathrm{Dai.}}(\mathbb{S}^D_{H}| \mathcal{X}) = \mathbb{E}_N \big[ f\big( \min_{y \in \mathbb{S}^D_H}  d(x_n , y ) \big)\big].
\end{equation}
Their algorithm, Riemannian functional principal component analysis (RFPCA), is based on first solving for the base point --- the optimal zero-dimensional affine subspace, i.e., the Fr\'echet mean $p^* = \argmin_{p \in \mathbb{S}^D}\mathbb{E}_N \big[ d(x_n , p )^2\big]$. Then, they project each point to $T_{p^*}\mathcal{M}$ using the logarithmic map. Next, they perform PCA on the resulting tangent vectors to obtain the $K$-dimensional tangent subspace. Finally, they map back the projected tangent vectors to $\mathcal{M}$ (spherical space) using the exponential map. Despite its simplicity, this approach suffers from four shortcomings. (1) There is no closed-form expression for the Fr\'echet mean of spherical data. (2) Theoretical analysis on computation complexity of estimating a Fr\'echet mean is not yet known; and its computation involves an argmin operation which oftentimes cannot be easily differentiated~\cite{lou2020differentiating}. (3) Even for accurately estimated Fr\'echet mean, there is no guarantee that the optimal solution to the problem \eqref{eq:spca_dai} is the Fr\'echet mean. Huckemann and Ziezold~\cite{huckemann2006principal} show that the Fr\'echet mean may not belong to the optimal one-dimensional affine spherical subspace. (4) Even if the Fr\'echet mean happens to be the optimal base point, performing PCA in the tangent space is not the solution to problem \eqref{eq:spca_dai}.

Liu \emph{et al.} propose a spherical matrix factorization problem:
\begin{equation}\label{eq:spherical_PCA_ell2}
    \min_{ \substack{G \in \mathbb{R}^{(D+1) \times (K+1)} \\ \{  y_n \in \mathbb{S}^K \}_{n \in [N]} }} \mathbb{E}_N \big[ \| x_n - G y_n \|_{2}^2 \big]: \  G^{\top} G = I_{K+1},
\end{equation}
where $\mathcal{X} = \{x_n \in \mathbb{R}^{D+1}: n \in [N]\}$ is the measurement set and $y_1, \ldots, y_N \in \mathbb{S}^{K}$ are features in a spherical space with $C=1$~\cite{liu2019spherical}. They propose a proximal algorithm to solve for the affine subspace and features. This formalism is not a spherical PCA because the measurements do not belong to a spherical space. The objective in equation \eqref{eq:spherical_PCA_ell2} aims to best project Euclidean points to a low-dimensional spherical affine subspace \emph{with respect to the squared Euclidean distance} --- refer to \Cref{cl:spherical_alt}. Nevertheless, if we \emph{change their formalism} and let the input space be a spherical space, we arrive at:
\begin{align}
    \mathrm{cost}_{\mathrm{Liu}}(\mathbb{S}^D_{H}| \mathcal{X}) &=  \mathbb{E}_N \big[ -\cos \big( \min_{y_n \in \mathbb{S}^K }   d(x_n, G y_n) \big) \big] \nonumber \\
    &= \mathbb{E}_N \big[ f\circ d(x_n, \mathcal{P}_H(x_n) ) \big], \label{eq:spca_liu}
\end{align}
where $H$ is a tangent subspace that corresponds to $G$ (see \Cref{cl:spherical_alt}) and $\mathcal{P}_H$ is the spherical projection operator. This formalism uses distortion function $f(x)=-\mathrm{cos}(x)$.

Nested spheres~\cite{jung2012analysis} by Jung \emph{et al.} is an alternative procedure for fitting principal nested spheres to iteratively reduce the dimensionality of data. It finds the optimal $(D-1)$-dimensional subsphere $\mathcal{U}_{D-1}$ by minimizing the following cost,
\[
    \mathrm{cost}_{\mathrm{Jung}}(\mathcal{U}_{D-1} | \mathcal{X}) = \mathbb{E}_N[ (d(x_n,p) - r)^2 ],
\]
where $\mathcal{U}_{D-1} = \{ x \in \mathbb{S}^{D}: d(x, p) = r \}$ --- over $p \in \mathbb{S}^D$ and $r \in \mathbb{R}^{+}$. This is a constrained nonlinear optimization problem without closed-form solutions. Once they estimate the optimal $\mathcal{U}_{D-1}$, they map each point to the lower-dimensional spherical space $\mathbb{S}^{D-1}$ --- and repeat this process until they reach the target dimension. The subspheres are not necessarily great spheres, making this decomposition \emph{nongeodesic}.

\subsubsection{A proper cost function for spherical PCA}
In contrast to distortions $f(x)=-\mathrm{cos}(x)$ and $ f(x) = x^2$ used by Liu \emph{et al.}~\cite{liu2019spherical} and Dai and M{\"u}ller~\cite{dai2018principal}, we choose $f(x)=\mathrm{sin}^2(\sqrt{C}x)$. Using \Cref{prop:projection_and_distance_spherical}, we arrive at:
\begin{align}
    \mathrm{cost}(\mathbb{S}^D_{H}| \mathcal{X}) = \mathbb{E}_N \big[ \sum_{k \in [K^{\prime}]} \langle x_n,h^{\prime}_{k}  \rangle^2 \big]\label{eq:spherical_pca_cost_function},
\end{align}
i.e., the average $\ell_2^2$ norm of the projected points in the directions of vectors $h^{\prime}_{1}, \ldots, h^{\prime}_{K^{\prime}} \in T_p \mathbb{S}^D$. The expression \eqref{eq:spherical_pca_cost_function} leads to a tractable constrained optimization problem.
\begin{claim}\label{prob:spherical_pca_problem}
Let $x_1, \ldots, x_N \in \mathbb{S}^D$. The spherical PCA equation \eqref{eq:spherical_pca_cost_function} aims to find $p \in \mathbb{S}^D$ and orthogonal $h^{\prime}_1, \ldots, h^{\prime}_{K^{\prime}} \in T_p \mathbb{S}^D$ that minimize $\sum_{k \in [K^{\prime}]}  {h^{\prime}_{k}}^{\top} C_x h^{\prime}_{k}$ where $C_x = \mathbb{E}_N \big[ x_n x_n^{\top} \big]$.
\end{claim}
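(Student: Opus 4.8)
The plan is to unfold the definition in equation~\eqref{eq:spherical_pca_cost_function} and recognize the resulting objective as a quadratic form in the normal tangent vectors. Recall from the discussion preceding \Cref{prop:projection_and_distance_spherical} that a $K$-dimensional spherical affine subspace $\mathbb{S}^D_{H}$ is specified by a base point $p \in \mathbb{S}^D$ together with a subspace $H \subseteq T_p\mathbb{S}^D = p^{\perp}$, equivalently by the orthogonal complement $H^{\perp} \subseteq p^{\perp}$, which carries a complete orthogonal basis $h^{\prime}_{1},\dots,h^{\prime}_{K^{\prime}}$ (with $K^{\prime} = D-K$ by \Cref{cor:a}) satisfying $\langle h^{\prime}_{i}, h^{\prime}_{j}\rangle = C\,\delta_{i,j}$ and $\langle h^{\prime}_{k}, p\rangle = 0$. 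By \Cref{prop:projection_and_distance_spherical} and the choice $f(x)=\sin^{2}(\sqrt{C}\,x)$, the generic cost~\eqref{eq:distortion_general} collapses to~\eqref{eq:spherical_pca_cost_function}, namely $\mathrm{cost}(\mathbb{S}^D_{H}| \mathcal{X}) = \mathbb{E}_{N}\big[\sum_{k\in[K^{\prime}]}\langle x_n,h^{\prime}_{k}\rangle^{2}\big]$; here $f$ is monotone on the interval $[0, C^{-1/2}\pi/2]$ that actually contains all projection distances, since $\mathbb{S}^D_{H} = \mathbb{S}^D\cap(p\oplus H)$ is centrally symmetric, so this is a legitimate distortion choice.

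First I would exchange the finite sum over $k$ with the empirical mean $\mathbb{E}_{N}[\cdot]$ (both are finite linear operations) and rewrite each summand as $\langle x_n,h^{\prime}_{k}\rangle^{2} = ({h^{\prime}_{k}}^{\top} x_n)(x_n^{\top} h^{\prime}_{k}) = {h^{\prime}_{k}}^{\top}(x_n x_n^{\top}) h^{\prime}_{k}$. Pulling $h^{\prime}_{k}$ out of $\mathbb{E}_{N}[\cdot]$ yields
\[
\mathrm{cost}(\mathbb{S}^D_{H}| \mathcal{X}) \;=\; \sum_{k\in[K^{\prime}]} {h^{\prime}_{k}}^{\top}\,\mathbb{E}_{N}\!\big[x_n x_n^{\top}\big]\,h^{\prime}_{k} \;=\; \sum_{k\in[K^{\prime}]} {h^{\prime}_{k}}^{\top}\,C_x\,h^{\prime}_{k},
\]
with $C_x = \mathbb{E}_{N}[x_n x_n^{\top}]$, which is exactly the objective stated in the claim.

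It then remains to check that the two formulations have the same feasible set: minimizing $\mathrm{cost}(\mathbb{S}^D_{H}| \mathcal{X})$ over all $K$-dimensional spherical affine subspaces is the same problem as minimizing $\sum_{k\in[K^{\prime}]} {h^{\prime}_{k}}^{\top} C_x h^{\prime}_{k}$ over all tuples $(p,h^{\prime}_{1},\dots,h^{\prime}_{K^{\prime}})$ with $p\in\mathbb{S}^D$, $h^{\prime}_{k}\in p^{\perp}$, and $\langle h^{\prime}_{i},h^{\prime}_{j}\rangle = C\delta_{i,j}$. Indeed, the assignment $(p,\{h^{\prime}_{k}\}) \mapsto \mathbb{S}^D_{H}$ with $H^{\perp} = \operatorname{span}\{h^{\prime}_{1},\dots,h^{\prime}_{K^{\prime}}\}$ sweeps out exactly the $K$-dimensional spherical affine subspaces, so the two problems share the same optimal value and corresponding minimizers. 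The only subtlety is that this assignment is many-to-one --- different base points and orthogonal frames can describe the same $\mathbb{S}^D_{H}$, as illustrated in \Cref{fig:affine_subspaces} --- but by \Cref{prop:projection_and_distance_spherical} the cost depends only on the projection distances $\min_{y\in\mathbb{S}^D_{H}} d(x_n,y)$, hence only on the set $\mathbb{S}^D_{H}$, so it is constant along each fiber and the reparametrization is well posed. I do not expect a genuine obstacle here: the substance of the claim is the algebraic identification above, and the remaining work is the routine verification that the two constraint sets coincide.
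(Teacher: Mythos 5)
Your proof is correct and follows essentially the same route as the paper's: unfold the cost via \Cref{prop:projection_and_distance_spherical} and the choice $f(x)=\sin^2(\sqrt{C}\,x)$, then recognize $\mathbb{E}_N\big[\sum_k \langle x_n, h'_k\rangle^2\big]$ as the quadratic form $\sum_k {h'_k}^\top C_x h'_k$ (the paper invokes the cyclic property of the trace, you expand $\langle x_n, h'_k\rangle^2 = {h'_k}^\top x_n x_n^\top h'_k$ directly, but these are the same observation). Your additional remarks on the well-posedness of the reparametrization and the monotonicity of $f$ on the relevant range are sound but not part of the paper's argument.
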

The solution to the problem in \Cref{prob:spherical_pca_problem} is the minimizer of the cost in equation \eqref{eq:spherical_pca_cost_function}, which is a set of orthogonal vectors $h^{\prime}_1, \ldots, h^{\prime}_{K^{\prime}} \in p^{\perp}$ that capture, in quantum physics terminology, the least possible energy of $C_x$. 
\begin{theorem}\label{thm:spherical_pca_problem}
Let $x_1, \ldots, x_N \in \mathbb{S}^D$. Then, an optimal solution for $p$ is the leading eigenvector of $C_x$, and $h^{\prime}_1, \ldots, h^{\prime}_{K^{\prime}}$ (basis vectors of $H^{\perp}$) are the eigenvectors that correspond to the smallest $K^{\prime}$ eigenvalues of the second-moment matrix $C_x$. 
\end{theorem}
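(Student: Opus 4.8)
The plan is to recast the constrained problem of \Cref{prob:spherical_pca_problem} as a Rayleigh--Ritz problem over an orthonormal frame and then invoke the Ky Fan variational principle. A feasible tuple consists of $p$ with $\langle p,p\rangle=C^{-1}$ together with $h^{\prime}_1,\dots,h^{\prime}_{K^{\prime}}\in p^{\perp}$ satisfying $\langle h^{\prime}_i,h^{\prime}_j\rangle=C\delta_{i,j}$. I would reparametrize via $v_0:=\sqrt{C}\,p$ and $v_k:=h^{\prime}_k/\sqrt{C}$ for $k\in[K^{\prime}]$, so that feasibility becomes \emph{exactly} the statement that $\{v_0,v_1,\dots,v_{K^{\prime}}\}$ is an orthonormal set in $\mathbb{R}^{D+1}$, while the objective becomes $\sum_{k\in[K^{\prime}]}{h^{\prime}_k}^{\top}C_x\,h^{\prime}_k=C\sum_{k\in[K^{\prime}]}v_k^{\top}C_x v_k=C\,\mathrm{tr}(\Pi_W C_x)$, where $W=\mathrm{span}(v_1,\dots,v_{K^{\prime}})$ and $\Pi_W$ is the orthogonal projector onto $W$. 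The crucial point is that this quantity depends only on the $K^{\prime}$-dimensional subspace $W$, not on $p$, except through the harmless requirement that there exist a unit vector orthogonal to $W$; since $K^{\prime}=\dim H^{\perp}\le D$, such a vector always exists, so the $p$-constraint is vacuous.

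Next I would minimize $\mathrm{tr}(\Pi_W C_x)$ over $K^{\prime}$-dimensional subspaces $W\subseteq\mathbb{R}^{D+1}$. With the eigendecomposition $C_x=\sum_{i=0}^{D}\lambda_i u_i u_i^{\top}$, where $\lambda_0\ge\cdots\ge\lambda_D\ge0$ and $\{u_i\}$ is orthonormal, the Ky Fan minimum principle (equivalently, Courant--Fischer applied to $-C_x$) gives $\min_{\dim W=K^{\prime}}\mathrm{tr}(\Pi_W C_x)=\sum_{i=D-K^{\prime}+1}^{D}\lambda_i$, attained at $W=\mathrm{span}(u_{D-K^{\prime}+1},\dots,u_D)$. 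Translating back, an optimal choice is $h^{\prime}_k=\sqrt{C}\,u_{D-k+1}$, i.e., the $h^{\prime}_k$ are scalings of the eigenvectors of the $K^{\prime}$ smallest eigenvalues of $C_x$. Existence of a minimizer is not at issue: the feasible set is the compact Stiefel manifold of orthonormal $(K^{\prime}+1)$-frames, and the objective is continuous.

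Finally I would complete the frame: $v_0$ may be taken to be any unit vector orthogonal to $\mathrm{span}(u_{D-K^{\prime}+1},\dots,u_D)$, and $v_0=u_0$ is such a vector (distinct eigenvectors are orthogonal, and even under repeated eigenvalues the eigenbasis can be chosen this way). Hence $p=u_0/\sqrt{C}$ --- the leading eigenvector of $C_x$ rescaled onto $\mathbb{S}^D$ --- is optimal, which is the claim. Since the objective does not depend on $p$, any feasible $p$ is in fact optimal; the leading eigenvector is singled out because it is \emph{also} the minimizer in the $K=0$ case, where the objective reduces to $1-C^2 p^{\top}C_x p$ and thus produces a consistent centroid. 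The step I expect to be the main obstacle is the decoupling/reparametrization itself: one must recognize that the constraint ``$h^{\prime}_k\in p^{\perp}$'' is precisely what makes $\{\sqrt{C}\,p\}\cup\{h^{\prime}_k/\sqrt{C}\}$ an orthonormal frame, and that the cost never involves the $p$-slot of that frame, so $p$ can be chosen last and the problem collapses to textbook trace minimization over subspaces. After that, the Ky Fan principle and the degenerate case $K^{\prime}=D$ (where $v_0$ is forced but still equals $u_0$) are routine.
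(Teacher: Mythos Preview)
Your proposal is correct and takes essentially the same approach as the paper: both reduce the problem to minimizing a sum of Rayleigh quotients over an orthonormal system, invoke the variational characterization of eigenvalues to identify the bottom-$K'$ eigenvectors as the optimal $h'_k$, and then observe that the cost is insensitive to $p$ so that the leading eigenvector may be selected for nestedness/centroid consistency. Your write-up is more explicit (naming Ky~Fan and spelling out the reparametrization to an orthonormal frame), but the underlying argument is identical to the paper's terse lower-bound-plus-attainment proof.
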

\begin{corollary}\label{cor:spherical_pca_problem}
The optimal subspace $H$ is spanned by $K$ leading eigenvectors of $C_x$, discarding the first one.
\end{corollary}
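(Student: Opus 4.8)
The plan is to read the corollary off \Cref{thm:spherical_pca_problem} by a short dimension count in the eigenbasis of $C_x$. First I would record that $H$ and $H^{\perp}$ are orthogonal complements inside the tangent space $T_p \mathbb{S}^D = p^{\perp}$, which has dimension $D$; hence $\dim(H) + \dim(H^{\perp}) = D$, i.e.\ $K + K^{\prime} = D$. Since $C_x = \mathbb{E}_N[x_n x_n^{\top}]$ is symmetric positive semidefinite of size $(D+1) \times (D+1)$, fix an orthonormal eigenbasis $u_0, u_1, \ldots, u_D$ of $\mathbb{R}^{D+1}$ with eigenvalues $\lambda_0 \geq \lambda_1 \geq \cdots \geq \lambda_D \geq 0$ (ties broken arbitrarily but once and for all).

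Next I would invoke \Cref{thm:spherical_pca_problem}: an optimal base point is $p = u_0$, and an optimal orthogonal basis for $H^{\perp}$ is $\{u_{D-K^{\prime}+1}, \ldots, u_D\}$, the eigenvectors of the $K^{\prime}$ smallest eigenvalues of $C_x$. Eigenvectors of a symmetric matrix from distinct eigenspaces are orthogonal, and within a repeated eigenspace we may choose them orthonormal, so every $u_j$ with $j \geq 1$ is orthogonal to $u_0 = p$; in particular the chosen $H^{\perp}$ genuinely lies inside $p^{\perp} = T_p\mathbb{S}^D$, as \Cref{def:affine_subspace_manifold} demands. Writing $p^{\perp} = \mathrm{span}\{u_1, \ldots, u_D\}$ in this basis, the orthogonal complement of $H^{\perp}$ within $p^{\perp}$ is
\[
    H \;=\; \mathrm{span}\{u_1, \ldots, u_{D-K^{\prime}}\} \;=\; \mathrm{span}\{u_1, \ldots, u_K\},
\]
using $D - K^{\prime} = K$. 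These are precisely the $K$ eigenvectors of $C_x$ with the largest eigenvalues after discarding the very first one, $u_0 = p$, which is the assertion of the corollary.

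There is essentially no hard step here; the only point needing a moment's care is that when $C_x$ has repeated eigenvalues the optimal $p, H, H^{\perp}$ are not unique, so one simply pins down a single orthonormal eigenbasis and reads the subspaces off it. I would also remark that this construction makes the properness claims transparent: taking $K = 0$ returns the consistent centroid $p = u_0$ of \Cref{thm:spherical_pca_problem}, and incrementing $K$ appends the next leading eigenvector, so the resulting subspaces $\mathbb{S}^D_{H_0} \subseteq \mathbb{S}^D_{H_1} \subseteq \cdots$ are nested, giving the nested-optimality half of properness.
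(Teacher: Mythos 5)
Your proof is correct and is exactly the dimension-count the paper leaves implicit: the paper states \Cref{cor:spherical_pca_problem} as an immediate consequence of \Cref{thm:spherical_pca_problem} without a separate appendix proof, and your argument — fix an eigenbasis of $C_x$, note $T_p\mathbb{S}^D = p^{\perp} = \mathrm{span}\{u_1,\ldots,u_D\}$ once $p=u_0$, then take the orthogonal complement of $H^{\perp}=\mathrm{span}\{u_{K+1},\ldots,u_D\}$ inside it to get $H=\mathrm{span}\{u_1,\ldots,u_K\}$ — is the intended reading. The only cosmetic difference is that the theorem's proof scales the tangent vectors by $C^{1/2}$, which of course does not change their span, so nothing in your argument is affected.
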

\begin{claim}\label{cl:spherical_is_proper}
    The cost function in equation \eqref{eq:spherical_pca_cost_function} is proper.
\end{claim}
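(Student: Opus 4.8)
The plan is to verify the two defining properties of a proper cost --- consistent centroid and nested optimality --- directly from \Cref{thm:spherical_pca_problem}, \Cref{cor:spherical_pca_problem}, and the identity $\mathbb{S}^D_H = \mathbb{S}^D \cap (p \oplus H)$ established at the start of this section. First I would fix, once and for all, an orthonormal eigenbasis $v_1, \ldots, v_{D+1}$ of the second-moment matrix $C_x = \mathbb{E}_N[x_n x_n^{\top}]$ ordered so the associated eigenvalues are nonincreasing, with an arbitrary but fixed ordering inside any eigenspace of multiplicity greater than one. Rescaling to the sphere, set $p^{\star} = C^{-\frac12} v_1$ and $h_k = C^{\frac12} v_{k+1}$; then $\langle p^{\star}, p^{\star} \rangle = C^{-1}$ and $\langle h_i, h_j \rangle = C\delta_{i,j}$, and the rescaling does not affect the spans used below.

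For the consistent-centroid condition, I would specialize to $K = 0$, so that $H = \{0\}$, $H^{\perp} = T_p\mathbb{S}^D$, and $K^{\prime} = D$. Using $\cos(\sqrt{C}\,d(x_n,y)) = C\langle x_n, y \rangle$ from \Cref{tab:diff_geom_ingredients}, the cost \eqref{eq:spherical_pca_cost_function} evaluated at a single point $y \in \mathbb{S}^D$ equals $\mathbb{E}_N[\sin^2(\sqrt{C}\,d(x_n,y))] = 1 - C^2\, y^{\top} C_x y$, whose minimizer over $\mathbb{S}^D$ is the leading eigendirection of $C_x$, i.e. the pair $\{\pm p^{\star}\}$. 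This is exactly the optimal $0$-dimensional spherical affine subspace delivered by \Cref{thm:spherical_pca_problem}, so the cost induces a well-defined centroid. Crucially, \Cref{thm:spherical_pca_problem} states that the leading eigenvector of $C_x$ is an optimal base point for \emph{every} target dimension, so this centroid can serve as the base point of all the optimal subspaces simultaneously.

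For nested optimality, I would invoke \Cref{cor:spherical_pca_problem} (equivalently \Cref{thm:spherical_pca_problem} with $H^{\perp}$ spanned by the $D-K$ trailing eigenvectors of $C_x$ lying in $p^{\star\perp}$) to obtain, for each $K \in [D]_0$, an optimal pair $(p^{\star}, H_K^{\star})$ with $H_K^{\star} = \mathrm{span}(h_1, \ldots, h_K) = \mathrm{span}(v_2, \ldots, v_{K+1})$. Since the base point is the same across all $K$ and $H_1^{\star} \subseteq H_2^{\star} \subseteq \cdots \subseteq H_D^{\star}$, the identity $\mathbb{S}^D_{H_K^{\star}} = \mathbb{S}^D \cap (p^{\star} \oplus H_K^{\star}) = \mathbb{S}^D \cap \mathrm{span}(v_1, \ldots, v_{K+1})$ shows that the associated spherical affine subspaces form a chain $\{\pm p^{\star}\} = \mathbb{S}^D_{\{0\}} \subseteq \mathbb{S}^D_{H_1^{\star}} \subseteq \cdots \subseteq \mathbb{S}^D_{H_D^{\star}} = \mathbb{S}^D$, which is the nested-optimality requirement.

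The only delicate point, and the step I expect to require the most care, is non-uniqueness of the optimum when $C_x$ has repeated eigenvalues --- including a possible tie between the top eigenvalue and the next ones, which could a priori make the base point of one problem incompatible with the subspace of another. As in ordinary Euclidean PCA, the claim is only that there \emph{exist} optimal subspaces forming a nested family, and this is secured precisely by committing to a single ordered eigenbasis of $C_x$ before reading off $p^{\star}$ and the $H_K^{\star}$; the antipodal ambiguity $p^{\star}$ versus $-p^{\star}$ is harmless since it does not change $\mathrm{span}(p^{\star})$ or the value of the cost. Everything else is bookkeeping on top of \Cref{thm:spherical_pca_problem} and the decomposition $\mathbb{S}^D_H = \mathbb{S}^D \cap (p \oplus H)$, so no further obstacle is anticipated.
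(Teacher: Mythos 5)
Your proof is correct and follows the same route as the paper's (terse) argument: read off the optimal base point and tangent subspaces from the single eigendecomposition of $C_x$ given by \Cref{thm:spherical_pca_problem} and \Cref{cor:spherical_pca_problem}, then observe that the corresponding sets $\mathbb{S}^D_{H_K^\star} = \mathbb{S}^D \cap \mathrm{span}(v_1,\ldots,v_{K+1})$ are trivially nested. You are more careful than the paper in two respects worth keeping: you verify explicitly that the $K=0$ cost reduces to $1 - C^2\,y^\top C_x y$ (the paper only cites \Cref{def:spherical_mean}), and you address the degenerate case of repeated eigenvalues by fixing a single eigenbasis up front, which the paper glosses over when it asserts the nesting ``in general.''
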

\begin{figure*}[t!] \center
\includegraphics[width=.9\textwidth]{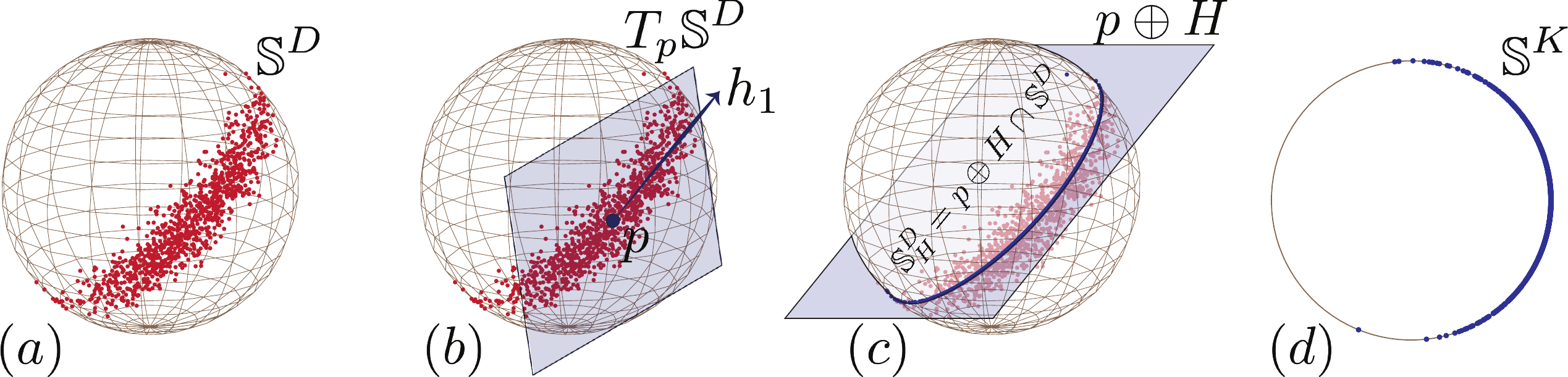}
\caption{$(a)$ A set of data points in $\mathbb{S}^D$, where $D=2$. $(b)$ The best estimate for the base point $p$ and the tangent subspace $H = h_1 \in T_p \mathbb{S}^D$ --- the spherical affine subspace $\mathbb{S}^D_H = (p \oplus H) \cap \mathbb{S}^{D}$. $(c)$ The projection of points onto $\mathbb{S}^D_H$ ($H = h_1$). $(d)$ The low-dimensional features in $\mathbb{S}^K$, where $K = \mathrm{dim}( \mathbb{S}^D_H ) = 1$.}
\label{fig:spherical_pca}
\end{figure*}
The distortion in equation \eqref{eq:spherical_pca_cost_function} implies a closed-form definition for the centroid of spherical data points, i.e., a zero-dimensional affine subspace that best represents the data.

\begin{definition}\label{def:spherical_mean}
    A spherical mean $\mu(\mathcal{X})$ for point set $\mathcal{X}$ is {\bf \textit{any point}} such that $\mathbb{E}_N \big[ f \circ  d(x_n,\mu(\mathcal{X}) )   \big] = \min_{p \in \mathbb{S}^D} \mathbb{E}_N \big[ f \circ d(x_n,p)  \big]$. The solution is a scaled leading eigenvector of $C_x$.
\end{definition}
Interpreting the optimal base point as the spherical mean in \Cref{def:spherical_mean} shows our spherical PCA has \emph{consistent centroid} and \emph{nested optimality}: \emph{optimal spherical affine subspaces of different dimensions form a chain under inclusion}. However, $\mu(\mathcal{X})$ is not unique and only identifies the {\bf direction} of the main component of points; see \Cref{fig:spherical_pca}.
\begin{remark}
     PGA~\cite{fletcher2003statistics} and       RFPCA~\cite{dai2018principal} involve the intensive task of Fréchet mean estimation. This involves iterative techniques like gradient descent or fixed-point iterations on nonlinear optimization objectives. There has been work on the numerical analysis of Fréchet mean computations~\cite{lou2020differentiating}. On the other hand, SPCA~\cite{liu2019spherical} uses alternating linearized minimization to estimate the optimal subspace. In contrast, our method (SFPCA) requires computing the second-moment matrix $C_x$ with a complexity of $O(D^2 N)$ and involves eigendecomposition with a worst-case complexity of $O(D^3)$.
\end{remark}
\section{Hyperbolic PCA} \label{sec:hyperbolic_PCA}
Let us first introduce Lorentzian spaces.
\begin{definition}
The Lorentzian $(D+1)$-space, denoted by $\mathbb{R}^{1,D}$, is a vector space equipped with the Lorentzian inner product 
\[
    \forall x,y \in \mathbb{R}^{1,D}: [x, y] = x^{\top} J_D y, \ J_D = \begin{pmatrix}
	-1 & 0^{\top}\\
	0 & I_D
	\end{pmatrix},
\]
where $I_D$ is the $D \times D$ identity matrix.
\end{definition}

The $D$-dimensional hyperbolic manifold $(\mathbb{H}^{D},g^{\mathbb{H}})$ with curvature $C<0$, where $\mathbb{H}^{D} = \{x \in \mathbb{R}^{D+1}: [x,x] = C^{-1} \ \mbox{and} \ x_0 > 0\}$ and metric $ g_p^{\mathbb{H}}(u,v) = [u,v] $ for $u, v \in T_p \mathbb{H}^D= \{ x \in \mathbb{R}^{D+1}: [x,p] = 0 \} \bydef p^{\perp}$; see~\Cref{tab:diff_geom_ingredients}.
\subsection{Eigenequation in Lorentzian spaces} 
\label{sec:Linear_Algebra_in_a_Lorentzian_Space} 
Like inner product spaces, we define operators in $\mathbb{R}^{1,D}$.
\begin{definition}\label{def:lorentzian_operators}
Let $A \in \mathbb{R}^{(D+1) \times (D+1)}$ be a matrix (operator) in $\mathbb{R}^{1,D}$. We let $A^{[\top]}$ be the $J_D$-adjoint of $A$ if and only if $A^{[\top]} = J_D A^{\top} J_D$. $A^{[-1]}$ is the $J_D$-inverse of $A$ if and only if (iff) $A^{[-1]} J_D A = A J_D A^{[-1]} = J_D$. An invertible matrix $A$ is called $J_D$-unitary iff $A^{\top} J_D A = J_D$; see~\cite{higham2003j} for more detail.
\end{definition}
The Lorentzian space $\mathbb{R}^{1,D}$ is equipped with an indefinite inner product, i.e., $\exists x \in \mathbb{R}^{1,D}: [x,x] < 0$. Therefore, it requires a form of eigenequation defined by its indefinite inner product. For completeness, we propose the following definition of eigenequation in the complex Lorentzian space $\mathbb{C}^{1,D}$.

\begin{definition}\label{def:j_eigenequation}
For $A \in \mathbb{C}^{(D+1) \times (D+1)}$, $v \in \mathbb{C}^{1,D}$ is its $J_D$-eigenvector and $\lambda$ is the corresponding $J_D$-eigenvalue if
\begin{equation}\label{eq:eigen_equation_hyperbolic}
    A J_D v =   \mathrm{sgn}([v^{*},v])\lambda v,\ \mbox{ where } \lambda \in \mathbb{C},
\end{equation}
and $v^{*}$ is the complex conjugate of $v$. The sign of the norm, $\mathrm{sgn}([v^{*},v])$, defines {\bf \emph{positive and negative $\mathbf{\mathit{J_D}}$-eigenvectors}}.
\end{definition}
\Cref{def:j_eigenequation} is subtly different from \emph{hyperbolic eigenequation}~\cite{slapnivcar1999bound} --- a special case of $(A,J)$ eigenvalue decomposition. We prefer \Cref{def:j_eigenequation} as it carries over familiar results from Euclidean to the Lorentzian space.  
\begin{proposition} \label{prop:real_j_eigenvalues}
If $A = A^{[\top]}$, then its $J_D$-eigenvalues are real.
\end{proposition}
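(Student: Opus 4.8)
The plan is to imitate the textbook proof that a self-adjoint operator on an inner product space has real eigenvalues, being careful to carry the metric $J_D$ through every manipulation (the twist being that the indefiniteness forces the relevant quadratic form to involve $J_D$ rather than $A$ alone). First I would restate the hypothesis in a more usable form: since $A^{[\top]} = J_D A^{\top} J_D$ and $J_D^{2} = I$, the assumption $A = A^{[\top]}$ is equivalent to $A^{\top} = J_D A J_D$, i.e.\ to the statement that the real matrix $J_D A$ is symmetric. (Here $A$ is taken to be real; by \Cref{def:j_eigenequation} its $J_D$-eigenvectors may still be complex.)

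Next, take a $J_D$-eigenpair $(\lambda, v)$, so $v \neq 0$ and $A J_D v = \epsilon \lambda v$ with $\epsilon := \mathrm{sgn}([v^{*}, v]) \in \{-1, +1\}$ (the non-isotropic case singled out in \Cref{def:j_eigenequation}). I would then apply the conjugate transpose to the eigenequation: because $A$, $J_D$, and $\epsilon$ are real, this gives $(v^{*})^{\top} J_D A^{\top} = \epsilon \overline{\lambda}\, (v^{*})^{\top}$, and substituting $A^{\top} = J_D A J_D$ together with $J_D^{2} = I$ turns it into $(v^{*})^{\top} A J_D = \epsilon \overline{\lambda}\, (v^{*})^{\top}$. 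Finally I would pair the two identities: multiplying $A J_D v = \epsilon \lambda v$ on the left by $(v^{*})^{\top}$ and multiplying $(v^{*})^{\top} A J_D = \epsilon \overline{\lambda}\, (v^{*})^{\top}$ on the right by $v$, both sides collapse to the same scalar $(v^{*})^{\top} A J_D v$, so $\epsilon \lambda\, \|v\|_{2}^{2} = \epsilon \overline{\lambda}\, \|v\|_{2}^{2}$. Since $v \neq 0$ gives $\|v\|_{2}^{2} > 0$ and $\epsilon \neq 0$, we conclude $\lambda = \overline{\lambda}$, i.e.\ $\lambda \in \mathbb{R}$.

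The only real obstacle is bookkeeping with the indefinite form: unlike the Euclidean setting, $A$ itself need not be symmetric (only $J_D A$ is), so one cannot quote a Hermitian-operator result directly and must instead thread $J_D$ through the conjugation step, which is why the scalar that turns out to be real is $(v^{*})^{\top} A J_D v$ rather than $(v^{*})^{\top} A v$. As a sanity check and an alternative route, I would note in a remark that the change of variables $w = J_D v$ converts the $J_D$-eigenequation into the ordinary eigenequation $(J_D A)\, w = \epsilon \lambda\, w$, whence $\epsilon\lambda$ — and hence $\lambda$ — is real simply because the real symmetric matrix $J_D A$ has real spectrum; this reformulation also shows the conclusion is insensitive to whether $v$ is isotropic.
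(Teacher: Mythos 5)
Your proof is correct and follows the same general plan as the paper's --- conjugate the $J_D$-eigenequation, pair it with the original, and force $\lambda = \overline{\lambda}$ --- but your bookkeeping is tighter. The paper's displayed chain passes through the identity $(v^{*})^{\top} J_D A J_D v = (A J_D v^{*})^{\top} J_D v$; expanding the right side gives $(v^{*})^{\top} J_D A^{\top} J_D v$, so this middle equality requires $A = A^{\top}$, which is stronger than the stated hypothesis $A = A^{[\top]}$ (equivalently $A^{\top} = J_D A J_D$). In the paper's intended application $A = C_x$ is symmetric, so the slip is harmless there, but your route works under the hypothesis as written: you conjugate-transpose the eigenequation to $(v^{*})^{\top} A J_D = \epsilon\overline{\lambda}\,(v^{*})^{\top}$ using only $A^{\top} = J_D A J_D$, then pair against $(v^{*})^{\top} A J_D v$. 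A second small gain is that your anchor scalar is $(v^{*})^{\top} v = \|v\|_{2}^{2} > 0$, automatically nonzero, whereas the paper's anchor $[v^{*},v]$ is nonzero only because non-isotropy is baked into \Cref{def:j_eigenequation}. Your closing remark --- substitute $w = J_D v$ to get the ordinary eigenequation $(J_D A)w = \epsilon\lambda w$ with $J_D A$ real symmetric --- is the most economical version of the whole argument and is essentially the observation the paper packages separately as \Cref{claim:eigenvector_equiv}; it is worth retaining because it makes clear that the $\mathrm{sgn}$ factor is not hiding anything.
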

Let $v$ be a $J_D$-eigenvector of $A$ where $|[v^{*},v]| = 1$. Then, $[v^{*}, A J_D v] = \mathrm{sgn}([v^*,v]) \lambda [v^{*},v] = \lambda$, the $J_D$-eigenvalue of $A$; see \Cref{def:j_eigenequation}. There is a connection between Euclidean and Lorentzian eigenequations. Namely, $AJ_D \in \mathbb{C}^{(D+1) \times (D+1)}$ has eigenvector $v \in \mathbb{C}^{D+1}$ and $[v^*, v] \neq 0$. Then, $\Big( |[v^{*},v]|^{-\frac{1}{2}} v, \mathrm{sgn}([v^*,v])\lambda \Big)$ is a $J_D$-eigenpair of $A$.
\begin{claim}\label{claim:eigenvector_equiv}
    $J_D$-eigenvectors of $A$ are parallel to eigenvectors of $AJ_D$.
\end{claim}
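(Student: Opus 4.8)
The plan is to unwind both definitions and show the two eigenvector conditions are literally the same linear equation up to a nonzero scalar. Suppose first that $v$ is a $J_D$-eigenvector of $A$ with $J_D$-eigenvalue $\lambda$, so that $A J_D v = \mathrm{sgn}([v^{*},v])\,\lambda\, v$ by \Cref{def:j_eigenequation}. Set $\mu = \mathrm{sgn}([v^{*},v])\,\lambda \in \mathbb{C}$; then $(AJ_D) v = \mu v$, which is exactly the statement that $v$ is an eigenvector of the matrix $AJ_D$ (with ordinary Euclidean eigenvalue $\mu$). Conversely, if $w$ is an eigenvector of $AJ_D$, say $(AJ_D) w = \mu w$, and $[w^{*},w] \neq 0$, then following the remark just before the claim I would rescale: put $v = |[w^{*},w]|^{-\frac{1}{2}} w$, which is parallel to $w$ and satisfies $[v^{*},v] = \mathrm{sgn}([w^{*},w])$, hence $\mathrm{sgn}([v^{*},v]) = \mathrm{sgn}([w^{*},w])$. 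Since $AJ_D v = \mu v$ still holds (scaling is linear), we get $AJ_D v = \mathrm{sgn}([v^{*},v])\,\big(\mathrm{sgn}([v^{*},v])\mu\big)\, v$, so $v$ is a $J_D$-eigenvector of $A$ with $J_D$-eigenvalue $\lambda = \mathrm{sgn}([v^{*},v])\,\mu$. Both directions produce vectors that are scalar multiples of one another, which is precisely the asserted parallelism.

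The one genuine subtlety is the null-norm case $[v^{*},v] = 0$, where $\mathrm{sgn}$ is $0$ and \Cref{def:j_eigenequation} degenerates. I would handle this by noting that the claim, read as a correspondence of \emph{directions} of (nontrivially normed) $J_D$-eigenvectors versus eigenvectors of $AJ_D$, simply excludes null eigenvectors — consistent with the hypothesis $[v^{*},v]\neq 0$ in the preceding remark — and I would state the claim in that scope. This is the only place where the Lorentzian and Euclidean pictures could diverge, so it is the step to be careful about; everything else is a one-line substitution. No spectral theory, reality of eigenvalues (\Cref{prop:real_j_eigenvalues}), or properties of $J_D$-adjointness are needed here — the claim is purely about the defining equations, and the proof is essentially the observation that the factor $J_D$ has been absorbed into the operator on the left-hand side.
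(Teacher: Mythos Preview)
Your proposal is correct and matches the paper's approach: the paper proves the forward direction exactly as you do (a $J_D$-eigenvector $v$ satisfies $(AJ_D)v = \mathrm{sgn}([v^*,v])\lambda\, v$, so the Euclidean-normalized $\|v\|_2^{-1}v$ is an eigenvector of $AJ_D$), while the converse you spell out is precisely the content of the remark immediately preceding the claim. Your treatment is slightly more complete in handling both directions and flagging the null-norm degeneracy, but the underlying argument is identical --- unwinding \Cref{def:j_eigenequation} as an ordinary eigenequation for $AJ_D$ with the sign absorbed into the eigenvalue.
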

\Cref{prop:degenerate_j_eigenvectors} shows that the normalization factor is well-defined for full-rank matrices.
\begin{proposition} \label{prop:degenerate_j_eigenvectors}
If $A$ is full-rank, then $\{ v \in \mathbb{R}^{D+1} : A J_D v = \lambda v \ \text{and} \ [v^{*},v] = 0 \} = \emptyset$.
\end{proposition}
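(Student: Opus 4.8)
The plan is to argue by contradiction, pairing the ordinary eigenequation of $A J_D$ against $v^{*}$ under the Lorentzian form --- the device already used for \Cref{prop:real_j_eigenvalues}. Suppose there is a nonzero $v$ with $A J_D v = \lambda v$ and $[v^{*},v]=0$. Since $[\cdot,\cdot]$ is linear in its second argument,
\begin{equation*}
    (v^{*})^{\top} J_D A J_D\, v \;=\; [\,v^{*},\, A J_D v\,] \;=\; \lambda\,[\,v^{*},v\,] \;=\; 0 .
\end{equation*}
(Full-rankness of $A$ makes $A J_D$ invertible, so in fact $\lambda \neq 0$, though this is not needed.) It therefore suffices to show the left-hand side cannot vanish for $v \neq 0$.

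The heart of the matter is the positivity of $B \bydef J_D A J_D$. For this the argument needs $A$ to be \emph{definite}, not merely invertible --- a hypothesis met in our setting, where the operator to which \Cref{prop:degenerate_j_eigenvectors} is applied is a data second-moment matrix (cf.\ \Cref{prob:spherical_pca_problem}), hence symmetric positive semidefinite, and full-rankness promotes it to positive definite. Given that, $B = J_D^{\top} A J_D$ is a congruence of $A$ by the invertible matrix $J_D$, hence itself symmetric positive definite. For a real $v$, as in the statement, $v^{\top} B v = (J_D v)^{\top} A (J_D v) > 0$ because $J_D v \neq 0$; for a general $v = a + \mathrm{i}b$ with $a,b$ real, symmetry of $B$ cancels the cross terms and $(v^{*})^{\top} B v = a^{\top} B a + b^{\top} B b > 0$ unless $v = 0$. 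Either way this contradicts the displayed identity, so the set is empty. Equivalently, every eigenvector of $A J_D$ satisfies $[v^{*},v] \neq 0$, which is precisely what makes the normalization $|[v^{*},v]|^{-\frac{1}{2}}$ in \Cref{claim:eigenvector_equiv} always legitimate.

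I expect the definiteness step to be the only real obstacle: everything else is the bilinear bookkeeping set up in \Cref{sec:Linear_Algebra_in_a_Lorentzian_Space}, and it is worth stating explicitly that it is definiteness, not mere invertibility, that is doing the work --- for an indefinite invertible $A$ the congruent matrix $J_D A J_D$ is indefinite too and $A J_D$ can carry lightlike eigenvectors. A slightly more self-contained route, starting only from $A = A^{[\top]}$: then $J_D A J_D = A^{\top}$ and \Cref{prop:real_j_eigenvalues} lets one take $\lambda$ and $v$ real, so the display collapses to $v^{\top} A v = 0$, and a definiteness hypothesis on $A$ again closes the argument.
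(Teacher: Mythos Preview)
Your argument is correct and follows the same route as the paper: both assume such a $v$ exists, compute $(v^{*})^{\top} J_D A J_D v = \lambda\,[v^{*},v] = 0$, and declare a contradiction. Your treatment is in fact more careful than the paper's own proof --- you correctly isolate that \emph{definiteness} of $A$ (available here because $A$ is a full-rank second-moment matrix), not mere invertibility, is what forces $(v^{*})^{\top} J_D A J_D v \neq 0$, whereas the paper simply asserts the final step ``contradicts with the assumption that $A$ is full-rank'' without spelling this out.
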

Our algorithm uses the connection between Euclidean and $J_D$-eigenequations. We extend the notion of diagonalizability to derive the optimal affine subspace; see \Cref{prop:j_diagonalizable_matrices}.
\begin{definition}\label{def:hyperbolic_eigenequation}
$A \in \mathbb{C}^{(D+1) \times (D+1)}$ is $J_D$-diagonalizable if and only if there is a $J_D$-invertible $V \in \mathbb{C}^{(D+1) \times (D+1)}$ such that $A J_D V = V J_D \Lambda$, where $\Lambda$ is a diagonal matrix.
\end{definition}

\subsection{Hyperbolic affine subspace and the projection operator}
Let $p \in \mathbb{H}^D$ and $H^{\perp}$ be a $K^{\prime}$-dimensional subspace of $T_p \mathbb{H}^D = p^{\perp}$. Following \Cref{def:affine_subspace_manifold} and \Cref{tab:diff_geom_ingredients}, we arrive at the following definition for the hyperbolic affine subspace:
\begin{equation}\label{eq:hyperbolic_subspace}
    \mathbb{H}^{D}_{H} = \{ x \in \mathbb{H}^D: [x, h^{\prime} ] = 0, \ \forall h^{\prime} \in H^{\perp} \},
\end{equation}
where $H^{\perp}$ is the orthogonal complement of $H$, i.e., $\mathbb{H}^{D}_{H} = \mathbb{H}^D \cap (p \oplus H)$. This also coincides with the metric completion of exponential barycentric hyperbolic subspace \cite{pennec2018barycentric}.
\begin{claim}\label{claim:geodesic_subhyperbolic}
    The hyperbolic subspace is a geodesic submanifold.
\end{claim}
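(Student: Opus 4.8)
The plan is to exploit the fact that $\mathbb{H}^D_H = \mathbb{H}^D \cap W$ where $W \bydef p \oplus H$ is a \emph{linear} subspace of $\mathbb{R}^{1,D}$, together with the closed form of the hyperbolic exponential map in \Cref{tab:diff_geom_ingredients}, which shows that $\mathrm{exp}_q(v)$ is always a linear combination of $q$ and $v$. The upshot will be that every ambient geodesic issuing from a point of $\mathbb{H}^D_H$ in a tangent direction of $\mathbb{H}^D_H$ stays inside $\mathbb{H}^D_H$, which is the defining property of a totally geodesic submanifold.

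First I would record the basic structural facts. The set $\mathbb{H}^D_H$ is nonempty, since $p \in W \cap \mathbb{H}^D$. It is an embedded submanifold of $\mathbb{H}^D$: $W$ is a linear subspace, so its tangent space at every point is $W$ itself, while $T_q \mathbb{H}^D = q^{\perp}$; since each $q \in \mathbb{H}^D_H$ is timelike, the intersection is clean and $T_q \mathbb{H}^D_H = q^{\perp} \cap W$. The induced metric is genuinely Riemannian: $g^{\mathbb{H}}_q = [\cdot,\cdot]$ is already positive definite on $q^{\perp}$ (because $q$ is timelike), hence on the subspace $q^{\perp} \cap W$. So $\big(\mathbb{H}^D_H, g^{\mathbb{H}}|_{\mathbb{H}^D_H}\big)$ is a Riemannian manifold and geodesics on it with prescribed initial point and velocity exist and are unique.

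The core step: fix $q \in \mathbb{H}^D_H$ and $v \in T_q \mathbb{H}^D_H = q^{\perp} \cap W$ (with $v \neq 0$; the case $v=0$ is trivial, and note $\|v\| = \sqrt{[v,v]}>0$ since $q^{\perp}$ is spacelike), and let $\gamma(t) = \mathrm{exp}_q(tv)$ be the ambient geodesic of $\mathbb{H}^D$ with these initial conditions. By \Cref{tab:diff_geom_ingredients},
\[
\gamma(t) = \cosh\!\big(\sqrt{-C}\,t\|v\|\big)\, q \;+\; \sinh\!\big(\sqrt{-C}\,t\|v\|\big)\,\frac{v}{\sqrt{-C}\,\|v\|},
\]
a linear combination of $q$ and $v$. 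Since $q \in W$, $v \in W$, and $W$ is a linear subspace, $\gamma(t) \in W$ for all $t$; and $\gamma(t) \in \mathbb{H}^D$ for all $t$ because $\gamma$ is a curve in $\mathbb{H}^D$. Hence $\gamma(t) \in \mathbb{H}^D \cap W = \mathbb{H}^D_H$ for all $t$.

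Finally I would conclude. An ambient geodesic that lies entirely inside a submanifold is automatically a geodesic of that submanifold with the induced metric: it has constant speed and is locally length-minimizing among all ambient curves, a fortiori among curves constrained to the submanifold. So $\gamma$ is a geodesic of $\mathbb{H}^D_H$. Now let $\sigma$ be an arbitrary geodesic of $\mathbb{H}^D_H$, say $\sigma(0)=q$ and $\sigma'(0)=v$; by uniqueness of geodesics with given initial point and velocity, $\sigma$ coincides with $\gamma$ on the domain of $\sigma$, and $\gamma$ is a geodesic of $\mathbb{H}^D$. Thus every geodesic of $\mathbb{H}^D_H$ is a geodesic of $\mathbb{H}^D$, i.e., $\mathbb{H}^D_H$ is a geodesic submanifold. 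I expect the only real subtlety to be the first paragraph's submanifold/transversality bookkeeping; the geometric heart of the argument is immediate from the explicit exponential map. An alternative route, paralleling \Cref{claim:geodesic_submanifold_spherical} and \Cref{thm:spherical_affine_subspace_isometry}, is to construct the hyperbolic analogue of that isometry, exhibiting $\mathbb{H}^D_H$ as an isometric copy of $\mathbb{H}^{\dim H}$ sitting linearly inside $\mathbb{R}^{1,D}$, from which the claim is read off directly; but this requires essentially the same computation, so the direct argument above is preferable.
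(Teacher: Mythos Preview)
Your proof is correct and follows essentially the same route as the paper's: both observe that an ambient hyperbolic geodesic lies in the linear span of its defining data, and since $W = p \oplus H$ is a linear subspace containing that data, the geodesic remains in $\mathbb{H}^D \cap W = \mathbb{H}^D_H$. The paper phrases this via the two-point geodesic $\gamma_{x,y} \subseteq \mathrm{span}\{x,y\}$ rather than your exponential-map/initial-velocity formulation and omits your submanifold and uniqueness bookkeeping, but the mathematical core is identical.
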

\Cref{lem:eigenvalue_norms} shows that there is a complete set of orthogonal tangents $h_1^{\prime}, \ldots, h_{K^{\prime}}^{\prime}$ where $[h^{\prime}_{i}, h^{\prime}_{j}] = -C \delta_{i,j}$ and span $H^{\perp}$. In \Cref{prop:projection_and_distance_hyperbolic}, we provide a closed-form expression for the projection distance onto $\mathbb{H}^D_H$ in terms of the basis of $H^{\perp}$.
\begin{proposition}\label{prop:projection_and_distance_hyperbolic}
For any $\mathbb{H}_H^D$ and $x \in \mathbb{H}^D$, we have
\[
    \min_{y \in \mathbb{H}_H^D} d(x,y) = |C|^{-\frac{1}{2}} \mathrm{acosh} \Big( \sqrt{1 + \sum_{k \in [K^{\prime}]} [x, h^{\prime}_k]^2} \Big),
\]
where $\{ h^{\prime}_{k^{\prime}}\}_{k^{\prime} \in [K^{\prime}]}$ are complete orthogonal basis of $H^{\perp}$.
\end{proposition}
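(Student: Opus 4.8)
The plan is to mirror the proof of the spherical \Cref{prop:projection_and_distance_spherical}, replacing the Euclidean Cauchy--Schwarz step by its Lorentzian analogue. By the distance formula in \Cref{tab:diff_geom_ingredients} we have $d(x,y)=|C|^{-\frac12}\operatorname{acosh}\!\big(C[x,y]\big)$ for $x,y\in\mathbb{H}^D$, and $\operatorname{acosh}$ is increasing on $[1,\infty)$; moreover $C[x,y]\ge 1$ for any two points of $\mathbb{H}^D$. Hence $\min_{y\in\mathbb{H}^D_H}d(x,y)$ is attained at the same $y$ that minimizes $C[x,y]$ (equivalently, since $C<0$, that maximizes $[x,y]$), so it suffices to show $\min_{y\in\mathbb{H}^D_H}C[x,y]=\sqrt{1+\sum_k [x,h'_k]^2}$.

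First I would set up an orthogonal decomposition relative to the subspace $p\oplus H$. From \eqref{eq:hyperbolic_subspace}, $\mathbb{H}^D_H=\mathbb{H}^D\cap(p\oplus H)$, and one checks that the Lorentzian orthogonal complement of $p\oplus H$ equals $H^\perp$: a vector is $[\cdot,\cdot]$-orthogonal to $p\oplus H$ iff it lies in $p^\perp=T_p\mathbb{H}^D$ and in $H^\perp$, and conversely. Since $p\oplus H$ contains the timelike direction $p$ while $H$ is spacelike (the induced metric $g_p^{\mathbb{H}}$ on $T_p\mathbb{H}^D$ is positive definite), $[\cdot,\cdot]$ restricted to $p\oplus H$ is non-degenerate of signature $(1,K)$, so there is a unique splitting $x=x_\parallel+x_\perp$ with $x_\parallel\in p\oplus H$ and $x_\perp\in H^\perp$. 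Using the orthogonal basis $h'_1,\dots,h'_{K'}$ with $[h'_i,h'_j]=-C\,\delta_{ij}$ (\Cref{lem:eigenvalue_norms}), one gets $x_\perp=\sum_k \frac{[x,h'_k]}{-C}h'_k$, hence $[x_\perp,x_\perp]=\frac{1}{|C|}\sum_k [x,h'_k]^2=:\frac{S}{|C|}$. Since $[x,x]=C^{-1}$, this yields $[x_\parallel,x_\parallel]=C^{-1}-\frac{S}{|C|}=-\frac{1+S}{|C|}<0$, so $x_\parallel$ is timelike; and $[x_\parallel,p]=[x,p]\le C^{-1}<0$ (because $C[x,p]\ge 1$ and $C<0$) shows $x_\parallel$ lies in the same time cone as $p$, i.e.\ is future-pointing.

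Next, for any $y\in\mathbb{H}^D_H\subseteq p\oplus H$ we have $[x,y]=[x_\parallel,y]$ since $x_\perp\perp(p\oplus H)$, and both $x_\parallel$ and $y$ are future-pointing timelike vectors of the Lorentzian space $(p\oplus H,[\cdot,\cdot])$. The reverse Cauchy--Schwarz inequality then gives $[x_\parallel,y]\le-\sqrt{[x_\parallel,x_\parallel]\,[y,y]}=-\sqrt{\frac{1+S}{|C|}\cdot\frac{1}{|C|}}=-\frac{\sqrt{1+S}}{|C|}$, with equality iff $y\parallel x_\parallel$. (I would include the one-line justification: write $y=\alpha x_\parallel+w$ with $w\perp x_\parallel$; then $w$ is spacelike so $[w,w]\ge 0$, and $\alpha>0$ by the time-orientation, whence $[x_\parallel,y]^2=[x_\parallel,x_\parallel]\big([y,y]-[w,w]\big)\ge [x_\parallel,x_\parallel]\,[y,y]$ and $[x_\parallel,y]<0$.) Therefore $C[x,y]\ge\sqrt{1+S}$ for every $y\in\mathbb{H}^D_H$. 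Conversely, $y^\star:=x_\parallel/\sqrt{1+S}$ lies in $p\oplus H$, satisfies $[y^\star,y^\star]=\frac{1}{1+S}\big(-\frac{1+S}{|C|}\big)=C^{-1}$, and is future-pointing, so $y^\star\in\mathbb{H}^D_H$ and attains equality. Consequently $\min_{y\in\mathbb{H}^D_H}d(x,y)=|C|^{-\frac12}\operatorname{acosh}\!\big(\sqrt{1+S}\big)=|C|^{-\frac12}\operatorname{acosh}\!\Big(\sqrt{1+\sum_k [x,h'_k]^2}\Big)$.

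The step I expect to be the main obstacle is the Lorentzian sign bookkeeping: one must verify that $p\oplus H$ is non-degenerate, that the ``parallel'' component $x_\parallel$ is genuinely timelike and future-pointing, and that the Cauchy--Schwarz estimate is used in its \emph{reversed} form with the correct sign (which requires both vectors to lie in the same time cone). Without these facts neither the orthogonal decomposition nor the inequality is valid; once they are in place, the remainder is a routine Lorentzian transcription of the spherical argument. An alternative, more computational route --- parametrizing $y$ by its components in an adapted $J_D$-orthonormal frame and optimizing directly --- also works, but the Cauchy--Schwarz argument above is cleaner and makes the attainment of the minimum explicit.
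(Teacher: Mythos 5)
Your argument is correct and takes a genuinely different route from the paper's. The paper sets up the Lagrangian $\mathcal{L}(y,\gamma,\Lambda)=[x,y]+\gamma([y,y]-C^{-1})+\sum_k\lambda_k[y,h'_k]$, reads off that any critical point has the form $\mathcal{P}_H(x)=\beta\,P_H(x)$ with $P_H(x)=x+C^{-1}\sum_k[x,h'_k]h'_k$, enforces the normalization $[\mathcal{P}_H(x),\mathcal{P}_H(x)]=C^{-1}$, and evaluates the distance. Your $x_\parallel$ is exactly that $P_H(x)$, and your $y^\star=x_\parallel/\sqrt{1+S}$ is the paper's $\mathcal{P}_H(x)$, so the two proofs identify the same minimizer; the difference is how optimality is certified. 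You replace the critical-point computation by an orthogonal splitting $x=x_\parallel+x_\perp$ across the non-degenerate Lorentzian subspace $p\oplus H$ followed by the reverse Cauchy--Schwarz inequality for same-cone timelike vectors. This buys you two things the paper's sketch leaves implicit: (i) a proof that the critical point is in fact a \emph{global} minimizer (the Lagrangian alone only gives stationarity), and (ii) a transparent handling of the sign issues --- that $x_\parallel$ is timelike and future-pointing, so the correct branch $\beta>0$ of the hyperboloid is selected. The price is the extra Lorentzian bookkeeping (non-degeneracy of $p\oplus H$, spacelikeness of $w\perp x_\parallel$, time-orientation), all of which you correctly flag and justify. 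Both arguments reach the same closed form, but yours is the more complete optimality proof.
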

The projection distance monotonically increases with the norm of its residual of $x$ onto $H^{\perp}$, i.e., $\sum_{k \in [K^{\prime}]} [x,h^{\prime}_k ]^2$. \Cref{prop:projection_and_distance_hyperbolic} asks for the orthogonal basis of $H^{\perp}$---commonly, a high-dimensional space. We can use the basis of $H$ to compute the projection distance.
\begin{proposition}\label{prop:projection_and_distance_hyperbolic_parallel}
For any $\mathbb{H}_H^D$ and $x \in \mathbb{H}^D$, we have
\[
    \min_{y \in \mathbb{H}_H^D} d(x,y) = |C|^{-\frac{1}{2}} \mathrm{acosh} \Big( \sqrt{C^2 [x,p]^2 - \sum_{k \in [K]} [x,h_k]^2} \Big),
\]
where $\{ h_k\}_{k \in [K]}$ are complete orthonormal basis of $H$.
\end{proposition}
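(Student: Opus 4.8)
The plan is to deduce \Cref{prop:projection_and_distance_hyperbolic_parallel} directly from \Cref{prop:projection_and_distance_hyperbolic} by establishing, for every $x \in \mathbb{H}^D$, the algebraic identity
\begin{equation*}
1 + \sum_{k \in [K^{\prime}]} [x, h^{\prime}_k]^2 \;=\; C^2 [x,p]^2 - \sum_{k \in [K]} [x, h_k]^2 ,
\end{equation*}
after which one substitutes the right-hand side into the formula of \Cref{prop:projection_and_distance_hyperbolic} (whose $|C|^{-\frac12}$ prefactor is untouched) to obtain the claim. Here $\{h^{\prime}_{k^{\prime}}\}_{k^{\prime}\in[K^{\prime}]}$ is the curvature-normalized orthogonal basis of $H^{\perp}$ supplied by \Cref{lem:eigenvalue_norms}, so $[h^{\prime}_i,h^{\prime}_j] = -C\delta_{i,j}$, and $\{h_k\}_{k\in[K]}$ is the analogously normalized basis of $H$, so $[h_i,h_j] = -C\delta_{i,j}$ --- this is the sense of ``orthonormal'' intended, consistent with the normalization already used for $H^{\perp}$. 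Note $[h^{\prime}_i,h^{\prime}_i] = -C > 0$ and $[h_i,h_i] = -C > 0$ since $C<0$.

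The identity will come from a Parseval expansion in an orthogonal basis of the ambient Lorentzian space adapted to the triple $(p, H, H^{\perp})$. Since $[p,p] = C^{-1} < 0$, the vector $p$ is timelike; the hyperplane $p^{\perp} = T_p\mathbb{H}^D$ is spacelike, i.e.\ the restriction of $[\cdot,\cdot]$ to $p^{\perp}$ is positive definite; and $\mathbb{R}^{1,D} = \mathrm{span}(p) \oplus p^{\perp}$ is a $[\cdot,\cdot]$-orthogonal direct sum. Inside $p^{\perp}$, the subspace $H$ and its orthogonal complement $H^{\perp}$ (taken within $T_p\mathbb{H}^D$, per \Cref{def:affine_subspace_manifold}, so that $\dim H + \dim H^{\perp} = D$) furnish a further orthogonal decomposition. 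Hence $\{p\} \cup \{h_k\}_{k\in[K]} \cup \{h^{\prime}_{k^{\prime}}\}_{k^{\prime}\in[K^{\prime}]}$ is a $[\cdot,\cdot]$-orthogonal basis of $\mathbb{R}^{1,D}$, and for every $x \in \mathbb{R}^{1,D}$ one has the Parseval-type identity
\begin{equation*}
[x,x] \;=\; \frac{[x,p]^2}{[p,p]} \;+\; \sum_{k\in[K]} \frac{[x,h_k]^2}{[h_k,h_k]} \;+\; \sum_{k^{\prime}\in[K^{\prime}]} \frac{[x,h^{\prime}_{k^{\prime}}]^2}{[h^{\prime}_{k^{\prime}},h^{\prime}_{k^{\prime}}]} .
\end{equation*}
Substituting $[p,p]=C^{-1}$ and $[h_k,h_k]=[h^{\prime}_{k^{\prime}},h^{\prime}_{k^{\prime}}]=-C$, using $[x,x]=C^{-1}$ for $x \in \mathbb{H}^D$, multiplying through by $C$, and rearranging gives exactly the displayed identity; plugging it into \Cref{prop:projection_and_distance_hyperbolic} finishes the proof. (One checks incidentally that $C^2[x,p]^2 - \sum_k[x,h_k]^2 = 1 + \sum_{k^{\prime}}[x,h^{\prime}_{k^{\prime}}]^2 \ge 1$, so the $\mathrm{acosh}$ is well defined.)

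The main obstacle is the sign bookkeeping forced by the indefinite metric: one must justify that $p^{\perp}$ is positive definite, so that \emph{exactly one} term of the Parseval sum --- the $p$-term --- is negative; this is precisely what produces the $+\,C^2[x,p]^2$ and the $-\sum_k[x,h_k]^2$ with the signs stated. One must also pin down the normalization implicit in ``orthonormal basis of $H$'', namely $[h_i,h_j]=-C\delta_{i,j}$ matching the convention used for $H^{\perp}$, since a different scaling would alter the coefficient in front of $\sum_k[x,h_k]^2$. The remaining pieces --- mutual $[\cdot,\cdot]$-orthogonality of the three blocks and the elementary algebra --- are routine.
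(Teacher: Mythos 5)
Your proof is correct and rests on the same core ingredient as the paper's: the $J$-orthogonal decomposition $\mathbb{R}^{1,D} = \mathrm{span}(p) \oplus H \oplus H^{\perp}$ with the $J$-orthogonal basis $\{p, h_1,\ldots,h_K, h^{\prime}_1,\ldots,h^{\prime}_{K^{\prime}}\}$. The paper re-expresses $P_H(x)$ in this basis (showing the $H^{\perp}$-coefficients vanish) and then computes $\|P_H(x)\|$, whereas you apply Parseval to $x$ itself together with $[x,x]=C^{-1}$ to obtain the identity $1+\sum_{k^{\prime}}[x,h^{\prime}_{k^{\prime}}]^2 = C^2[x,p]^2-\sum_k[x,h_k]^2$ and then invoke \Cref{prop:projection_and_distance_hyperbolic} as a black box --- a marginally more streamlined repackaging of the same linear algebra, with the sign bookkeeping ($[h_i,h_j]=-C\,\delta_{ij}$ and positive-definiteness of $[\cdot,\cdot]$ on $p^{\perp}$) correctly tracked and consistent with the paper's conventions.
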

We represent points in $\mathbb{H}^D_H$ as a linear combination of the base point and tangent vectors. Given these $K+1$ vectors, we can find a low-dimensional representation for points in $\mathbb{H}^D_H$---reducing the dimensionality of hyperbolic data points.
\begin{theorem}\label{theorem:hyperbolic_affine_subspace_isometry}
The isometry $\mathcal{Q}: \mathbb{H}_H^D \rightarrow \mathbb{H}^{K}$ and its inverse are 
\[
\mathcal{Q}(x) = \alpha \begin{bmatrix}
    C[x,p] \\
    [ x,h_{1}] \\
    \vdots \\
    [ x,h_{K} ]
\end{bmatrix}, \ \mathcal{Q}^{-1} ( y ) = \alpha (-y_0 C p + \sum_{k = 1}^{K} y_k h_{k}),
\]
where $\alpha = |C|^{-\frac{1}{2}}$ and $H$ has complete orthogonal basis vectors $\{h_k\}_{k \in [K]}$. Both $\mathbb{H}_H^D$ and $\mathbb{H}^{K}$ have curvature $C < 0$.
\end{theorem}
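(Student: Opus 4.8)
\emph{Proof plan.} The idea is to coordinatize $\mathbb{H}^D_H$ by the frame $\{p,h_1,\dots,h_K\}$, which turns $\mathcal{Q}$ into an explicit linear map; the claimed inverse, the fact that the image is $\mathbb{H}^K$, and distance-preservation then all reduce to short computations, and the only step that is not pure linear algebra is checking that $\mathcal{Q}$ stays on the forward hyperboloid sheet. Throughout I write $\alpha=|C|^{-1/2}$ and use $C<0$, so $-C=|C|$ and $C^{-1}=-\alpha^2$. By \eqref{eq:hyperbolic_subspace} we have $\mathbb{H}^D_H=\mathbb{H}^D\cap(p\oplus H)$, so every $x\in\mathbb{H}^D_H$ has a unique expansion $x=ap+\sum_{k\in[K]}b_k h_k$ with $a,b_k\in\mathbb{R}$, where $\{h_k\}_{k\in[K]}$ is the orthogonal basis of $H\subseteq p^{\perp}$ with $[h_i,h_j]=-C\delta_{i,j}$ (cf.\ \Cref{lem:eigenvalue_norms}; this normalization is what is implicitly used in the statement), and where $[p,p]=C^{-1}$, $[p,h_k]=0$. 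Expanding the Lorentzian form gives $[x,p]=aC^{-1}$ and $[x,h_j]=-Cb_j$, hence in these coordinates $\mathcal{Q}(x)=(\alpha a,\ \alpha|C|b_1,\dots,\alpha|C|b_K)$. This is a linear isomorphism of $p\oplus H$ onto $\mathbb{R}^{K+1}$; solving $a=\alpha^{-1}y_0$, $b_k=\alpha^{-1}|C|^{-1}y_k$ and simplifying with $\alpha|C|=\alpha^{-1}$ returns exactly $x=\alpha\bigl(-y_0Cp+\sum_k y_k h_k\bigr)$, i.e.\ the stated $\mathcal{Q}^{-1}$.

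Next I would show the image is $\mathbb{H}^K$. Writing $[x,x]=C^{-1}$ in the coordinates gives $a^2C^{-1}-C\sum_k b_k^2=C^{-1}$, i.e.\ $a^2-C^2\sum_k b_k^2=1$ --- equivalently the identity $C^2[x,p]^2-\sum_k[x,h_k]^2=1$ that forces the projection distance in \Cref{prop:projection_and_distance_hyperbolic_parallel} to vanish on $\mathbb{H}^D_H$. Hence $[\mathcal{Q}(x),\mathcal{Q}(x)]_{J_K}=\alpha^2\bigl(-a^2+|C|^2\sum_k b_k^2\bigr)=-\alpha^2=C^{-1}$, so $\mathcal{Q}(x)$ lies on the curvature-$C$ hyperboloid. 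For the sheet, $(\mathcal{Q}(x))_0=\alpha C[x,p]$; since $x,p$ are forward-pointing timelike vectors with $[x,x]=[p,p]=C^{-1}<0$ and $x_0,p_0>0$, the reverse Cauchy--Schwarz inequality yields $C[x,p]\ge 1>0$ (the very condition under which $\mathrm{acosh}$ in \Cref{tab:diff_geom_ingredients} is defined), so $(\mathcal{Q}(x))_0>0$ and $\mathcal{Q}(x)\in\mathbb{H}^K$. Running the mirror-image computation on $\mathcal{Q}^{-1}(y)$ for $y\in\mathbb{H}^K$ shows $\mathcal{Q}^{-1}(y)\in p\oplus H$, $[\mathcal{Q}^{-1}(y),\mathcal{Q}^{-1}(y)]_{J_D}=C^{-1}$, and its zeroth coordinate is positive, so $\mathcal{Q}^{-1}(y)\in\mathbb{H}^D_H$; combined with the above, $\mathcal{Q}$ is a bijection $\mathbb{H}^D_H\to\mathbb{H}^K$.

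For the isometry, note that $\mathcal{Q}$ is the restriction to $\mathbb{H}^D_H$ of the linear map $L:p\oplus H\to\mathbb{R}^{1,K}$, $L(z)=\alpha(C[z,p],[z,h_1],\dots,[z,h_K])$, and the polarized version of the preceding computation gives $[L(z),L(w)]_{J_K}=aa'C^{-1}-C\sum_k b_kb'_k=[z,w]_{J_D}$ for $z=ap+\sum_k b_kh_k$, $w=a'p+\sum_k b'_kh_k$ (using $\alpha^2=|C|^{-1}$ and $\alpha^2|C|^2=-C$). Thus $L$ is a linear isomorphism intertwining $[\cdot,\cdot]_{J_D}|_{p\oplus H}$ with $[\cdot,\cdot]_{J_K}$ and carrying $\mathbb{H}^D_H$ onto $\mathbb{H}^K$; since the Riemannian metrics on these hyperboloids are the restrictions of the ambient Lorentzian forms and $dL_x=L$ maps $T_x\mathbb{H}^D_H=x^{\perp}\cap(p\oplus H)$ onto $T_{L(x)}\mathbb{H}^K$ preserving those forms, $\mathcal{Q}$ is a Riemannian isometry. (Equivalently, $\mathcal{Q}$ preserves $d(u,v)=|C|^{-1/2}\mathrm{acosh}(C[u,v])$ and one invokes Myers--Steenrod.) That $\mathbb{H}^K$ has curvature $C$ is then automatic, as is $\dim\mathbb{H}^D_H=\dim H=K$, since $\mathcal{Q}$ is a diffeomorphism onto $\mathbb{H}^K$.

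The main obstacle is essentially bookkeeping: every identity is sign-sensitive because $C<0$, so one must keep $-C$, $|C|$, $C^{-1}$, and $\alpha=|C|^{-1/2}$ straight. The single genuinely geometric point --- beyond linear algebra on $p\oplus H$ --- is the forward-sheet condition, namely that $\mathcal{Q}$ lands in $\mathbb{H}^K$ (and $\mathcal{Q}^{-1}$ in $\mathbb{H}^D_H$) rather than in the mirror component; I would secure this via the reverse Cauchy--Schwarz inequality for forward timelike vectors, or, avoiding it, from the connectedness of $\mathbb{H}^D_H$ together with the sign of $(\mathcal{Q}(x))_0$ at $x=p$.
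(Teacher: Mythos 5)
Your proof is correct and, at its core, takes the same route as the paper's: express each $x\in\mathbb{H}^D_H$ in the frame $\{p,h_1,\dots,h_K\}$ (the paper does this implicitly via $P_H(x)=x$ and the formula from \Cref{prop:projection_and_distance_hyperbolic_parallel}), verify $[\mathcal{Q}(x),\mathcal{Q}(x)]=C^{-1}$, show $\mathcal{Q}^{-1}\circ\mathcal{Q}=\mathrm{id}$, and then invoke distance preservation. Where you go further is in filling two gaps the paper's terse argument leaves to the reader: (i) the forward-sheet check $(\mathcal{Q}(x))_0>0$, which you secure from $a=C[x,p]\ge 1$ (or, as you note, from connectedness plus the value at $x=p$), and (ii) the actual proof that $\mathcal{Q}$ is an isometry, which you obtain by polarizing the norm computation to show the ambient linear map $L$ preserves the Lorentzian bilinear form on $p\oplus H$ — the paper merely asserts $d(x_1,x_2)=d(\mathcal{Q}(x_1),\mathcal{Q}(x_2))$. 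Your coordinate bookkeeping ($[h_i,h_j]=-C\delta_{i,j}$, $[x,p]=aC^{-1}$, $[x,h_j]=-Cb_j$, $\alpha|C|=\alpha^{-1}$) is all consistent with the paper's conventions and checks out.
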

\begin{corollary}
The affine dimension of $\mathbb{H}_H^D$ is $\mathrm{dim}(H)$.
\end{corollary}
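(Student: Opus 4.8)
The plan is to read this off directly from \Cref{theorem:hyperbolic_affine_subspace_isometry}. That theorem furnishes an explicit map $\mathcal{Q}: \mathbb{H}_H^D \to \mathbb{H}^K$ with $K = \mathrm{dim}(H)$, together with an explicit inverse $\mathcal{Q}^{-1}$, and asserts that $\mathcal{Q}$ is an isometry. An isometry of Riemannian manifolds is in particular a diffeomorphism, and diffeomorphisms preserve dimension; since the model space $\mathbb{H}^K$ is by construction a $K$-dimensional manifold (the hyperboloid in $\mathbb{R}^{K+1}$ cut out by the single equation $[x,x] = C^{-1}$ on the sheet $x_0 > 0$), we conclude $\mathrm{dim}(\mathbb{H}_H^D) = K = \mathrm{dim}(H)$.

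Concretely, the steps I would carry out are: (i) observe from the formulas that both $\mathcal{Q}$ and $\mathcal{Q}^{-1}$ are restrictions of linear maps followed by a fixed rescaling by $\alpha = |C|^{-1/2}$, hence $C^{\infty}$ on their respective domains; (ii) note that the two displayed formulas are mutually inverse, so $\mathcal{Q}$ is a bijection; (iii) combine (i), (ii), and the metric-preservation already proved in \Cref{theorem:hyperbolic_affine_subspace_isometry} to conclude that $\mathcal{Q}$ is a diffeomorphism, in particular a homeomorphism; (iv) invoke invariance of domain (equivalently, the fact that diffeomorphic manifolds have equal dimension) to equate $\mathrm{dim}(\mathbb{H}_H^D)$ with $\mathrm{dim}(\mathbb{H}^K) = K$.

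There is essentially no obstacle here: all the substantive work --- exhibiting the map, verifying it is a bijective isometry, and identifying its codomain as $\mathbb{H}^{\mathrm{dim}(H)}$ --- is already done in \Cref{theorem:hyperbolic_affine_subspace_isometry}, and the geodesic-submanifold structure of $\mathbb{H}_H^D$ is recorded in \Cref{claim:geodesic_subhyperbolic}. If one preferred a self-contained argument bypassing the isometry, one could instead note that $\mathbb{H}_H^D = \mathbb{H}^D \cap (p \oplus H)$ is the intersection of the hyperboloid with the $(\mathrm{dim}(H)+1)$-dimensional linear subspace $p \oplus H$, on which the Lorentzian form restricts nondegenerately with signature $(1, \mathrm{dim}(H))$ (since $[p,p] = C^{-1} < 0$ while $H \subseteq p^{\perp}$ is spacelike); this realizes $\mathbb{H}_H^D$ as a hyperboloid inside a $(\mathrm{dim}(H)+1)$-dimensional Lorentzian space, which is a manifold of dimension $\mathrm{dim}(H)$. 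The only care needed on that route is confirming nondegeneracy of the restricted form, which is exactly the fact used earlier to produce the orthonormal basis of $H^{\perp}$.
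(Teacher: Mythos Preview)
Your proposal is correct and follows the paper's intended approach: the paper provides no explicit proof for this corollary, placing it immediately after \Cref{theorem:hyperbolic_affine_subspace_isometry} as a direct consequence of the isometry $\mathcal{Q}:\mathbb{H}_H^D\to\mathbb{H}^K$ (exactly as \Cref{cor:a} follows \Cref{thm:spherical_affine_subspace_isometry} in the spherical case). You have simply spelled out the immediate inference---an isometry onto the $K$-dimensional model space $\mathbb{H}^K$ forces $\mathrm{dim}(\mathbb{H}_H^D)=K=\mathrm{dim}(H)$---which is precisely what the paper leaves implicit.
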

Similar to the spherical case, we can characterize hyperbolic affine subspaces in terms of sliced $J_D$-unitary matrices---paving the way for constrained optimization methods over sliced $J_D$-unitary matrices to solve hyperbolic PCA problems.
\begin{claim}\label{claim:hyperbolic_alt}
    For any $\mathbb{H}^D_H$, there is a sliced $J_D$-unitary operator $G: \mathbb{H}^{\mathrm{dim}(H)} \rightarrow \mathbb{H}^D_H$, i.e., $G^{\top} J_D G = J_{\mathrm{dim}(H)}$, and vice versa.
\end{claim}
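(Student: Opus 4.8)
The plan is to build $G$ explicitly from a base point and a Lorentzian-orthonormal basis of $H$, and then reverse the construction for the ``vice versa'' direction. For the forward direction, write $\mathbb{H}^D_H = \mathbb{H}^D \cap (p \oplus H)$ and choose an orthogonal basis $h_1,\ldots,h_K$ of $H$ (with $K=\dim H$) normalized so that $[h_i,h_j] = -C\,\delta_{i,j}$; this is possible because the Lorentzian form is positive definite on $T_p\mathbb{H}^D$ and $-C>0$, using the same normalization employed for $H^\perp$ in \Cref{lem:eigenvalue_norms}. Set
\[
G = [\,|C|^{1/2}p,\ \ |C|^{-1/2}h_1,\ \ \ldots,\ \ |C|^{-1/2}h_K\,]\in\mathbb{R}^{(D+1)\times(K+1)}.
\]
Then I would compute the pairwise Lorentzian products of the columns of $G$: using $[p,p]=C^{-1}$, $[p,h_k]=0$ (since $h_k\in T_p\mathbb{H}^D=p^\perp$), $[h_i,h_j]=-C\delta_{i,j}$, and $C<0$, one gets exactly the entries of $J_K$, i.e., $G^\top J_D G = J_K$. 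Finally, $G$ is precisely the matrix of the linear map $y\mapsto\mathcal{Q}^{-1}(y)=\alpha(-y_0Cp+\sum_k y_k h_k)$ of \Cref{theorem:hyperbolic_affine_subspace_isometry} (here $\alpha=|C|^{-1/2}$ and $-\alpha C=|C|^{1/2}$), so that theorem already yields that $G$ carries $\mathbb{H}^{K}$ isometrically and bijectively onto $\mathbb{H}^D_H$.

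For the converse, start from $G=[\,g_0,g_1,\ldots,g_K\,]$ with $G^\top J_D G = J_K$, which forces $G$ to have full column rank and hence to be injective. For $y\in\mathbb{H}^K$ (same curvature $C$) I would observe $[Gy,Gy]=y^\top G^\top J_D G\,y=y^\top J_K y=[y,y]=C^{-1}<0$, so $(Gy)_0^2>\sum_{i\geq1}(Gy)_i^2\geq0$ and in particular $(Gy)_0\neq0$; since $G(\mathbb{H}^K)$ is connected and misses the hyperplane $\{x_0=0\}$, it lies in a single sheet of $\{x:[x,x]=C^{-1}\}$, and after replacing $G$ by $-G$ if needed (which preserves $G^\top J_D G=J_K$) I may assume $G(\mathbb{H}^K)\subseteq\mathbb{H}^D$. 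Now put $p:=|C|^{-1/2}g_0$ and $H:=\mathrm{span}\{g_1,\ldots,g_K\}$: then $[p,p]=|C|^{-1}[g_0,g_0]=-|C|^{-1}=C^{-1}$ and $p_0>0$ because $p=G(|C|^{-1/2}e_0)$ with the first standard basis vector scaled, $|C|^{-1/2}e_0\in\mathbb{H}^K$, so $p\in\mathbb{H}^D$; the $g_k$ ($k\geq1$) are independent with $[g_i,g_j]=\delta_{i,j}$, so $\dim H=K$, and $[p,g_k]=|C|^{-1/2}[g_0,g_k]=0$, i.e., $H\subseteq T_p\mathbb{H}^D$. The identity $G(\mathbb{H}^K)=\mathbb{H}^D\cap(p\oplus H)=\mathbb{H}^D_H$ is then verified both ways: $Gy=|C|^{1/2}y_0\,p+\sum_{k\geq1}y_kg_k\in p\oplus H$ together with $Gy\in\mathbb{H}^D$ gives $\subseteq$; conversely, writing $x=\alpha p+\sum_{k\geq1}\beta_kg_k\in\mathbb{H}^D_H$ and setting $y_0=|C|^{-1/2}\alpha$, $y_k=\beta_k$ gives $Gy=x$, $[y,y]=[x,x]=C^{-1}$, and $y_0>0$ (otherwise $Gy=x$ would lie on the sheet $\{x_0<0\}$), so $y\in\mathbb{H}^K$ and $x\in G(\mathbb{H}^K)$.

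The only step that is not pure bookkeeping is this sheet/orientation argument: $G^\top J_D G=J_K$ guarantees that $G$ preserves the Lorentzian quadratic form but not the distinguished forward sheet, so one must argue via connectedness of $G(\mathbb{H}^K)$ and the fact that $[x,x]<0$ excludes $x_0=0$, and then invoke the harmless sign flip $G\mapsto-G$. Everything else follows directly from $G^\top J_D G=J_K$ and $[p,p]=C^{-1}$, with the isometry/bijectivity statement inherited from \Cref{theorem:hyperbolic_affine_subspace_isometry}.
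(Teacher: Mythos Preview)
Your proposal is correct and follows essentially the same construction as the paper: build $G=[\,|C|^{1/2}p,\ |C|^{-1/2}h_1,\ldots,|C|^{-1/2}h_K\,]$ from a base point and a Lorentz-orthogonal basis of $H$, and in the converse read off $p$ and the $h_k$ from the columns of $G$. Your write-up is in fact more careful than the paper's sketch, in particular the connectedness/sheet argument (and the harmless sign flip $G\mapsto -G$) fills a detail the paper leaves implicit.
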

\subsection{Minimum distortion hyperbolic subspaces}
\label{sec:minimum_distortion_hyperbolic_subspaces}
\subsubsection{Review of existing work}
Chami \emph{et al.} propose HoroPCA \cite{chami2021horopca}. They define $\mathrm{GH}(p, q_1, \ldots, q_K)$ as the geodesic hull $\gamma_1, \ldots, \gamma_K$, where $\gamma_k$ is a geodesic such that $\gamma_k(0) = p \in \mathbb{H}^D$ and $\lim_{t \rightarrow +\infty} \gamma_k(t) = q_k \in \partial \mathbb{H}^D$ for all $k \in [K]$. The geodesic hull of $\gamma_1, \ldots, \gamma_K$ contains straight lines between $\gamma_k(t)$ and $\gamma_{k^{\prime}}(t^{\prime})$ for all $t, t^{\prime} \in \mathbb{R}$ and $k, k^{\prime} \in [K]$.
\begin{claim}\label{claim:chami}
$\mathrm{GH}(p, q_1, \ldots, q_K)$ is a hyperbolic affine subspace.  
\end{claim}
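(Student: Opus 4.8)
The plan is to show both inclusions implicit in Claim~\ref{claim:chami}: that $\mathrm{GH}(p, q_1, \ldots, q_K)$ equals a hyperbolic affine subspace $\mathbb{H}^D_H$ for a suitable tangent subspace $H \subseteq T_p \mathbb{H}^D$, and conversely that every $\mathbb{H}^D_H$ arises this way. The natural candidate is $H = \mathrm{span}\{ \mathrm{log}_p(q_1), \ldots, \mathrm{log}_p(q_K) \}$, where $\mathrm{log}_p(q_k)$ is understood as the unit-speed initial velocity of the geodesic ray $\gamma_k$ from $p$ toward the ideal point $q_k \in \partial\mathbb{H}^D$ (a well-defined tangent direction even though $q_k$ is at infinity). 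First I would recall from \Cref{def:affine_subspace_manifold} that $\mathbb{H}^D_H = \{ x \in \mathbb{H}^D : [x, h'] = 0, \ \forall h' \in H^\perp\} = \mathbb{H}^D \cap (p \oplus H)$, so the claim reduces to the linear-algebraic statement that the geodesic hull spans exactly the subspace $p \oplus H$ of $\mathbb{R}^{1,D}$, intersected with the hyperboloid.

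The key steps, in order, are: (1) Show each ray $\gamma_k$ lies in $\mathbb{H}^D \cap (p \oplus \mathbb{R}\,\mathrm{log}_p(q_k))$ — this is immediate from the exponential map formula in \Cref{tab:diff_geom_ingredients}, since $\mathrm{exp}_p(t v) = \cosh(\sqrt{-C}\|v\| t)\, p + \sinh(\sqrt{-C}\|v\| t)\, \frac{v}{\sqrt{-C}\|v\|}$ is an affine combination of $p$ and $v$. Hence every point on every $\gamma_k$ lies in $p \oplus H$, so $\gamma_k(t) \in \mathbb{H}^D_H$ by the characterization above. (2) Show that the straight Euclidean segments between $\gamma_k(t)$ and $\gamma_{k'}(t')$ that are added to form the geodesic hull stay inside the linear subspace $p \oplus H$ — but this is trivial, since $p \oplus H$ is a linear subspace of $\mathbb{R}^{1,D}$ and therefore closed under all affine combinations of its elements. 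Thus $\mathrm{GH}(p, q_1, \ldots, q_K) \subseteq \mathbb{H}^D_H$. (3) For the reverse inclusion, argue that the geodesic hull is in fact all of $\mathbb{H}^D \cap (p\oplus H)$: pick an arbitrary $x \in \mathbb{H}^D_H$, use \Cref{theorem:hyperbolic_affine_subspace_isometry} to identify $\mathbb{H}^D_H$ isometrically with $\mathbb{H}^K$, and observe that in $\mathbb{H}^K$ any point lies on a geodesic ray from the image of $p$ to some ideal point, and that ideal point pulls back to a point $q \in \partial\mathbb{H}^D$ whose geodesic from $p$ lies in the hull (since the hull of $K$ independent rays is convex and contains, by a dimension count, a full $K$-dimensional family of rays — or more directly, the hull is a totally geodesic $K$-submanifold containing $p$ with tangent space $H$, and there is only one such). (4) Finally, for "vice versa": given any $\mathbb{H}^D_H$ with basis $h_1, \ldots, h_K$ of $H$, define $q_k = \lim_{t\to\infty}\mathrm{exp}_p(t h_k) \in \partial\mathbb{H}^D$ and check $\mathrm{GH}(p, q_1, \ldots, q_K) = \mathbb{H}^D_H$ by the same argument.

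The main obstacle is step~(3): establishing that the geodesic hull is not merely contained in $\mathbb{H}^D_H$ but fills it out. The delicate point is that the "geodesic hull" as defined only adds Euclidean straight segments between points on the rays $\gamma_k$, and one must verify this generates enough points to recover the entire totally geodesic submanifold — equivalently, that the affine span (over $\mathbb{R}$) of the union of the rays $\gamma_1, \ldots, \gamma_K$ is all of $p \oplus H$. This follows because $\gamma_k$ contains $p$ (at $t=0$) and points arbitrarily far along direction $h_k$, so the affine span of $\bigcup_k \gamma_k$ contains $p$ and all of $h_1, \ldots, h_k$, hence equals $p \oplus H$; intersecting with $\mathbb{H}^D$ and invoking that $\mathbb{H}^D \cap (p \oplus H)$ is connected (being isometric to $\mathbb{H}^K$ via \Cref{theorem:hyperbolic_affine_subspace_isometry}) closes the gap. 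I would also need to confirm that the ideal points $q_k$ are distinct and that $\mathrm{log}_p(q_k)$ are linearly independent exactly when the $q_k$ are in "general position," so that $\dim H = K$; this is a routine consequence of the injectivity of $\mathrm{exp}_p$ on $T_p\mathbb{H}^D$.
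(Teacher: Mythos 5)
Your approach is essentially the same as the paper's: set $H = \mathrm{span}\{\gamma'_1(0), \ldots, \gamma'_K(0)\}$ (your $\mathrm{log}_p(q_k)$ is the same tangent vector), argue the forward inclusion from the fact that each $\gamma_k$ lies in $\mathbb{H}^D \cap (p \oplus H) = \mathbb{H}^D_H$ and that this set is closed under taking geodesics, and then sketch the reverse inclusion. The paper invokes \Cref{claim:geodesic_subhyperbolic} directly for the closure property, whereas you rederive it from the linearity of $p \oplus H$; these are the same computation. One small correction: the ``straight lines'' in the geodesic-hull definition must be hyperbolic geodesics, not Euclidean chords of the hyperboloid, since a Euclidean segment between two points of $\mathbb{H}^D$ leaves $\mathbb{H}^D$ and hence cannot be contained in $\mathbb{H}^D_H$; your step (2) then goes through by noting that a geodesic between $x, y \in \mathbb{H}^D_H$ lies in $\mathrm{span}\{x,y\} \cap \mathbb{H}^D \subseteq (p\oplus H)\cap\mathbb{H}^D$, which is precisely the proof of \Cref{claim:geodesic_subhyperbolic}. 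Your reverse inclusion (step 3) is as informal as the paper's own one-line ``$x \in \mathrm{convhull}(\gamma_1,\ldots,\gamma_K)$'' --- both ultimately rest on reading the geodesic hull as the minimal totally geodesic submanifold containing the rays rather than a single pass of pairwise geodesics --- but you correctly flag this as the delicate point, so no new gap is introduced relative to the paper.
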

Their goal is to maximize a proxy for the projected variance:
\begin{equation}\label{eq:chami_cost}
    \mathrm{cost}_{\mathrm{Chami}} (\mathbb{H}^D_H| \mathcal{X} ) = -
    \mathbb{E}_N \Big[ d\big(\widehat{\mathcal{P}}_H(x_n) , \widehat{\mathcal{P}}_H(x_{n^{\prime}}) \big)^2\Big],
\end{equation}
where $\widehat{\mathcal{P}}_H$ is the horospherical projection operator --- which is not a geodesic (distance-minimizing) projection. They propose a sequential algorithm to minimize the cost in equation \eqref{eq:chami_cost}, using a gradient descent method, as follows: (1) the base point is computed as the Frechet mean via gradient descent; and (2) a higher-dimensional affine subspace is estimated based on the optimal affine subspace of lower dimension. One may formulate the hyperbolic PCA problem as follows:
\begin{equation}\label{eq:hyperbolic_PCA_ell2}
    \min_{\substack{G \in \mathbb{R}^{(D+1) \times (K+1)} \\ \{y_n \in \mathbb{H}^{K}\}_{n \in [N]} }} \mathbb{E}_N \big[ \| x_n - G y_n \|_{2}^2 \big]: \  G^{\top} J_D G = J_K,
\end{equation}
where $x_1,\ldots, x_N \in \mathbb{R}^{D+1}$ are the measurements and $y_1, \ldots, y_N \in \mathbb{H}^{K}$ are low-dimensional hyperbolic features. The formulation in equation \eqref{eq:hyperbolic_PCA_ell2} leads to the decomposition of a Euclidean matrix in terms of a sliced-$J_D$ unitary matrix and a hyperbolic matrix --- a topic for future studies.
\subsubsection{A Proper Cost Function for Hyperbolic PCA} \label{sec:proper_cost_function_hyperbolic_PCA}
We choose $f(x)=\mathrm{sinh}^2(\sqrt{|C|}x)$ to arrive at the following cost:
\begin{align}
    \mathrm{cost}(\mathbb{H}^D_{H}| \mathcal{X}) =   \mathbb{E}_N \big[ \sum_{k \in [K^{\prime}]} [x_n,h^{\prime}_k]^2 \big];\label{eq:hyperbolic_pca_cost_function}
\end{align}
see \Cref{prop:projection_and_distance_hyperbolic}. We interpret $\mathrm{cost}(\mathbb{H}^D_{H}| \mathcal{X})$ as the \emph{aggregate dissipated image} of the points in directions of normal tangent vectors. If $x \in \mathbb{H}^D$ has no components in the direction of normal tangents --- i.e., $[x , h_k^{\prime}] = 0$ where $h^{\prime}_1,\ldots, h^{\prime}_{K^{\prime}}$ are orthogonal basis vectors for $H^{\perp}$ --- then $ \sum_{k \in [K^{\prime}]} [x,h^{\prime}_{k}]^2 = 0$. Our distortion function in equation \eqref{eq:hyperbolic_pca_cost_function} leads to the formulation of hyperbolic PCA as a constrained optimization problem:
\begin{problem}\label{prob:hyperbolic_pca_problem}
Let $x_1, \ldots, x_N \in \mathbb{H}^D$ and $C_x = \mathbb{E}_N \big[ x_n x_n^{\top}\big]$. The hyperbolic PCA aims to find a point $p \in \mathbb{H}^D$ and a set of orthogonal vectors $h^{\prime}_1, \ldots, h^{\prime}_{K^{\prime}} \in T_p \mathbb{H}^D = p^{\perp}$  that minimize the function $\sum_{k \in [K^{\prime}]}  {h^{\prime}_{k}}^{\top} J_d C_x J_d h^{\prime}_{k}$.
\end{problem}
\Cref{prob:hyperbolic_pca_problem} aims to minimize the cost in equation \eqref{eq:hyperbolic_pca_cost_function}, i.e.,
\begin{align*}
     \mathrm{cost}(\mathbb{H}^D_{H}| \mathcal{X}) =   \sum_{k \in [K^{\prime}]}  {h^{\prime}_{k}}^{\top} J_D C_x J_D h^{\prime}_{k},
\end{align*}
where $p \in \mathbb{H}^D, h^{\prime}_1, \ldots, h^{\prime}_{K^{\prime}} \in p^{\perp}, [h^{\prime}_{i}, h^{\prime}_{j}] = -C\delta_{i,j}$ and $C_x =\mathbb{E}_N \big[ x_n x_n^{\top}\big]$, over $p \in \mathbb{H}^D, H \subseteq p^{\perp} , \mathrm{dim}(H) = K$. \Cref{prob:hyperbolic_pca_problem} asks for $J_D$-orthogonal vectors $h^{\prime}_1, \ldots, h^{\prime}_{K^{\prime}}$ in an appropriate tangent space $T_p \mathbb{H}^D$ that capture the least possible energy of $C_x$ with respect to the Lorentzian scalar product. 
\begin{remark}
    The spectrum of a matrix is its set of eigenvalues. Discarding an eigenvalue from the matrix's eigenvalue decomposition approximates the matrix with an error proportional to the magnitude of the discarded eigenvalue, that is, discarded energy. Similarly, one can define the \( J_D \)-spectrum of the second-moment matrix \( C_x \) as the set of its \( J_D \)-eigenvalues. As we demonstrate in numerical experiments, we use \( J_D \)-spectrum to identify the existence of outlier hyperbolic points.
\end{remark}
This is akin to the Euclidean PCA: the subspace is spanned by the leading eigenvectors of the covariance matrix. However, to prove the hyperbolic PCA theorem, we need a technical result on $J_D$-diagonalizability of the second-moment matrix.
\begin{proposition}\label{prop:j_diagonalizable_matrices}
If $A \in \mathbb{R}^{(D+1) \times (D+1)}$ is a symmetric and $J_D$-diagonalizable matrix, i.e., $A J_D V = V J_D \Lambda$, that has distinct (in absolute values) diagonal elements of $\Lambda$, then $A = V \Lambda V^{\top}$ where $V$ is a $J_D$-unitary matrix.
\end{proposition}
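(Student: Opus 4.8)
The plan is to let the symmetry of $A$ act on the diagonalizing change of basis. Concretely, I will show that the matrix $M \bydef V^{\top} J_D V$ is forced to be diagonal; once that is known, identifying $M$ with $J_D$ is a matter of inertia plus normalization, and the formula $A = V \Lambda V^{\top}$ falls out by substitution.

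First I would transpose the hypothesis $A J_D V = V J_D \Lambda$. Using $J_D^{\top} = J_D$, $A^{\top} = A$, and that $\Lambda$ is a real diagonal matrix (its diagonal is real because in the diagonalization the columns of $V$ form a real eigenbasis of $A J_D$), this gives $V^{\top} J_D A = \Lambda J_D V^{\top}$. Then I compute the product $V^{\top} J_D A J_D V$ in the two natural groupings: as $V^{\top} J_D \,(A J_D V) = V^{\top} J_D V J_D \Lambda = M (J_D \Lambda)$, and as $(V^{\top} J_D A)\, J_D V = \Lambda J_D V^{\top} J_D V = (\Lambda J_D) M$. Since $J_D$ and $\Lambda$ are diagonal they commute, so writing $\Sigma \bydef J_D \Lambda = \Lambda J_D$ we obtain $M \Sigma = \Sigma M$. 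The diagonal entries of $\Sigma$ are $\Sigma_{kk} = (J_D)_{kk}\lambda_k$, whose absolute values $|\lambda_k|$ are pairwise distinct by hypothesis, so the $\Sigma_{kk}$ are pairwise distinct; a matrix commuting with a diagonal matrix that has distinct diagonal entries is itself diagonal, hence $M = V^{\top} J_D V$ is diagonal.

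Next I would pin down $M$ exactly. Since $V$ is $J_D$-invertible (in particular invertible), $M$ is a product of invertible matrices, hence invertible, so its diagonal entries $M_{kk} = [v_k,v_k]$ (with $v_k$ the $k$-th column of $V$) are all nonzero; moreover $M = V^{\top} J_D V$ is congruent to $J_D$, so by Sylvester's law of inertia it has exactly one negative and $D$ positive diagonal entries. Rescaling each column as $v_k \mapsto v_k / \sqrt{|[v_k,v_k]|}$ — legitimate because the columns of $V$ are $J_D$-eigenvectors of $A$ in the sense of \Cref{def:j_eigenequation} and rescaling preserves the relation $A J_D V = V J_D \Lambda$ — and ordering the columns so that the unique negative-norm eigenvector comes first, we get $M = J_D$, i.e., $V$ is $J_D$-unitary and $V^{-1} = J_D V^{\top} J_D$. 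Finally, right-multiplying $A J_D V = V J_D \Lambda$ by $V^{-1} = J_D V^{\top} J_D$ gives $A J_D = V J_D \Lambda J_D V^{\top} J_D = V \Lambda V^{\top} J_D$, where I used $J_D \Lambda J_D = \Lambda$ (both are diagonal and $J_D^2 = I$); right-multiplying once more by $J_D$ yields $A = V \Lambda V^{\top}$.

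The step I expect to be the main obstacle is the upgrade from ``$M$ is diagonal'' to ``$M = J_D$'': this is where one must commit to the normalization $|[v_k,v_k]| = 1$ of the $J_D$-eigenvectors and use the congruence/inertia argument (which forces exactly one diagonal entry of $M$ to be $-1$), together with an ordering convention placing that eigenvector first. The remaining ingredients — the transpose identity, the commuting-diagonal-matrices lemma applied to $M\Sigma = \Sigma M$, and the identity $J_D\Lambda J_D = \Lambda$ — are routine bookkeeping.
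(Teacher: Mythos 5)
Your proof is correct, and while it reaches the same decomposition, the route to the key structural fact --- that $V^{\top} J_D V = J_D$ after normalization --- is genuinely different from the paper's, and arguably cleaner. For the off-diagonal vanishing $[v_i, v_j] = 0$, the paper computes the chain $|\lambda_i [v_i,v_j]| = |[A J_D v_i, v_j]| = |[v_i, A J_D v_j]| = |\lambda_j [v_i, v_j]|$ pairwise and invokes distinctness of the $|\lambda_i|$, whereas you package the same information in the single commutation relation $M\Sigma = \Sigma M$ with $\Sigma = J_D\Lambda$ and apply the standard lemma on commutants of a diagonal matrix with distinct entries; these are equivalent in content. The real divergence is the inertia step. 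The paper shows that $V$ also diagonalizes the auxiliary matrix $B \bydef (AJ_D)^{\top} J_D (AJ_D)$ and then cites an earlier work for the fact that $B$ has exactly one negative eigenvalue, from which it reads off the sign pattern of the diagonal matrix $V^{\top} J_D V$. You instead observe directly that $M = V^{\top} J_D V$ is congruent to $J_D$ via the invertible $V$, so Sylvester's law of inertia forces $M$ --- already known diagonal and nonsingular --- to have exactly one negative and $D$ positive entries. This is more self-contained (no external citation), avoids the detour through $B$, and implicitly sidesteps an unstated assumption in the paper's route, namely that $A J_D$ is invertible (without which $B$ need not have signature $(D,1)$). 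The final bookkeeping --- rescaling the columns, ordering the negative-norm vector first, and the substitution $A = V J_D \Lambda J_D V^{\top} = V\Lambda V^{\top}$ using $V^{-1} = J_D V^{\top} J_D$ --- matches the paper exactly.
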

From \Cref{prop:j_diagonalizable_matrices}, any symmetric, $J_D$-diagonalizable matrix with distinct (absolute) $J_D$-eigenvalues has $D$ positive and one negative $J_D$-eigenvectors --- all orthogonal to each other.

\begin{theorem}\label{thm:hyperbolic_pca_problem}
Let $x_1, \ldots, x_N \in \mathbb{H}^D$ and $C_x = \mathbb{E}_N \big[ x_n x_n^{\top}\big]$ be a $J_D$-diagonalizable matrix. Then, the optimal solution for point $p$ is the scaled negative $J_D$-eigenvector of $C_x$ and the optimal $h^{\prime}_1, \ldots, h^{\prime}_{K^{\prime}}$ are the scaled positive $J_D$-eigenvectors that correspond to the smallest $K^{\prime}$ $J_D$-eigenvalues of $C_x$. And $H$ is spanned by $K=D-K^{\prime}$ scaled positive $J_D$-eigenvectors that correspond to the leading $J_D$-eigenvalues of $C_x$.
\end{theorem}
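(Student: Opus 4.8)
The plan is to read off \Cref{prob:hyperbolic_pca_problem} as a constrained trace minimization, reduce it to a diagonal problem by a Lorentzian change of coordinates coming from the $J_D$-diagonalization of $C_x$, and then solve the diagonal problem by showing that the indefiniteness of $J_D$ gives no advantage. First I would observe that the base point $p$ is absent from the objective $\sum_{k\in[K^{\prime}]}{h^{\prime}_k}^{\top}J_D C_x J_D h^{\prime}_k$; it enters only through the requirement $h^{\prime}_k\in T_p\mathbb{H}^D=p^{\perp}$ together with $[h^{\prime}_i,h^{\prime}_j]=-C\delta_{ij}$. Since $p$ is timelike, $[\cdot,\cdot]$ restricted to $p^{\perp}$ is positive definite, and conversely any Lorentz-orthonormal family of spacelike vectors $h^{\prime}_1,\dots,h^{\prime}_{K^{\prime}}$ (with $K^{\prime}\le D$) extends to a Lorentz-orthonormal basis of $\mathbb{R}^{1,D}$ whose unique timelike element, rescaled onto the correct sheet, is an admissible $p$. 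Hence the feasible set for $(h^{\prime}_1,\dots,h^{\prime}_{K^{\prime}})$ is exactly the set of $K^{\prime}$-tuples with $[h^{\prime}_i,h^{\prime}_j]=-C\delta_{ij}$, and it remains to minimize $\sum_k {h^{\prime}_k}^{\top}M h^{\prime}_k$ over such tuples, where $M=J_D C_x J_D=\mathbb{E}_N[(J_D x_n)(J_D x_n)^{\top}]\succeq 0$.

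Next I would diagonalize $C_x$. Assuming, as is generic, that the $J_D$-eigenvalues of $C_x$ are distinct in absolute value, \Cref{prop:j_diagonalizable_matrices} gives $C_x=W\Lambda W^{\top}$ with $W$ a $J_D$-unitary matrix, so its columns $w_0,\dots,w_D$ are the normalized $J_D$-eigenvectors of $C_x$ with $[w_i,w_i]=(J_D)_{ii}$ (thus $w_0$ is the negative/timelike eigenvector and $w_1,\dots,w_D$ the positive/spacelike ones), and $\Lambda=\mathrm{diag}(\lambda_0,\dots,\lambda_D)$ collects the $J_D$-eigenvalues. Substituting $h^{\prime}_k=W z_k$ and using $W^{\top}J_D W=J_D$ and $J_D\Lambda J_D=\Lambda$ turns the constraint into $[z_i,z_j]=-C\delta_{ij}$ and the objective into $\sum_k z_k^{\top}\Lambda z_k=\sum_k\sum_{i=0}^{D}\lambda_i (z_k)_i^2$; that is, $h^{\prime}_k\mapsto z_k$ is a Lorentzian isometry that diagonalizes the form. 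I would also record that each $\lambda_i$ is real ($C_x J_D$ is $J_D$-self-adjoint, cf.\ \Cref{prop:real_j_eigenvalues}, and is moreover similar to the symmetric matrix $C_x^{1/2}J_D C_x^{1/2}$) and nonnegative, since $\lambda_i=w_i^{\top}M w_i$ with $M\succeq 0$; so the $\lambda_i$ can be meaningfully ordered.

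It then suffices to solve: minimize $\sum_{k\in[K^{\prime}]}\sum_{i=0}^{D}\lambda_i (z_k)_i^2$ over spacelike Lorentz-orthonormal $z_1,\dots,z_{K^{\prime}}$, where coordinate $0$ is the timelike slot ($J_D=\mathrm{diag}(-1,1,\dots,1)$) and all $\lambda_i\ge 0$. The crux is that the timelike coordinate is never used at the optimum: each $z_k$ obeys $\sum_{i\ge 1}(z_k)_i^2=-C+(z_k)_0^2$, so a nonzero $(z_k)_0$ both adds the nonnegative term $\lambda_0 (z_k)_0^2$ and inflates the spatial mass. Concretely, the Gram matrix of the spatial parts equals $-C\,I_{K^{\prime}}+tt^{\top}$ with $t_k=(z_k)_0$, so its eigenvalues are $-C+\|t\|^2$ and $-C$ (multiplicity $K^{\prime}-1$); a rearrangement (von Neumann) inequality then gives $\sum_k z_k^{\top}\Lambda z_k\ge -C\sum_{k=1}^{K^{\prime}}\lambda_{(k)}+(\lambda_0+\lambda_{(1)})\|t\|^2\ge -C\sum_{k=1}^{K^{\prime}}\lambda_{(k)}$, where $\lambda_{(1)}\le\cdots\le\lambda_{(D)}$ are the sorted \emph{positive} $J_D$-eigenvalues. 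Equality holds at $z_k=\sqrt{-C}\,e_{j_k}$, where $\lambda_{j_1},\dots,\lambda_{j_{K^{\prime}}}$ are the smallest $K^{\prime}$ of $\lambda_1,\dots,\lambda_D$ (classical trace minimization over the positive-definite block). Unwinding the isometry yields $h^{\prime}_k=\sqrt{-C}\,w_{j_k}$, i.e.\ scaled positive $J_D$-eigenvectors attached to the smallest $K^{\prime}$ positive $J_D$-eigenvalues; $p=|C|^{-1/2}w_0$, the scaled negative $J_D$-eigenvector, is orthogonal to every $h^{\prime}_k$ and lies on the correct sheet, hence is an admissible optimal base point; and $H$, the Lorentz-orthogonal complement of $\mathrm{span}(h^{\prime}_1,\dots,h^{\prime}_{K^{\prime}})$ inside $T_p\mathbb{H}^D$, is spanned by the remaining $K=D-K^{\prime}$ scaled positive $J_D$-eigenvectors, those with the leading $J_D$-eigenvalues.

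I expect the main obstacle to be the middle step: certifying that the indefinite metric confers no advantage (that an optimal $z_k$ has no timelike component), together with the bookkeeping that such $z_k$, equivalently $h^{\prime}_k$, do admit a valid timelike $p$ — supplied by the completion argument in the first paragraph. Two technical caveats also need care: when the $J_D$-eigenvalues of $C_x$ are not distinct in absolute value, \Cref{prop:j_diagonalizable_matrices} is unavailable and one must build the $J_D$-unitary $W$ directly on eigenspaces or argue by perturbation; and when $C_x$ is rank-deficient, realness of the $J_D$-spectrum needs the similarity argument adapted through a pseudo-inverse. Everything else — the reduction, the change of variables, and the final translation — is routine given \Cref{prob:hyperbolic_pca_problem}, \Cref{prop:j_diagonalizable_matrices}, and \Cref{def:j_eigenequation}.
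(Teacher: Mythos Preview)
Your proposal is correct and follows essentially the same route as the paper: $J_D$-diagonalize $C_x$ via a $J_D$-unitary change of coordinates, establish that all $J_D$-eigenvalues are nonnegative, show the optimal spacelike frame has zero timelike component, and finish with von Neumann's trace inequality on the remaining Euclidean block. Your packaging of the ``timelike component vanishes'' step through the spatial Gram matrix $-C\,I_{K'}+tt^{\top}$ and a single von Neumann bound is slightly slicker than the paper's explicit $\mathcal{W}_{dd}$ bookkeeping, but the two arguments are the same in substance.
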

The $J_D$-diagonalizability condition requires $C_x$ to be similar to a diagonal matrix. \Cref{prop:j_diagonalizable_matrices} provides a sufficient condition for its $J_D$-diagonalizability; in fact, we conjecture that symmetry is sufficient even if it has repeated $J_D$-eigenvalues.
\begin{claim}\label{cl:hyperbolic_is_proper}
    The cost function in equation \eqref{eq:hyperbolic_pca_cost_function} is proper.
\end{claim}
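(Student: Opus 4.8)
The plan is to verify the two defining conditions of a proper cost function---\emph{consistent centroid} and \emph{nested optimality}---for the distortion $f(x) = \sinh^2(\sqrt{|C|}\,x)$, using \Cref{thm:hyperbolic_pca_problem} as the main engine and \Cref{prop:j_diagonalizable_matrices} to guarantee that the $J_D$-eigenvectors of the second-moment matrix $C_x$ form a complete, mutually $J_D$-orthogonal system consisting of $D$ positive and one negative eigenvectors, ordered by their (distinct in absolute value) $J_D$-eigenvalues.

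First I would establish \emph{consistent centroid}. Setting $K=0$ (hence $K^{\prime}=D$ and $H=\{0\}$), the hyperbolic affine subspace degenerates to $\mathbb{H}^D_{\{0\}} = \mathbb{H}^D \cap \mathrm{span}\{p\} = \{p\}$, since $p$ is the unique point of $\mathbb{H}^D$ on the line through $p$. Then $\min_{y \in \mathbb{H}^D_{\{0\}}} d(x_n,y) = d(x_n,p)$, and by \Cref{prop:projection_and_distance_hyperbolic} the cost in \eqref{eq:hyperbolic_pca_cost_function} becomes $\mathbb{E}_N\big[\sinh^2(\sqrt{|C|}\,d(x_n,p))\big] = \mathbb{E}_N\big[\sum_{k \in [D]} [x_n,h_k^{\prime}]^2\big]$ for any complete orthogonal basis $\{h_k^{\prime}\}_{k\in[D]}$ of $p^{\perp}$. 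I would then define the \emph{hyperbolic mean} $\mu(\mathcal{X})$ as any minimizer of this expression over $p \in \mathbb{H}^D$---mirroring \Cref{def:spherical_mean}---so that, tautologically, the optimal $0$-dimensional affine subspace is $\mu(\mathcal{X})$. Invoking \Cref{thm:hyperbolic_pca_problem} with $K=0$ identifies this mean concretely as the scaled negative $J_D$-eigenvector of $C_x$, i.e.\ the direction carrying the largest Lorentzian energy; as in the spherical case it is unique only up to the sign and scaling that place it on $\mathbb{H}^D$.

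Next I would establish \emph{nested optimality}. Let $u_0$ denote the negative $J_D$-eigenvector of $C_x$ and $u_1, \ldots, u_D$ the positive ones ordered by strictly decreasing $J_D$-eigenvalue. By \Cref{thm:hyperbolic_pca_problem} the optimal base point is the scaled $u_0$ for \emph{every} target dimension---crucially it does not depend on $K$---while the optimal tangent subspace of dimension $K$ is $H_K = \mathrm{span}\{u_1, \ldots, u_K\}$ (equivalently, $H_K^{\perp}$ is spanned by the trailing $K^{\prime} = D-K$ positive $J_D$-eigenvectors). Hence $\{u_0\} = p^{*} \oplus H_0 \subseteq p^{*} \oplus H_1 \subseteq \cdots \subseteq p^{*} \oplus H_D$, and intersecting each nested subspace with $\mathbb{H}^D$ preserves the inclusions, yielding $\mathbb{H}^D_{H_0} \subseteq \mathbb{H}^D_{H_1} \subseteq \cdots$, which is exactly nested optimality. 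Combining the two parts shows the cost is proper.

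I expect the main obstacle to be bookkeeping around well-definedness and non-uniqueness rather than any hard estimate: one must check that a single base point ($u_0$) is simultaneously optimal across all dimensions, and that the positive $J_D$-eigenvectors admit a consistent total order and are mutually $J_D$-orthogonal so that the ``leading $K$ / trailing $K^{\prime}$'' split is unambiguous and nested---precisely what \Cref{prop:j_diagonalizable_matrices} delivers under the distinct-absolute-$J_D$-eigenvalue hypothesis. Non-uniqueness (eigenvectors determined only up to sign and scaling, and up to rotation within a repeated $J_D$-eigenspace) means ``nested'' should be read, as in the Euclidean and spherical cases, as ``\emph{there exist} optimal affine subspaces forming a chain under inclusion.'' Removing the distinctness assumption would require the conjectured strengthening of \Cref{prop:j_diagonalizable_matrices} to repeated $J_D$-eigenvalues, which is the genuine gap.
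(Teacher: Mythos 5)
Your proposal is correct and follows essentially the same route as the paper's proof: both invoke \Cref{thm:hyperbolic_pca_problem} and \Cref{def:hyperbolic_mean} to identify the optimal base point as the (scaled) negative $J_D$-eigenvector of $C_x$ independently of $K$, and to observe that the optimal tangent subspaces are nested because they are spanned by the leading positive $J_D$-eigenvectors. Your write-up is considerably more explicit than the paper's terse sketch---verifying that $\mathbb{H}^D_{\{0\}}=\{p\}$, spelling out the chain $\mathbb{H}^D_{H_0}\subseteq \mathbb{H}^D_{H_1}\subseteq\cdots$, and flagging the non-uniqueness convention and the reliance on the distinct-absolute-$J_D$-eigenvalue hypothesis of \Cref{prop:j_diagonalizable_matrices}---but the underlying argument is the one the paper intends.
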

The proper cost in equation \eqref{eq:hyperbolic_pca_cost_function} implies the following closed-form definition for the hyperbolic centroid.
\begin{definition}\label{def:hyperbolic_mean}
    A hyperbolic mean of $\mathcal{X}$ is $\mu(\mathcal{X})$ if $\mathbb{E}_N \big[ f\circ  d(x_n, \mu(\mathcal{X}))  \big] = \min_{p \in \mathbb{H}^D} \mathbb{E}_N \big[ f \circ d(x_n, p)  \big]$. The solution is the scaled negative $J_D$-eigenvector of $C_x$.
\end{definition}
\begin{remark}
    The formalism of space form (Euclidean, spherical, and hyperbolic) PCAs shows similarities through the use of (in)definite eigenequations. This arises from the introduction of proper cost functions which resulted in quadratic cost functions with respect to the base points and tangent vectors. However, this approach is not necessarily generalizable to other Riemannian manifolds. This limitation is due to the absence of (1) a simple Riemannian metric, (2) a closed-form distance function, and (3) closed-form exponential and logarithmic maps in general Riemannian manifolds, e.g., the manifold of rank-deficient positive semidefinite matrices \cite{lahav2023procrustes}.
\end{remark}
\section{NUMERICAL RESULTS}
We compare our \emph{space form PCA algorithm} (SFPCA) to other leading algorithms in terms of accuracy and speed. 
\subsection{Synthetic data and experimental setup}
We generate random, noise-contaminated points on known (but random) spherical and hyperbolic affine subspaces. We then apply PCA methods to recover the projected points after estimating the affine subspace. We conduct experiments examining the impact of number of points $N$, the ambient dimension $D$, the dimension of the affine subspace $K$, and the noise level $\sigma$ on the performance of algorithms.

\subsubsection{Random affine subspace} 
For fixed ambient and subspace dimensions, $D$ and $K$, we sample from a normal distribution and normalize it to get the spherical (or hyperbolic)  point $p$. We then generate random vectors from the standard normal distribution and use the Gram–Schmidt process to construct tangent vectors: project the first random vector onto $p^{\perp}$ and normalize it to $h_1 \in T_p S^{D}$, where $S \in \{ \mathbb{S}, \mathbb{H}\}$. We then project the second random vector onto $(p \oplus H)^{\perp}$, where $H = h_1$, and normalize it to $h_2 \in T_p S^{D}$. We repeat this until we form a $K$-dimensional affine subspace in $T_p S^D$.

\subsubsection{Noise-contaminated points}
\begin{figure*}[t!]
\centering
\includegraphics[width=.9\textwidth]{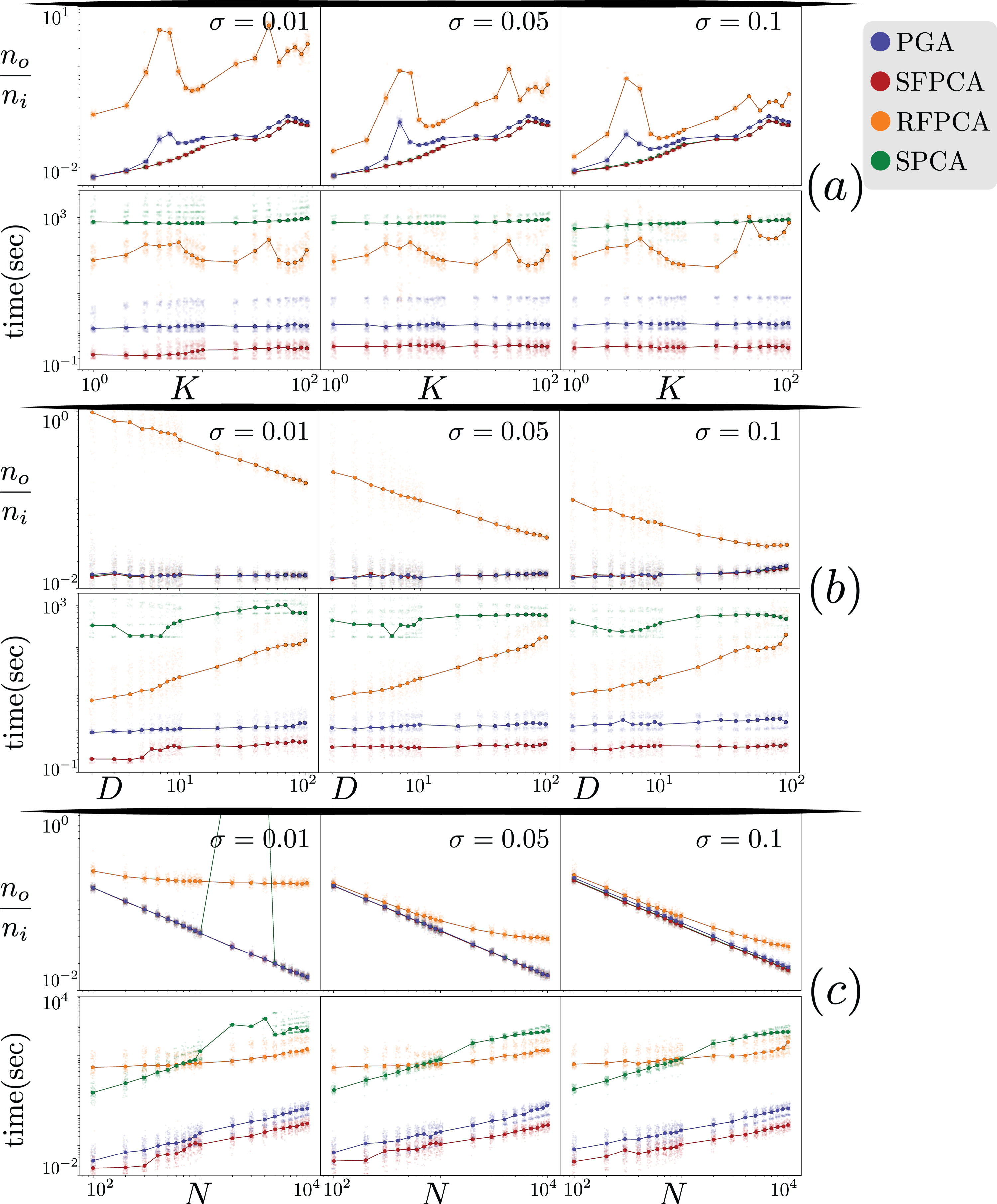}
\caption{For each spherical experiment, on the y-axes, we report running time and normalized output error. A dot corresponds to a random trial, and connected circles show the median across all trials. Figures $(a, b, c)$ show the results for $\mathbb{S}(K_1), \mathbb{S}(D_1), \mathbb{S}(N_1)$, respectively. All axes are in logarithmic scale.} 
\label{fig:spca_experiment}
\end{figure*}

\begin{figure}[b]
\center
\includegraphics[width=.49\textwidth]{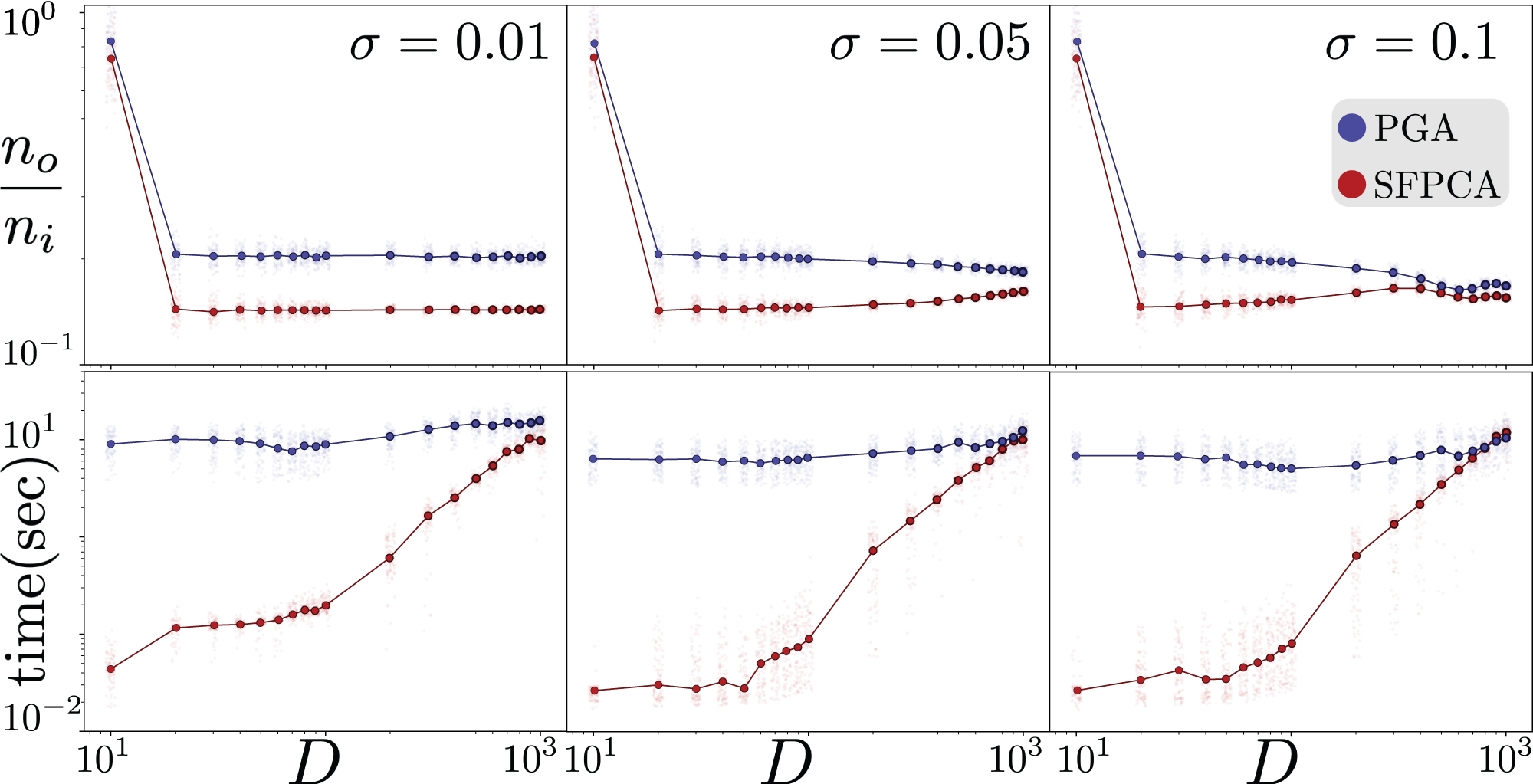}
\caption{Spherical experiment $\mathbb{S}(D_2)$. The y-axes show running time and normalized output error. All axes are in logarithmic scale.}
\label{fig:spca_experiment2}
\end{figure}
Let $H \in \mathbb{R}^{(D+1) \times K}$ be the subspace of $T_p S^D$. For $n \in [N]$, we generate $c_n \sim \mathcal{N}(0, \alpha_S I_{K+1})$ and let $v_n = Hc_n$. To add noise, we project $\nu_n \sim \mathcal{N}(0,  \alpha_S \sigma^2I_{D+1})$ onto $T_p S^D$, i.e, $p^{\perp} v_n$. We then let $x_n = \mathrm{exp}_p(v_n + p^{\perp}\nu_n )$ be the noise-contaminated point. Finally, $\alpha_S = \frac{\pi}{4}$ if $S = \mathbb{S}$ and $\alpha_S = 1$ if $S = \mathbb{H}$.
\subsubsection{PCA on noisy data} 
We use each algorithm to estimate an affine subspace $S^{D}_{\widehat{H}}$ where $\widehat{H} \subseteq T_{\widehat{p}} S^D$ and $\widehat{p}$ are the estimated parameters. We let $n_i \bydef \mathbb{E}_N[ d(x_n, \mathcal{P}_{H}(x_n)) ]$ be the empirical mean of the distance between measurements and the true subspace, and $n_o \bydef \mathbb{E}_N \Big[ d\big( \mathcal{P}_{\widehat{H}}(x_n), \mathcal{P}_H(\mathcal{P}_{\widehat{H}}(x_n)) \big) \Big]$ be the average distance between denoised points $\{ \mathcal{P}_{\widehat{H}}(x_n) \}_{n \in [N]}$ and the true affine subspace. If $S^D_{\widehat{H}}$ is a good approximation to $S^D_{H}$, then $n_o$ is small. 
We evaluate the performance of algorithms using the normalized output error, $\frac{n_o}{n_i}$.
\begin{remark}
    The ratio of $n_o$ over $n_i$ quantifies how much the estimated points are farther from the true subspace compared to the original noise-contaminated points. This is a normalized quantity, i.e., it is invariant with respect to the scale of data points, which makes it ideal for comparing results as $D$, $K$, $\sigma$, and $N$ vary. A reasonable upper bound for this ratio is $1$ --- as PCA is expected to denoise the point sets by finding the optimal low-dimensional affine subspace for them.
\end{remark}
\subsubsection{Randomized Experiments and Algorithms} 
For each random affine subspace and noise-contaminated point set, we report the normalized error and the running time for each algorithm. Then, we repeat each random trial $100$ times. We use our implementation of principal geodesic analysis (PGA) \cite{fletcher2003statistics}. We also implement Riemannian functional principal component analysis (RFPCA) for spherical PCA \cite{dai2018principal} and spherical principal component analysis (SPCA) \cite{liu2019spherical}. Since SPCA is computationally expensive, we first run our SFPCA to provide it with good initial parameters. For hyperbolic experiments, we use HoroPCA \cite{chami2021horopca} and barycentric subspace analysis (BSA) \cite{pennec2018barycentric}, implemented by Chami \emph{et al.}~\cite{chami2021horopca}.

\subsection{Spherical PCA}\label{sec:exp-sph}
\subsubsection{Experiment $\mathbb{S}(K_1)$} 
For a fixed $D = 10^{2}, N = 10^{4}$, increasing the subspace dimension $K$ worsens the normalized output errors for all algorithms; see \Cref{fig:spca_experiment} $(a)$. RFPCA is unreliable, while other methods are similar in their error reduction pattern. When $K$ is close to $D$, SFPCA has a marginal but consistent advantage over PGA. SFPCA is faster than the rest, and $K$ has a minor impact on running times.
\subsubsection{Experiments $\mathbb{S}(D_1)$ and $\mathbb{S}(D_2)$} In $\mathbb{S}(D_1)$ --- fixed $K = 1$, $N = 10^{4}$ and varying $D$ --- PGA, SFPCA, and SPCA exhibit a similar denoising performance, not impacted by $D$; see \Cref{fig:spca_experiment} $(b)$. RFPCA has higher output error levels than other methods. To further compare SFPCA and its close competitor PGA, we design the challenging experiment $\mathbb{S}(D_2)$ with $K = 10$ and $N = 10^{3}$. In this setting, SFPCA exhibits a clear advantage over PGA in error reduction; see \Cref{fig:spca_experiment2}. In both settings, SFPCA continues to be the fastest in almost all conditions despite using a warm start for PGA.

\subsubsection{Experiment $\mathbb{S}(N_1)$} 
For fixed $K=1$ and $D = 10^2$, when we change $N$ and $\sigma$, our SFPCA has the fastest running time; and it is tied in having the lowest normalized output error with SPCA and PGA; see \Cref{fig:spca_experiment} $(c)$. As expected, increasing $N$ generally makes all methods slower, partially because the computation of $C_x$ has $O(N)$ complexity. Computing a base point $p$ using iterative computations on all $N$ points is time-consuming with $N$, whereas our SFPCA has worst-case  complexity of $O(D^3)$. SFPCA provides similar error reductions compared to the rest due to providing an excessive number of points to each algorithm. SPCA fails in some cases, as evident from the erratic behavior of normalized output error. SPCA takes about $15$ minutes on $10^4$ points in each trial, while our SFPCA takes less than a second. PGA is the closest competitor in normalized error but is about three times slower.

\begin{figure*}[t]
\centering
\includegraphics[width=0.9\textwidth]{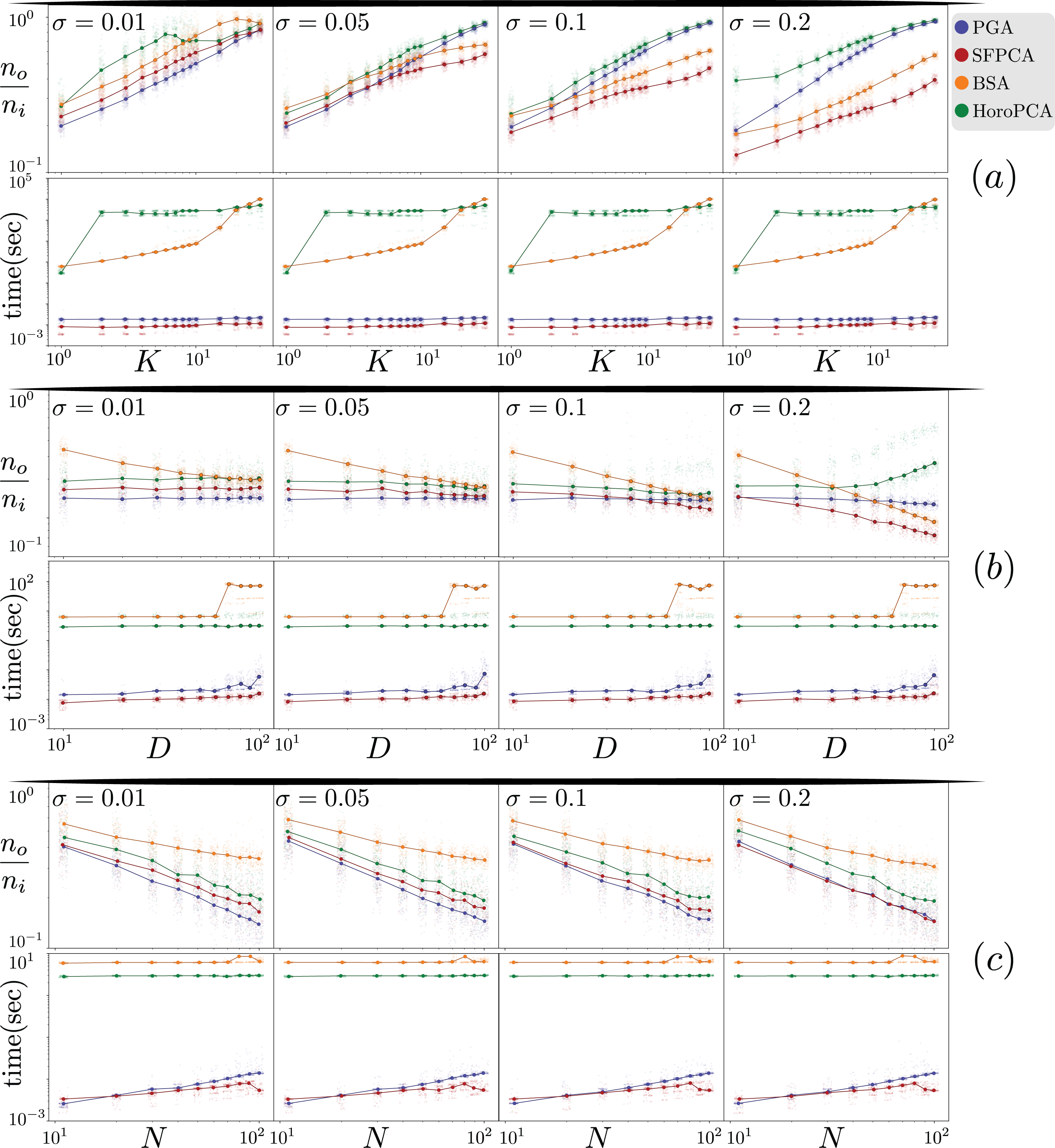}
\caption{For each scaled-down hyperbolic experiment, on the y-axes, we report running time and normalized output error in logarithmic scale. A dot corresponds to a random trial, and circles show the median across all trials. Figures in rows $(a), (b)$, and $(c)$ are $\mathbb{H}(K_1), \mathbb{H}(D_1)$, and $\mathbb{H}(N_1)$.}
\label{fig:hpca_experiment_full}
\end{figure*}

\begin{figure*}[t!]
\centering
\includegraphics[width=0.9\textwidth]
{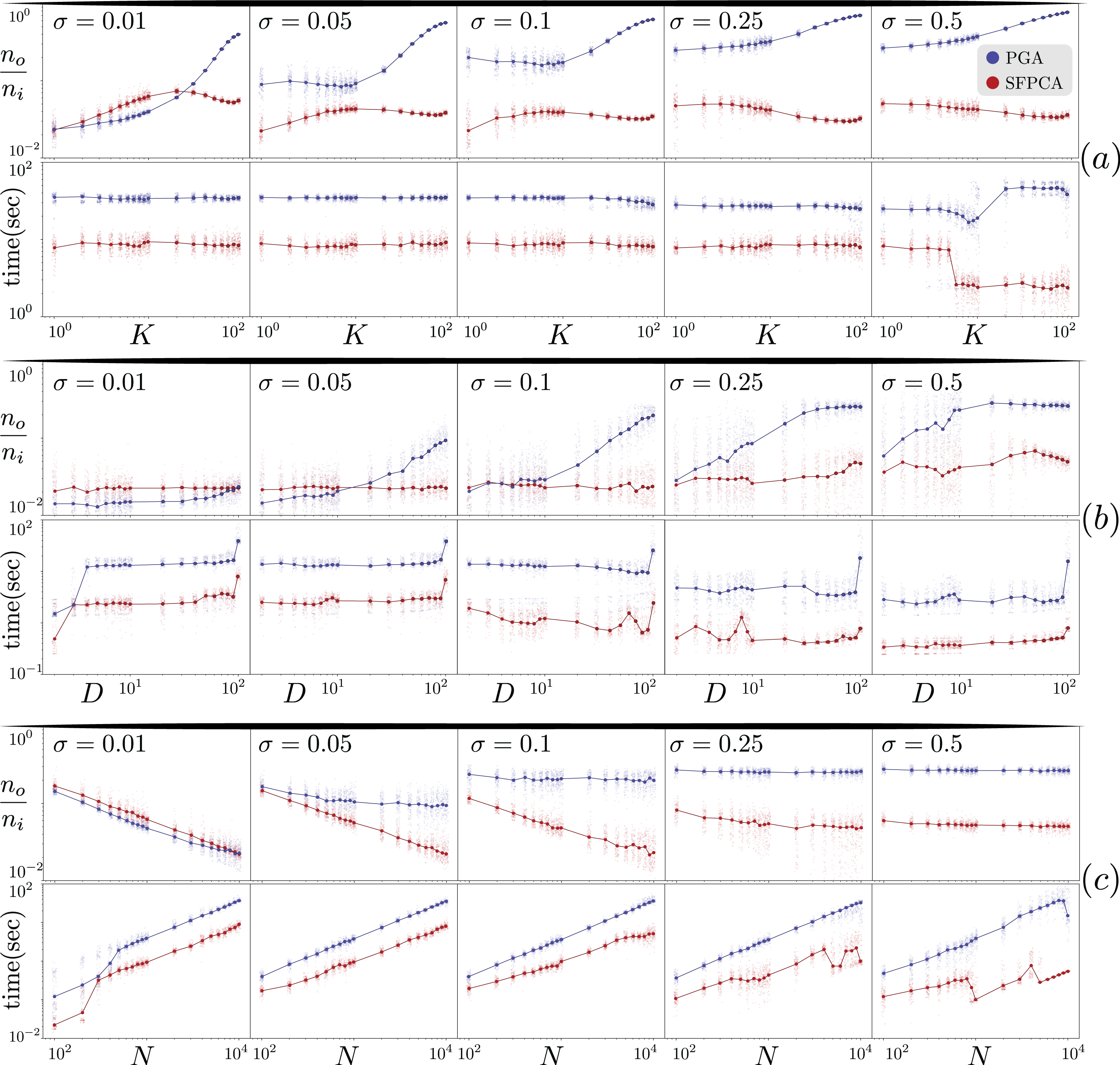}
\caption{For each full-scale hyperbolic experiment, on the y-axes, we report running time and normalized output error in logarithmic scale. A dot corresponds to a random trial, and circles show the median. Figures in rows $(a), (b)$, and $(c)$ are $\mathbb{H}(K_2), \mathbb{H}(D_2)$, and $\mathbb{H}(N_2)$. All axes are in logarithmic scale.
}\label{fig:hpca_experiment_large}
\end{figure*} 
\subsection{Hyperbolic PCA}\label{sec:exp-hyp}
\subsubsection{Experiments $\mathbb{H}(K_1)$ and $\mathbb{H}(K_2)$}
On small datasets in $\mathbb{H}(K_1)$ ($D = 50$, $N = 51$), for each trial, HoroPCA and BSA take close to an hour whereas SFPCA and PGA take milliseconds; see \Cref{fig:hpca_experiment_full} $(a)$. Increasing $K$ only increases the running time of BSA and HoroPCA but does not change SFPCA's and PGA's. This is expected as they estimates an affine subspace greedily one dimension at a time. Regarding error reduction, as expected, all methods become less effective as $K$ grows. For small $\sigma$, all methods achieve similar normalized output error levels with only a slight advantage for PGA and SFPCA. As $\sigma$ increases, PGA and HoroPCA become less effective compared to BSA and SFPCA. For large $\sigma$, SFPCA exhibits a clear advantage over all other methods. In the larger $\mathbb{H}(K_2)$ experiments ($D = 10^2$, $N = 10^4$), we compare the two fastest methods, SFPCA and PGA; see \Cref{fig:hpca_experiment_large} $(a)$. When $\sigma$ is small, both methods have similar denoising performance for small $K$; SFPCA performs better only for larger $K$. As $\sigma$ increases, SFPCA outperforms PGA irrespective of $K$.

\subsubsection{Experiments $\mathbb{H}(D_1)$ and $\mathbb{H}(D_2)$}
In $\mathbb{H}(D_1)$, we fix $K = 1$, $N = 101$ and in $\mathbb{H}(D_2)$, we let $K = 1$, $N = 10^4$. Changing $D$ impacts each method differently; see \Cref{fig:hpca_experiment_full} $(b)$. Both SFPCA and PGA take successively more time as $D$ increases, but they remain significantly faster than the other two, with average running times below $0.1$ second. The running time of HoroPCA is (almost) agnostic to $D$ since its cost function (projected variance) is free of the parameter $D$. Neither HoroPCA nor BSA outperform SFPCA in error reduction. All methods improve in their error reduction performance as $D$ increases. For large $\sigma$, SFPCA provides the best error reduction performance among all algorithms. Comparing the fastest methods SFPCA and PGA, we observe consistent patterns in $\mathbb{H}(D_1)$ and $\mathbb{H}(D_2)$: (1) SFPCA is faster, regardless of $D$ and the gap between the two methods can be as high as a factor of $10$. (2) When $\sigma < 0.1$, PGA slightly outperforms SFPCA in reducing error; with the lowest noise ($\sigma = 0.01$), PGA gives $17\%$ better accuracy, in average over all values of $D$. However, as $\sigma$ increases, SFPCA becomes more effective; at the highest end ($\sigma = 0.5$), SFPCA outperforms PGA by $40\%$, in average over $D$; see \Cref{fig:hpca_experiment_large} $(b)$.
\subsubsection{Experiments $\mathbb{H}(N_1)$ and $\mathbb{H}(N_2)$}
In $\mathbb{H}(N_1)$, ($K=1$, $D=10$), increasing $N$ impacts the running time of SFPCA and PGA due to computing $C_x$; see \Cref{fig:hpca_experiment_full} $(c)$. Nevertheless, both are orders of magnitude faster than HoroPCA and BSA. All methods provide improved error reduction as $N$ increases. Comparing the fast methods SFPCA and PGA on larger datasets $\mathbb{H}(N_2)$ ($K=1$, $D=10^2$) shows that SFPCA is always faster, has a slight disadvantage in output error on small $\sigma$, and substantial improvements on large $\sigma$; see \Cref{fig:hpca_experiment_large} $(c)$.
\subsection{Real Data: Spherical Spaces}
We evaluate the performance of PCA methods using following datasets: (1) \textsc{Intestinal Microbiome}: Lahti et al. \cite{lahti2014tipping} analyzed the gut microbiota of $N = 1,006$ adults covering $D=130$ bacterial groups. The study explored the effects of age, nationality, BMI, and DNA extraction methods on the microbiome. They assessed variations in microbiome compositions across young ($18$–$40$), middle-aged ($41$–$60$), and older ($61$–$77$) age groups (a ternary classification problem). (2) \textsc{Throat Microbiome}: Charlson et al. \cite{charlson2010disordered} investigated the impact of cigarette smoking on the airway microbiota in $29$ smokers and $33$ non-smokers (a binary classification problem) using culture-independent $454$ pyrosequencing of 16S rRNA. (3) \textsc{Newsgroups}: Using Python's $\texttt{scikit-learn}$ package, the $20$ newsgroups dataset was streamlined to a binary classification problem by retaining $N=400$ samples from two distinct classes. Feature reduction was performed using TF-IDF, narrowing it down to $D= 3000$ features to improve computational efficiency. Each dataset has undergone standard preprocessing, e.g., normalization and square-root transformation, to ensure the data points are spherically distributed. For a fixed subspace dimension, we estimate spherical affine subspaces. Then, we compute the projected spherical data points and denoise the original compositional data.

\emph{Distortion Analysis.} For compositional data, we calculate distance matrices using Aitchison ($\mathbb{AI}$), Jensen--Shannon ($\mathbb{JS}$), and total variation ($\mathbb{TV}$) metrics. We also compute the spherical distance matrix ($\mathbb{S}$). For each embedding dimension $K$, we compute projected point sets. We then compute the normalized errors; an example of normalized error is $\frac{\| D_{\mathbb{TV}} - \widehat{D}_{\mathbb{TV}} \|_F}{\| D_{\mathbb{TV}} \|_F}$ where $D_{\mathbb{TV}}$ and $\widehat{D}_{\mathbb{TV}}$ are total variation distance matrices for the original and estimated data. For each algorithm, we then divide these normalized errors by their average across all algorithms, providing relative measures, that is, if the resulting relative error is greater than $1$, the algorithm performs worse than average. We then report the mean and standard deviation of relative errors across different dimensions; see  \Cref{tab:spherical_table}. On all datasets, SFPCA outperforms the rest. \textsc{Newsgroups} experiments are limited to SFPCA and PGA due to the significant computational complexity of SPCA and RFPCA.

\emph{Classification Performance.} For each $K$, using the denoised compositional data, we perform two classifiers: a five-layer neural network ($\mathbb{NN}$) and a random forest model ($\mathbb{RF}$). We normalize and report the mean and standard deviation of classification accuracies by the average accuracy of all methods. From \Cref{tab:spherical_table}, SFPCA outperforms competing methods on \textsc{Intestinal Microbiome} and \textsc{Newsgroups}, though the accuracy differences are mostly less than one percent. In \Cref{sec:more_exps}, we further compare the performance of the two classifiers on \textsc{Newsgroups} as it relates to PCA analysis.

\begin{table*}[t]
    \centering
    \renewcommand{\arraystretch}{1.5}
    \setlength\tabcolsep{2pt}
    \caption{The mean and standard deviation of normalized distance errors are divided by their average across methods. Classification accuracies are percentage deviations from $100\%$ --- representing the average accuracy across methods. Boldface and red indicate SFPCA and the top-performing method. Lower distortions $(\downarrow)$ and higher accuracies $(\uparrow)$ are better.}
    \begin{tabular}{|c|cccc|cccc|cc|}
\hline
\multirow{3}{*}[0pt]{\raisebox{4ex}{\parbox{1cm}{\centering Metric \\ (Method)}}} & \multicolumn{4}{c|}{\textsc{Throat Microbiome}} & %
    \multicolumn{4}{c|}{\textsc{Intestinal Microbiome}} &\multicolumn{2}{c|}{\textsc{Newsgroups}}\\
\cline{2-11}
 & SFPCA & RFPCA & SPCA & PGA &SFPCA & RFPCA & SPCA & PGA & SFPCA & PGA \\
\hline
\hline
$\mathbb{S}(\downarrow)$ & \textcolor{darkred}{$\boldsymbol{0.88 \pm 0.99}$} & $1.13 \pm 1.28$ & $0.9 \pm 0.99$ & $1.1 \pm 1.18$ &\textcolor{darkred}{$\boldsymbol{0.75 \pm 1.64}$} & $0.93 \pm 2.09$ & $1.47 \pm 2.07$& $0.85 \pm 1.79$&  \textcolor{darkred}{$\boldsymbol{0.77 \pm 1.03}$} &  $1.23 \pm 1.4$\\
\hline 
\hline
 $\mathbb{AI}(\downarrow)$ & \textcolor{darkred}{$\boldsymbol{0.98 \pm 0.48}$} & $1.03 \pm 0.57$ &$0.99 \pm 0.46$ & $1.01 \pm 0.52$ & \textcolor{darkred}{$\boldsymbol{0.8 \pm 1.06}$} & $0.81 \pm 1.03$ & $1.55 \pm 1.04$ & $0.84 \pm 1.14$ & \textcolor{darkred}{$\boldsymbol{0.999 \pm 0.4}$} &  $1.001 \pm 0.5$ \\
\hline
$\mathbb{JS}(\downarrow)$ & \textcolor{darkred}{$\boldsymbol{0.91 \pm 0.84}$} & $1.08 \pm 1.04$ & $0.95 \pm 0.84$ & $1.06 \pm 0.97$ & \textcolor{darkred}{$\boldsymbol{0.75 \pm 1.55}$} & $0.93 \pm 1.96$& $1.47 \pm 1.93$& $0.85 \pm 1.7$& \textcolor{darkred}{$\boldsymbol{0.9 \pm 0.81}$} &   $1.1 \pm 0.98$ \\
\hline
$\mathbb{TV}(\downarrow)$ & \textcolor{darkred}{$\boldsymbol{0.91 \pm 1.0}$} & $1.08 \pm 1.21$ & $0.94 \pm 1.0$ & $1.07 \pm 1.14$ & \textcolor{darkred}{$\boldsymbol{0.76 \pm 1.65}$} & $0.92 \pm 2.09$ & $1.49 \pm 2.07$& $0.83 \pm 1.75$ & \textcolor{darkred}{$\boldsymbol{0.87 \pm 0.88}$} &  $1.13 \pm 1.12$ \\
\hline \hline
$(\mathbb{NN})(\uparrow)$ & $\boldsymbol{-0.06 \pm 3.03}$  & $0.15 \pm 3.15$  & $-0.26 \pm 3.7$ & \textcolor{darkred}{$0.17 \pm 3.0$} &\textcolor{darkred}{$\boldsymbol{0.04 \pm 0.36}$} & $-0.05 \pm 0.9$ & $-0.02 \pm 0.6$ & $0.03 \pm 0.3$ &  \textcolor{darkred}{$\boldsymbol{0.04 \pm 1.6}$} &  $-0.04 \pm 1.6$ \\
\hline
$(\mathbb{RF})(\uparrow)$ & $\boldsymbol{0.59 \pm 9.5}$ & \textcolor{darkred}{$1.2 \pm 9.6$} & $-1.9 \pm 9.3$ &  $0.11 \pm 9.8$ &\textcolor{darkred}{$\boldsymbol{0.4 \pm 2.3}$} & $0.11 \pm 2.7$ & $-0.74 \pm 2.8$ & $0.2 \pm 2.3$ &  \textcolor{darkred}{$\boldsymbol{0.3 \pm 3.5}$} &  $-0.3 \pm 4.4$ \\
\hline
\end{tabular}
    \label{tab:spherical_table}
\end{table*}
\subsection{Real Data: Hyperbolic Spaces}
We use a biological dataset of $103$ plant and algal transcriptomes \cite{1kp-pilot}. The authors inferred phylogenetic trees from genome-wide genes. Tree leaves are present-day species, internal nodes are ancestral species, and branch weights represent evolutionary distances. The dataset includes an \say{unfiltered} version with $852$ trees and a \say{filtered} version with $844$ trees after removing error-prone genes and filtering problematic sequences. Errors appear as outliers, with more expected in the unfiltered dataset {\em a priori}. Other studies \cite{TreeShrink,PhylteR} have used this dataset to evaluate outlier detection methods. We preprocess each tree by rescaling branch lengths to a diameter of $10$, compute the distance matrix between leaves, and embed it into a $D$-dimensional ($D = 20$) hyperbolic space using a semidefinite program~\cite{tabaghi2020hyperbolic}. We use two metrics to evaluate PCA results, and then apply them for outlier detection.

\emph{Distortion Analysis.} 
For a fixed dimension $K$, we estimate hyperbolic affine subspaces, compute the projected hyperbolic points, and their hyperbolic distance matrix $\widehat{D}_{\mathbb{H}}$. The normalized distance error $\frac{\|D_\mathbb{H}-\widehat{D}_\mathbb{H}\|_F}{\|D_\mathbb{H}\|_F}$ is calculated, where $D_\mathbb{H}$ is the original distance matrix. These errors are averaged over $K \in [D]$ for each algorithm and then divided by the average normalized errors across all algorithms --- providing relative errors. If the relative error is greater than $1$, the algorithm performs worse than average. For each algorithm, we report the mean and standard deviation of these relative errors across all gene trees. Distortion is not a perfect measure of PCA accuracy as highly noise-contaminated data should experience high distortions during the projection (denoising) step. In all experiments, PGA outperforms others in terms of distortion (\Cref{tab:hyperbolic_table}). SFPCA provides an average distance-preserving performance, contrary to synthetic experiments. We conjecture this may be due to the trees being relatively small (see scaled-down hyperbolic experiments in~\Cref{fig:hpca_experiment_full}), or to high noise levels making distortion an inappropriate accuracy metric, or the discordance between our choice of distortion function $f = \mathrm{cosh}$ (which overemphasizes large distances) and the distance distortion metric.

\emph{Quartet Scores.} To use a biologically motivated accuracy measure, we use the quartet score \cite{firstquartet}. For a target dimension $K$, each algorithm is applied to an embedded hyperbolic point set to compute the projected (denoised) points. For each set of four points, we find the \emph{optimal} tree topology with minimum distance distortion using the four-point condition \cite{Warnow2017}. For $10^5$ randomly chosen (but fixed) sets of four projected points, we estimate their topology and compare it with the \emph{true topology} from the gene trees. For each dimension $K \in [D]$, we compute the percentage of correctly estimated topologies, then average this over all dimensions. We normalize and report the mean and standard deviation of quartet scores by the average score of all methods, as detailed in \Cref{tab:hyperbolic_table}. In these experiments, PGA and SFPCA exhibit the best performance compared to the alternatives. This is particularly informative as the quartet score measures tree topology accuracy, not distance.

\begin{table}[t]
    \centering
    \renewcommand{\arraystretch}{1.5}
    \setlength\tabcolsep{2pt}
    \caption{Normalized distance errors ($\mathbb{H}$) mean and standard deviation are divided by their average across methods. Quartet scores ($\mathbb{Q}$) are percentage deviations from $100\%$ (the average). Lower distortions $(\downarrow)$ and higher scores $(\uparrow)$ are better.}
    \begin{tabular}{|c|cc|cc|}
\hline
Method & \multicolumn{2}{c|}{\textsc{Filtered}} & %
    \multicolumn{2}{c|}{\textsc{Unfiltered}} \\
\cline{2-5}
 & $\mathbb{Q}(\uparrow)$ & $\mathbb{H}(\downarrow)$ & $\mathbb{Q}(\uparrow)$  & $\mathbb{H}(\downarrow)$   \\
\hline
\hline
SFPCA  & \textcolor{darkred}{$\boldsymbol{1.50 \pm 1.91}$} &   $\boldsymbol{0.98 \pm 0.23}$& $\boldsymbol{1.10 \pm 1.74}$ & $\boldsymbol{1.01 \pm 0.25}$ \\
\hline
PGA & $1.48 \pm 1.47$ & \textcolor{darkred}{$0.55 \pm 0.09$} & \textcolor{darkred}{$1.80 \pm 1.45$} & \textcolor{darkred}{$0.53 \pm 0.08$} \\
\hline
BSA & $1.48 \pm 1.61$ & $1.45 \pm 0.24$  & $1.67 \pm 1.91$ & $1.45 \pm 0.31$ \\
\hline
HoroPCA & $-4.48 \pm 2.79$  & $1.01 \pm 0.21$ & $-4.57 \pm 2.58$& $1.00 \pm 0.20$  \\
\hline 
\end{tabular}
    \label{tab:hyperbolic_table}
\end{table}

\subsubsection{Outlier Detection with Hyperbolic Spectrum}
\begin{figure*}[h]
\centering
\includegraphics[width=.9\textwidth]{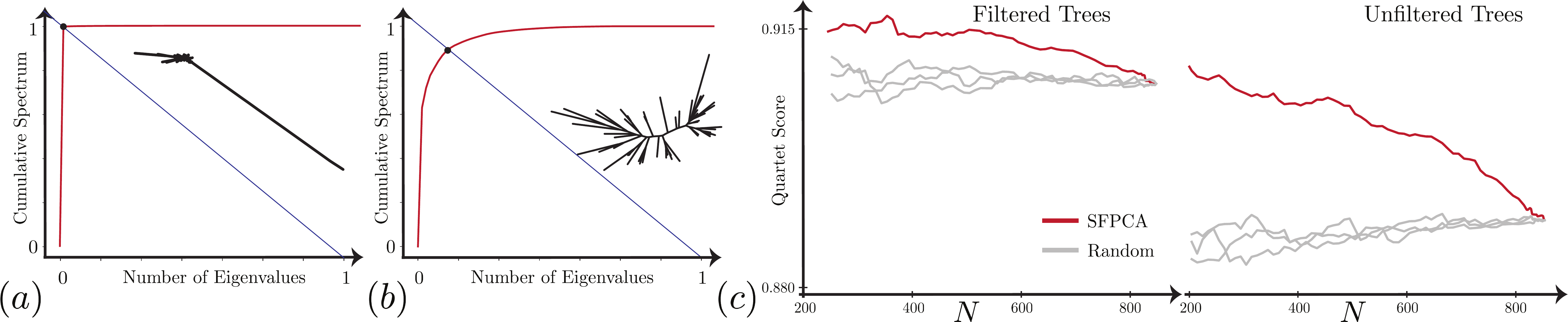}
\caption{$(a)$ and $(b)$: Normalized retained energy versus normalized number of eigenvalues for two gene trees. The knee point (intersection with $y = 1-x$ line) for trees with outliers approaches $(0,1)$. $(c)$ The quartet score for species trees constructed using the top $N$ trees (knee values) versus random orders.}
\label{fig:astral_results}
\end{figure*}
We showcase the practical utility of the $J_D$-eigenequation (\Cref{def:j_eigenequation}) in species tree estimation. As proved in \Cref{thm:hyperbolic_pca_problem}, the principal axes align with the leading $J_D$-eigenvectors of $C_x$. Thus, the optimal SFPCA cost corresponds to the sum of its neglected $J_D$-eigenvalues. We conjecture that a tree with outliers has more outlier $J_D$-eigenvalues; see \Cref{fig:astral_results} $(a,b)$. If a tree has an outlier set of species (likely from incorrect sequences), its second leading $J_D$-eigenvalue ($\lambda_2$)\footnote{The $J_D$-eigenvalue $\lambda_1$ corresponds to the base point.} is significantly larger than the rest. We quantify this by plotting its normalized retained energy $\frac{\sum_{k=2}^{K}\lambda_k}{\sum_{d=2}^{D} \lambda_d}$ (or cumulative spectrum) versus the normalized embedding dimension (or number of $J_D$-eigenvalues) $x = K/D$ and finding its the knee point. This lets us \emph{sort} gene trees by their hyperbolic spectrum.

After sorting, we use the top $N$ trees with the smallest knee values (least prone to outliers) to construct a species topology using ASTRAL \cite{astral3}. ASTRAL outputs the quartet score between the estimated species tree and the input gene trees, where a higher score indicates more congruence among input trees. Thus, a higher score after filtering means outlier gene trees, likely inferred from problematic sequences, have been removed. Our results (\Cref{fig:astral_results} $(c)$) show that hyperbolic spectrum-based sorting --- offered only by our SFPCA --- effectively identifies the worst trees most dissimilar to others, without explicitly comparing tree topologies. In contrast, random sorting keeps the quartet score fixed. Filtered trees have a higher score than unfiltered trees and benefit less from further filtering. It is remarkable that using eigenvalues alone, we can effectively find genes with discordant evolutionary histories.

\section*{Acknowledgment}
The authors would like to thank the National Institutes of Health (NIH) for their financial support. This research was partially funded by NIH Grant 1R35GM142725.
\section*{Appendix}
\setcounter{section}{0}
\renewcommand{\thesection}{\Alph{section}}
\subsubsection{\Cref{claim:geodesic_submanifold_spherical}}
    Let $x,y \in \mathbb{S}^D_{H} \subseteq \mathbb{S}^D$ and $\gamma_{x,y}(t)$ be the geodesic where $\gamma_{x,y}(0) = x$ and $ \gamma_{x,y}(1) = y$. This geodesic lies at the intersection of $\mathrm{span}\{x,y\}$ and $\mathbb{S}^D$. Since $x, y \in \mathbb{S}^D_H = p \oplus H \cap \mathbb{S}^D$ and $p \oplus H$ is a subspace, we have $\mathrm{span}\{x,y\} \subseteq p \oplus H$. Therefore, we have $\gamma_{x,y}(t) \in \mathrm{span}\{x,y\} \cap \mathbb{S}^D \subseteq p \oplus H \cap \mathbb{S}^D = \mathbb{S}^D_H$ for all $t \in [0, 1]$.
\subsubsection{\Cref{cl:spherical_alt}} 
    For a spherical affine subspace $\mathbb{S}^D_H$, we define the sliced unitary matrix $G=\begin{bmatrix} C^{\frac{1}{2}}p, C^{-\frac{1}{2}}h_1, \ldots, C^{-\frac{1}{2}}h_K  \end{bmatrix}$ where $p$ is the base point and $h_1, \ldots, h_{K}$ are orthogonal basis for $H$. Then, we have $Gy \in \mathbb{S}^D_H$ for all $y \in \mathbb{S}^K$. Conversely, for any sliced unitary matrix $G =\begin{bmatrix} g_0, g_1, \ldots, g_K  \end{bmatrix}$, we let $p = C^{-\frac{1}{2}} g_0$ and $h_k = C^{\frac{1}{2}}g_k$ for $k \in [K]$. Since $h_k$'s and $p$ are orthogonal, we can define the spherical affine subspace.
\subsubsection{\Cref{prob:spherical_pca_problem}}
    We simplify its PCA cost as $\mathrm{cost}(\mathbb{S}^D_{H}| \mathcal{X}) = 1 - \mathbb{E}_N \big[  \|P_{H}(x_n) \|_2^2 \big] \stackrel{\text{(a)}}{=}  \mathbb{E}_N [\sum_{k \in [K^{\prime}]} \langle x_n,h^{\prime}_{k} \rangle^2]  \stackrel{\text{(b)}}{=} \sum_{k \in [K^{\prime}]}  {h^{\prime}_{k}}^{\top} C_x h^{\prime}_{k}$ where $\text{(a)}$ follows from \Cref{prop:projection_and_distance_spherical}, $h_1^{\prime}, \ldots, h_{K^{\prime}}^{\prime}$ are orthogonal basis for $H^{\perp} \subseteq T_p \mathbb{S}^D$, and $\text{(b)}$ follows from cyclic property of trace. 
\subsubsection{\Cref{cl:spherical_is_proper}}
    From \Cref{cor:spherical_pca_problem} and \Cref{def:spherical_mean}, an optimal zero-dimensional affine subspace (a point) is a subset of any other spherical affine subspace. In general, $\mathbb{S}^{D}_{H_1}  \subseteq \mathbb{S}^{D}_{H_2} \ \ \mbox{if and only if} \ \ \mathrm{dim}(\mathbb{S}^{D}_{H_1}) \leq \mathrm{dim}(\mathbb{S}^{D}_{H_2})$.
\subsubsection{\Cref{claim:eigenvector_equiv}}
    Let $v \in \mathbb{C}^{D+1}$ be such that $AJ_D v = \mathrm{sgn}([v^*,v]) \lambda v$ where $|[v^*,v]| = 1$, then $\| v \|_2^{-1} v$ is an eigenvector of $AJ_D$ with eigenvalue of $\mathrm{sgn}([v^*,v]) \lambda$. 
\subsubsection{\Cref{claim:geodesic_subhyperbolic}}
    Let $x,y \in \mathbb{H}^D_{H}$ and $\gamma_{x,y}$ be the geodesic where $\gamma_{x,y}(0) = x, \gamma_{x,y}(1) =y$. This geodesic belongs to $\mathrm{span}\{x,y\} \cap \mathbb{H}^D$. We have $\mathrm{span} \{ x,y \} \subseteq p \oplus H$ since $x, y \in \mathbb{H}^D_H =  p \oplus H \cap \mathbb{H}^D$ and $p \oplus H$ is a subspace. Thus, we have $\gamma_{x,y}(t) \in \mathrm{span}\{x,y\} \cap \mathbb{H}^D \subseteq \mathbb{H}^D_H$ for all $t \in [0, 1]$.
\subsubsection{\Cref{claim:hyperbolic_alt}}
    For a hyperbolic affine subspace $\mathbb{H}^D_H$, we define $G=\begin{bmatrix} |C|^{\frac{1}{2}}p, |C|^{-\frac{1}{2}}h_1, \ldots, |C|^{-\frac{1}{2}}h_K  \end{bmatrix}$ where $p$ is the base point and $h_1, \ldots, h_{K}$ are orthogonal basis for $H$. We have $G^{\top}J_K G = J_K$ and $Gy \in \mathbb{H}^D$ for all $y \in \mathbb{H}^D_H$. Conversely, for any sliced unitary matrix $G =\begin{bmatrix} g_0, g_1, \ldots, g_K  \end{bmatrix}$, we let $p =  |C|^{-\frac{1}{2}} g_0$ and $h_k = |C|^{-\frac{1}{2}}g_k$ for $k \in [K]$. Since $h_k$'s and $p$ are orthogonal, we can define the spherical affine subspace.
\subsubsection{\Cref{claim:chami}}
Let $p \in \mathbb{H}^D$, $q_1,\ldots, q_K \in \partial H^{D}$, and $\gamma_1, \ldots, \gamma_K$ be the aforementioned geodesics. A point $x \in S \bydef \mathrm{GH}(p, q_1, \ldots, q_K)$ belongs to a geodesic whose end points are $\gamma_k(t)$ and $\gamma_{k^{\prime}}(t^{\prime})$ for $t, t^{\prime} \in \mathbb{R}$ and $k,k^{\prime} \in [K]$. Let us show $x \in \mathbb{H}^{D}_H$ for a subspace $H \subseteq T_p \mathbb{H}^D$. From \Cref{claim:geodesic_subhyperbolic}, $\mathbb{H}^{D}_H$ is a geodesic submanifold. It suffices to show that $\gamma_1, \ldots, \gamma_K$ belong to $\mathbb{H}^{D}_H$. Let $h_k = \gamma^{\prime}_k(0) \in T_p \mathbb{H}^D$, for all $k \in [K]$, and $H = \mathrm{span}\{h_1, \ldots, h_K \}$. This proves $S \subseteq \mathbb{H}^D_H$. Conversely, let $x \in \mathbb{H}^D_H$ --- the hyperbolic affine subspace constructed as before. Since $\mathbb{H}^D_H$ is a geodesic submanifold, $x$ belongs to a geodesic whose end points are $\gamma_k(t)$ and $\gamma_{k^{\prime}}(t^{\prime})$ for $t, t^{\prime} \in \mathbb{R}$ and $k,k^{\prime} \in [K]$, constructed as before. From $x \in \mathrm{convhull}(\gamma_1, \ldots, \gamma_K)$, we have $\mathbb{H}^D_H \subseteq S$. 
\subsubsection{\Cref{cl:hyperbolic_is_proper}}
From \Cref{thm:hyperbolic_pca_problem} and \Cref{def:hyperbolic_mean}, a optimal zero-dimensional affine subspace is a subset of any other affine subspace. For optimal affine subspaces $\{ \mathbb{H}^{D}_{H_i} \}$, we have $\mathbb{H}^{D}_{H_1}  \subseteq \mathbb{H}^{D}_{H_2}$ if and only if $\mathrm{dim}(\mathbb{H}^{D}_{H_1}) \leq \mathrm{dim}(\mathbb{H}^{D}_{H_2})$.    

\subsubsection{\Cref{prop:projection_and_distance_spherical}}
Consider the following Lagrangian:
\begin{equation}\label{eq:lagrangian_spherical_projection}
    \mathcal{L}(y, \gamma, \Lambda)  = \langle x,y \rangle + \gamma ( \langle y,y \rangle - C^{-1})  + \sum_{k \in [K^{\prime}]} \lambda_{k} \langle y, h^{\prime}_{k} \rangle ,
\end{equation}
where $\Lambda \bydef \{ \lambda_{k}:  k\in [K^{\prime}] \}$, $\langle y,y \rangle =C^{-1}$ and $\langle y, h^{\prime}_{k} \rangle = 0$, $\forall k \in [K^{\prime}]$. The solution to equation \eqref{eq:lagrangian_spherical_projection} takes the form $\mathcal{P}_{H}(x) = \sum_{i \in [K^{\prime}]} \alpha_{i} h^{\prime}_{i} + \beta x$, for scalars $\{ \alpha_{i} \}_{i \in [K^{\prime}]}$ and $\beta$. The subspace conditions --- $\langle \mathcal{P}_{H}(x), h^{\prime}_k \rangle=0$, $\forall k \in [K^{\prime}]$ --- give $\mathcal{P}_{H}(x) = \beta P_H(x)$ where $P_H(x) = x- C^{-1}\sum_{k \in [K^{\prime}]} \langle x,h^{\prime}_{k} \rangle h^{\prime}_{k}$. Enforcing the norm condition, we arrive at $\mathcal{P}_{H}(x) = C^{-\frac{1}{2}}\|P_H(x) \|_2^{-1} P_H(x)$ where $\|P_H(x) \|_2^2 = C^{-1}(1 - \sum_{k \in [K^{\prime}]} \langle x,h^{\prime}_k \rangle^2)$. Then, we have $d(x, \mathcal{P}_{H}(x) ) = C^{-\frac{1}{2}} \mathrm{acos} \big(  C\langle x,C^{-\frac{1}{2}}\|P_H(x) \|_2^{-1} P_H(x) \big) = C^{-\frac{1}{2}}\mathrm{acos}\big(  C^{\frac{1}{2}} \| P_H(x)\|_2 \big)$. If $P_H(x) = 0$ --- i.e., $x \in \mathrm{span}(H^{\perp})$ --- then $\mathcal{P}_H(x) \in \mathbb{S}^D_H$ is nonunique; but the projection distance well-defined as $d(x,y) =C^{-\frac{1}{2}} \frac{\pi}{2}$, $\forall y \in \mathbb{S}_H^D$.
\subsubsection{\Cref{prop:projection_and_distance_spherical_parallel}}
Let $A = [Cp, h_1, \ldots, h_{K}, h_1^{\prime}, \ldots, h_{K^{\prime}}^{\prime}] \in \mathbb{R}^{(D+1) \times (D+1)}$ where $K+K^{\prime} = D$. Distinct columns of $A$ are orthogonal --- i.e., $A^{\top} A = C I_{D+1}$. Hence, $p, h_1, \ldots, h_{K}, h^{\prime}_1, \ldots, h^{\prime}_{K^{\prime}}$ are linearly independent. Therefore, we have $P_{H}(x) = \sum_{k \in [K]} \alpha_k h_k + \sum_{k \in [K^{\prime}]} \beta_{k} h^{\prime}_{k} + \gamma p \stackrel{\text{(a)}}{=} x - C^{-1}\sum_{k  \in [K^{\prime}]} \langle x, h^{\prime}_{k} \rangle h^{\prime}_{k}$ where $\{ \alpha_k \}_{k \in [K]}$, $\{ \beta_{k} \}_{k\in [K^{\prime}]}$, $\gamma$ are scalars, and $\text{(a)}$ is due to \Cref{prop:projection_and_distance_spherical}. We hence have $\beta_{k}  = C^{-1}\langle P(x), h^{\prime}_{k} \rangle = 0$, $\alpha_{k}  = C^{-1}\langle P(x), h_{k} \rangle = C^{-1}\langle x, h_{k} \rangle$ and $\gamma = C\langle P(x), p\rangle = C\langle x,p \rangle$. We can accordingly compute $\| P_{H}(x) \|_2$, and prove the proposition.
\subsubsection{\Cref{prop:real_j_eigenvalues}}
    Let $A$ be a real matrix such that $A = A^{[\top]}$, that is, $A = J_D A^{\top} J_D$. Let $(v,\lambda)$ be an eigenvector-eigenvalue pair of $A$. Then, we have $A J_D v =  \mathrm{sgn}\big([v^*,v]\big) \lambda v$ and $A J_D v^{*} =  \mathrm{sgn}\big([v^*,v]\big) \lambda^{*} v^{*}$. We have $\lambda^* = \lambda$ since
    \begin{align*}
        \lambda [v^*, v] &= (v^{*})^{\top} J_D A J_D v \ \mathrm{sgn}\big([v^*,v]\big) \\
        &=  (AJ_D v^{*})^{\top} J_D v \ \mathrm{sgn}\big([v^*,v]\big) \\
        &= \mathrm{sgn}\big([v^*,v]\big) \lambda^{*} v^{H} J_D v \ \mathrm{sgn}\big([v^*,v]\big) = \lambda^{*} [v^{*} ,v] .
    \end{align*}
\subsubsection{\Cref{prop:degenerate_j_eigenvectors}}
Let $A$ be a full-rank matrix and  $v \in \mathbb{C}^{1,D}$ such that $A J_D v = \lambda v$ and $[v^{*},v] = 0$. Then, we have $v^{H} J_D A J_D v = v^{H} J_D \lambda v = \lambda [v^*, v]= 0$. This contradicts with the assumption that $A$ is full-rank.
\subsubsection{\Cref{prop:projection_and_distance_hyperbolic}}
    Consider the following Lagrangian:
    \begin{equation}\label{eq:lagrangian_hyperbolic_projection}
        \mathcal{L}(y,\gamma, \Lambda)  = [x,y] + \gamma ( [y,y] - C^{-1})  + \sum_{k \in [K^{\prime}]} \lambda_{k} [y, h^{\prime}_{k}],
    \end{equation}
    where $\Lambda= \{ \lambda_{k} \}_{k \in [K^{\prime}]}$, admits the solution $\mathcal{P}_{H}(x) = \sum_{i \in [K^{\prime}]} \alpha_{i} h^{\prime}_{i} + \beta x$, for scalars $\{ \alpha_{i} \}_{i \in [K^{\prime}]}$ and $\beta$. The subspace conditions, $[ \mathcal{P}_{H}(x), h^{\prime}_k ]=0, \forall k \in [K^{\prime}]$, give  $\mathcal{P}_{H}(x) = \beta P_H(x)$ where $P_H(x) = x + C^{-1}\sum_{k \in [K^{\prime}]} [x,h^{\prime}_k]h^{\prime}_k$. Enforcing the norm condition, we get $\mathcal{P}_{H}(x) = |C|^{-\frac{1}{2}}\|P_H(x) \|^{-1} P_H(x)$, $\|P_H(x) \| \bydef \sqrt{-[P_H(x), P_H(x)]} = |C|^{-\frac{1}{2}}\sqrt{1 +\sum_{k \in [K^{\prime}]} [x,h^{\prime}_k]^2}$. We have $d(x, \mathcal{P}_{H}(x) ) = |C|^{-\frac{1}{2}} \mathrm{acosh} \big(  C[x,\mathcal{P}_{H}(x) ] \big) = |C|^{-\frac{1}{2}}\mathrm{acosh}\big( |C|^{\frac{1}{2}} \| P_H(x)\| \big)$.

\subsubsection{\Cref{prop:projection_and_distance_hyperbolic_parallel}}
    Let $A  = [Cp, h_1, \ldots, h_{K}, h_1^{\prime}, \ldots, h_{K^{\prime}}^{\prime}] \in \mathbb{R}^{(D+1) \times (D+1)}$ where $K+K^{\prime} = D$. Columns of $A$ are $J$-orthogonal, that is, $A^{\top} J_D A = |C|J_D$. Since we have $J_D A^{\top} J_D A_D = |C| I_{D+1}$, $p, h_1, \ldots, h_{K}, h^{\prime}_1, \ldots, h^{\prime}_{K^{\prime}}$ are linearly independent, we have $P_{H}(x) = \sum_{k \in [K]} \alpha_k h_k + \sum_{k \in [K^{\prime}]} \beta_{k} h^{\prime}_{k} + \gamma p \stackrel{\text{(a)}}{=} x +C^{-1} \sum_{k \in [K^{\prime}]} [x, h^{\prime}_{k}] h^{\prime}_{k}$ where $\{ \alpha_k \}_{k \in [K]}$, $\{ \beta_{k} \}_{k \in [K^{\prime}]}$, $\gamma$ are scalars, and $\text{(a)}$ is due \Cref{prop:projection_and_distance_hyperbolic}. So we have $\beta_{k}  = -C^{-1}[P(x), h^{\prime}_{k}] = 0$, $\alpha_{k}  = -C^{-1}[x, h_{k}]$, and $\gamma = C[P(x), p] = C[x,p]$, i.e., $P_{H}(x) = C[x,p]p-C^{-1} \sum_{k \in [K]}[x, h_{k}]h_k$. Then, we can compute $\| P_H(x)\|$ and prove the proposition.
\subsubsection{\Cref{prop:j_diagonalizable_matrices}}
Let $A =V \Lambda V^{\top}$ where $V$ is $J_D$-unitary. Since $\Lambda$ is a diagonal matrix, we let $A = V J_D \Lambda J_D V^{\top}$. We have $A J_D V = V J_D \Lambda J_D (V^{\top} J_D V) = V J_D \Lambda$, i.e., $A$ is $J_D$-diagonalizable. 

Let $A \in \mathbb{R}^{(D+1) \times (D+1)}$ be symmetric and $A J_D V = V J_D \Lambda$ for a $J_D$-invertible $V$ and $\Lambda = \mathrm{diag}(\lambda_d)_{d \in [D+1]}$ with distinct (in absolute values) elements. Then, we have
\begin{equation}\label{eq:eigen_equation_hyperbolic_2}
A J_D v_d = 
  \begin{cases}
    -\lambda_d v_d, & \text{ if } d = 1 \\
    \lambda_d v_d,  & \text{ if } d \neq 1,
  \end{cases}    
\end{equation}
where $v_d$ be the $d$-th column of $V$. In the eigenequation \eqref{eq:eigen_equation_hyperbolic_2}, 
the negative (positive) signs are designated for the eigenvectors with negative (positive) norms. For distinct $i,j$, we have
\begin{align*}
    \big| \lambda_i [v_i, v_j] \big| &\stackrel{\text{(a)}}{=} \big| [A J_D v_i, v_j] \big|  = \big|  v_i^{\top} J_D A^{\top} J_D v_j \big|  \\
    &= \big|   v_i^{\top} J_D A J_D v_j \big|  = \big|  [v_i,  A J_D v_j] \big|   =  \big|  \lambda_{j} [v_i,  v_j] \big|,
\end{align*}
where $\text{(a)}$ is due to the eigenequation \eqref{eq:eigen_equation_hyperbolic_2}. Since $|\lambda_i| \neq |\lambda_{j}|$, then we must have $[v_i,v_j] = 0$. Without loss of generality, $J_D$-eigenvectors are scaled such that $|[v_d,v_d] |= 1$ for $d \in [D+1]$. \Cref{lem:eigenvalue_norms} shows that $[v_1, v_1] = -1$ and $[v_d, v_d] = 1$ for $d > 1$.
\begin{lemma}\label{lem:eigenvalue_norms}
    $V^{\top} J_D V = J_D$.
\end{lemma}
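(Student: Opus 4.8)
The plan is to recognize $M := V^{\top} J_D V$ as a signature matrix and then pin it down via Sylvester's law of inertia. By the computation immediately preceding the lemma, $[v_i, v_j] = 0$ whenever $i \neq j$ (this used $|\lambda_i| \neq |\lambda_j|$ together with the symmetry of $A$), so $M$ is a diagonal matrix whose $d$-th entry is $[v_d, v_d]$.

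Next I would check that no diagonal entry of $M$ vanishes. Since $V$ is $J_D$-invertible it is invertible in the ordinary sense, so $\det M = (\det V)^2 \det J_D = -(\det V)^2 \neq 0$; being diagonal, $M$ therefore has $[v_d,v_d] \neq 0$ for every $d$. This both justifies the normalization $|[v_d,v_d]| = 1$ and shows that $M$ is diagonal with all entries in $\{+1,-1\}$. Note this route needs only that $V$ is invertible; it does not invoke \Cref{prop:degenerate_j_eigenvectors} or full-rankness of $A$.

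The key step is then that $M = V^{\top} J_D V$ is congruent --- via the real invertible matrix $V$ --- to $J_D = \mathrm{diag}(-1,1,\ldots,1)$, so by Sylvester's law of inertia $M$ has the same inertia as $J_D$: exactly one diagonal entry equal to $-1$ and the remaining $D$ equal to $+1$. Ordering the $J_D$-eigenpairs so that the unique negative-norm eigenvector is the first column of $V$ --- the same convention already fixed in the eigenequation \eqref{eq:eigen_equation_hyperbolic_2}, where the minus sign stands in front of $v_1$ --- yields $[v_1,v_1] = -1$ and $[v_d,v_d] = 1$ for $d > 1$, i.e. $V^{\top} J_D V = J_D$. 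The argument is essentially routine once off-diagonal vanishing is in hand; the only points needing care are the non-vanishing of the diagonal entries (so that $M$ really is a genuine signature matrix and Sylvester's law applies cleanly) and checking that the ordering convention for $v_1$ is consistent with \eqref{eq:eigen_equation_hyperbolic_2}.
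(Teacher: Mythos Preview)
Your argument is correct and takes a genuinely different route from the paper's proof. The paper argues indirectly: it forms the auxiliary symmetric matrix $B=(AJ_D)^\top J_D(AJ_D)$, shows $V^\top B V=\Lambda^2\Lambda'$ with $\Lambda'=V^\top J_D V$, and then appeals to an external reference for the fact that $B$ has exactly one negative eigenvalue, whence $\Lambda'$ (after the same WLOG reordering you invoke) equals $J_D$. By contrast, you apply Sylvester's law of inertia directly to the congruence $M=V^\top J_D V$ with $J_D$, which is shorter, self-contained (no external citation needed), and does not rely on the $\lambda_d$ being nonzero --- a hypothesis the paper's route implicitly uses to pass from the signature of $\Lambda^2\Lambda'$ to that of $\Lambda'$. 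One small clarification: the minus sign in front of $v_1$ in \eqref{eq:eigen_equation_hyperbolic_2} comes from expanding $AJ_DV=VJ_D\Lambda$ (the $-1$ in $J_D$), not yet from $[v_1,v_1]<0$; the reordering step you describe is exactly what reconciles the two, and it does preserve $J_D$-diagonalizability (swap $v_1\leftrightarrow v_k$ and set $\lambda_1'=-\lambda_k$, $\lambda_k'=-\lambda_1$), so the ``WLOG'' is justified on both sides.
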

\begin{proof}
Let $A$ where $A J_D V =V J_D \Lambda$ and $V$ $J_D$-diagonalizes $A^{\top} J_D A$, viz., $V^{\top} J_D (A^{\top} J_D A) J_D V \Lambda \stackrel{\text{(a)}}{=} (V J_D \Lambda)^{\top} J_D (V J_D \Lambda) \stackrel{\text{(b)}}{=} \Lambda J_D \Lambda^{\prime} J_D \Lambda = \Lambda^2 \Lambda^{\prime}$, where $\text{(a)}$ follows from $A J_D V =V J_D \Lambda$,  $\text{(b)}$ from $[v_i,v_j] = 0$ for $i \neq j$, and $\Lambda^{\prime}$ is a diagonal matrix. This shows that $V$ diagonalizes $B \bydef (A J_D)^{\top} J_D (A J_D)$, i.e.,  $V^{\top} (A J_D)^{\top} J_D (A J_D) V$ is a diagonal matrix. However, $B$ is a symmetric matrix with only one negative eigenvalue~\cite{tabaghi2020hyperbolic}. Therefore, without loss of generality, the first diagonal element of $\Lambda^{\prime}$ is negative.
\end{proof}

From \Cref{lem:eigenvalue_norms}, we have $V^{-1} = J_D V^{\top} J_D$ and $A = A J_D V (J_D V)^{-1} = V J_D \Lambda V^{-1} J_D 
    = V J_D \Lambda  (J_D V^{\top} J_D) J_D= V J_D \Lambda J_D V^{\top} = V \Lambda V^{\top}$.

\subsubsection{\Cref{thm:spherical_affine_subspace_isometry}}
Consider $\mathbb{S}^D_H$ with orthogonal tangents $h_1, \ldots, h_{K}$. For $x \in \mathbb{S}^D_H$, we have $P_H(x) = x$, $\| P_H(x) \|_2^2 = C^{-1}$, and $ \| \mathcal{Q}(x) \|_2^2 = C^{-1}$, i.e., $\mathcal{Q}(x) \in \mathbb{S}^K$ and $\mathcal{Q}$ is a map between $\mathbb{S}_H^D$ to $\mathbb{S}^K$; see proof of \Cref{prop:projection_and_distance_spherical_parallel}. We also have $x = P_H(x) = C\langle x,p \rangle p + C^{-1}\sum_{k \in [K]} \langle x,h_k \rangle h_k = \mathcal{Q}^{-1} \circ \mathcal{Q} (x)$ for all $x \in \mathbb{S}^D_H$. Hence, $\mathcal{Q}^{-1}$ is the inverse map of $\mathcal{Q}$ --- a bijection. Finally, $\mathcal{Q}$ is an isometry between $\mathbb{S}_H^D$ and $\mathbb{S}^K$ since $d(x_1,x_2) = d(\mathcal{Q}(x_1),\mathcal{Q}(x_2))$ for all $x_1, x_2 \in \mathbb{S}^D_H$.
\subsubsection{\Cref{thm:spherical_pca_problem}}
We have $\sum_{k \in [K^{\prime}]}  {h^{\prime}_{k}}^{\top} C_x h^{\prime}_{k} \geq \sum_{k \in [K^{\prime}]} C \lambda_{D+1-k}(C_x)$ where $\lambda_d(C_x)$ is the $d$-the largest eigenvalue of $C_x$. We achieve the lower bound if we let $h^{\prime}_{k} = C^{\frac{1}{2}}v_{D+1-k}(C_x)$ for $k \in [K^{\prime}]$. The optimal base point is any vector in $\mathrm{span}\{ h^{\prime}_1, \ldots, h^{\prime}_{K^{\prime}}\}^{\perp}$ with norm $C^{-\frac{1}{2}}$, i.e., $p = C^{-\frac{1}{2}}v_1(C_x)$ allows for nested affine subspaces.

\subsubsection{\Cref{theorem:hyperbolic_affine_subspace_isometry}}
Consider $\mathbb{H}^D_H$ with orthogonal tangents $h_1, \ldots, h_{K}$. For $x \in \mathbb{H}^D_H$, we have $P_H(x) = x$, $[ P_H(x),P_H(x) ] = C^{-1}$, and $[ \mathcal{Q}(x),\mathcal{Q}(x)] = C^{-1}$, i.e., $\mathcal{Q}(x) \in \mathbb{H}^K$ and $\mathcal{Q}$ is a map between $\mathbb{H}_H^D$ to $\mathbb{H}^K$; see proof of \Cref{prop:projection_and_distance_hyperbolic_parallel}. We also have $x = P_H(x) = C[ x,p ] p - C^{-1}\sum_{k \in [K]} [ x,h_k ] h_k = \mathcal{Q}^{-1} \circ \mathcal{Q} (x)$ for all $x \in \mathbb{S}^D_H$. Hence, $\mathcal{Q}^{-1}$ is the inverse map of $\mathcal{Q}$ --- a bijection. Finally, $\mathcal{Q}$ is an isometry between $\mathbb{H}_H^D$ and $\mathbb{S}^K$ since $d(x_1,x_2) = d(\mathcal{Q}(x_1),\mathcal{Q}(x_2))$ for all $x_1, x_2 \in \mathbb{H}^D_H$.

\subsubsection{\Cref{thm:hyperbolic_pca_problem}}
 WLOG, we scale $\{h^{\prime}_{k^{\prime}}\}_{k^{\prime}}$ and $H^{\prime} \bydef [h^{\prime}_1, \ldots, h^{\prime}_{K^{\prime}}] \in \mathbb{R}^{(D+1) \times K^{\prime}}$ such that  $[h^{\prime}_{i}, h^{\prime}_{j}] = \delta_{i,j}$, i.e., ${H^{\prime}}^{\top} J_D H^{\prime} = I_{K^{\prime}}$. The cost is:
\begin{align*}
 &\mathrm{cost}(\mathbb{H}^D_H| \mathcal{X})=\sum_{k \in [K^{\prime}]}  {h^{\prime}_k}^{\top} J_D C_x J_D h^{\prime}_k \\
 &=\mathrm{Tr} \{ {H^{\prime}}^{\top} J_D C_x J_D H^{\prime} \} = \mathrm{Tr} \{ {H^{\prime}}^{\top} J_D V \Lambda V^{\top} J_D H^{\prime} \} \\
 &= \mathrm{Tr} \{ W^{\top} \Lambda W \} \stackrel{(\text{a})}{=}\mathrm{Tr} \{ W W^{\top} J_D \Lambda J_D \} = \mathrm{Tr} \{ \mathcal{W}  \Lambda J_D  \},
\end{align*}    
where $C_x = V \Lambda V^{\top}$ is $J_D$-diagonalizable, $W = V^{\top} J_D H^{\prime} \in \mathbb{R}^{(D+1) \times K^{\prime}}$, $\text{(a)}$ follows from $\Lambda$ being a diagonal matrix, i.e., $\Lambda = J_D \Lambda J_D$, and $\mathcal{W} \bydef  W W^{\top} J_D$.
\begin{lemma}\label{lem:F_trace}
$W^{\top} J_D W  = I_{K^{\prime}}$. 
\end{lemma}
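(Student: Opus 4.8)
The plan is to substitute $W = V^{\top} J_D H'$ into $W^{\top} J_D W$ and collapse the result using the $J_D$-unitarity of $V$ established in \Cref{lem:eigenvalue_norms}. Writing everything out and using $J_D^{\top} = J_D$,
\[
W^{\top} J_D W = (V^{\top} J_D H')^{\top} J_D (V^{\top} J_D H') = {H'}^{\top} J_D \,(V J_D V^{\top})\, J_D H'.
\]
So the whole statement reduces to the ``dual'' identity $V J_D V^{\top} = J_D$ together with the normalization ${H'}^{\top} J_D H' = I_{K'}$ --- the latter being exactly the WLOG scaling of $h'_1,\dots,h'_{K'}$ introduced just above the lemma in the proof of \Cref{thm:hyperbolic_pca_problem}.

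The one step that needs care is deriving $V J_D V^{\top} = J_D$ from $V^{\top} J_D V = J_D$. Here I would use that $V$ is genuinely invertible --- it is $J_D$-invertible, being a $J_D$-diagonalizing matrix of $C_x$ --- and that $J_D^2 = I_{D+1}$, hence $J_D^{-1} = J_D$. From $V^{\top} J_D V = J_D$, right-multiplying by $V^{-1}$ and then by $J_D$ gives $V^{\top} = J_D V^{-1} J_D$; therefore
\[
V J_D V^{\top} = V J_D\,(J_D V^{-1} J_D) = V (J_D^2) V^{-1} J_D = V V^{-1} J_D = J_D .
\]
Equivalently, the real $J_D$-unitary matrices form a group that is closed under transposition; either way, the argument relies only on $\det V \neq 0$.

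Plugging $V J_D V^{\top} = J_D$ back into the first display and using $J_D^2 = I_{D+1}$ once more,
\[
W^{\top} J_D W = {H'}^{\top} J_D J_D J_D H' = {H'}^{\top} J_D H' = I_{K'} .
\]
Everything past the identity $V J_D V^{\top} = J_D$ is then pure bookkeeping with $J_D^2 = I_{D+1}$; that identity, which rests on the honest invertibility of $V$, is the only place where something subtle happens, and I would flag it explicitly in the write-up.
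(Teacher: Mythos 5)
Your proof is correct and follows essentially the same route as the paper: substitute $W = V^{\top} J_D H'$, reduce to $V J_D V^{\top} = J_D$, derive that from $V^{\top} J_D V = J_D$ (Lemma \ref{lem:eigenvalue_norms}) via invertibility of $V$ and $J_D^2 = I_{D+1}$, and finish with the normalization ${H'}^{\top} J_D H' = I_{K'}$. The paper's manipulation ($J_D V^{\top} J_D = V^{-1}$, hence $V J_D V^{\top} = J_D$) is the same computation you perform, just with fewer words; flagging the invertibility step explicitly, as you do, is a reasonable stylistic choice but not a new ingredient.
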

\begin{proof}
$ W^{\top} J_D W  = {H^{\prime}}^{\top} J_D  (V J_D V^{\top}) J_D  H^{\prime} \stackrel{\text{(a)}}{=} {H^{\prime}}^{\top} J_D  H^{\prime}  = I_{K^{\prime}}$,
where $\text{(a)}$ follows from $V J_D V^{\top} = J_D$. This is the case, since by definition, we have $V^{\top} J_D V = J_D$, or $J_D V^{\top} J_D V_D = I_{D+1}$, that is, $J_D V^{\top}J_D= V^{-1}$. Hence, we have $V J_D V^{\top} = J_D$. Finally, ${H^{\prime}}^{\top} J_D  H^{\prime}  = I_{K^{\prime}}$ is the direct result of the orthogonality of basis vectors $h^{\prime}_1, \ldots, h^{\prime}_{K^{\prime}}$.
\end{proof}
We write the cost function as follows:
\[
    \mathrm{cost}(\mathbb{H}^D_H| \mathcal{X})= \mathrm{Tr} \{ \mathcal{W}  \Lambda J_D  \} = -\mathcal{W}_{11} \lambda_1 + \sum_{d=2}^{D+1} \mathcal{W}_{d,d} \lambda_d
\]
where $\sum_{d = 1}^{D+1} \mathcal{W}_{d,d} = \mathrm{Tr} \{ W^{\top} J_D W \} =  \mathrm{Tr} \{I_{K^{\prime}} \}$, i.e., $\mathcal{W}_{11} = -\sum_{d = 2}^{D+1} \mathcal{W}_{d,d} + K^{\prime}$. Let $W \in \mathbb{R}^{(D+1) \times K^{\prime}}$ be as follows:
\begin{equation}\label{eq:V_parameterized}
    W = \begin{bmatrix}
\sqrt{\|w_1 \|_2^2-1} & \ldots & \sqrt{\| w_{K^{\prime}}\|_2^2-1}\\
w_1 & \ldots  & w_{K^{\prime}}
\end{bmatrix},
\end{equation}
for vectors $w_1, \ldots, w_{K^{\prime}} \in \mathbb{R}^{D}$ with $\ell_2$ norms greater or equal to $1$ --- notice $W^{\top} J_D W  = I_{K^{\prime}}$ and \Cref{lem:F_trace}. From $\mathcal{W} =  W W^{\top}J_D$ and equation \eqref{eq:V_parameterized}, we have:
\[
    \mathcal{W}_{11} = -1 \sum_{k \in [K^{\prime}]} (\| w_k\|^2-1) \leq 0: \| w_k\|_2 \geq 1, \forall k \in [K^{\prime}].
\]
For $d \geq 2$, $\mathcal{W}_{d,d}$ is the squared norm of the $d$-th row of $W$. Let $W_c \in \mathbb{R}^{D \times K^{\prime}}$ where its $d$-th row is equal to the $(d+1)$-th row of $W$. Therefore, we have
\begin{align}\label{eq:W_ii}
 \sum_{d = 2}^{D+1} \mathcal{W}_{d,d} &=\mathrm{Tr} \{ W_c^{\top} W_c  \} = \sum_{k=1}^{K^{\prime}}  \| w_k \|_2^2.
\end{align}
Let us now simplify the cost function as follows:
\begin{align}
\mathrm{cost}(\mathbb{H}^D_H) &= \sum_{d = 2}^{D+1} \mathcal{W}_{d,d} (\lambda_1 + \lambda_d) - K^{\prime}D \lambda_1 \label{eq:cost_in_wii}.        
\end{align}
\begin{lemma}\label{lem:eigenvalues_are_positive}
For all $d \geq 2$, we have $\lambda_d + \lambda_1 \geq 0$.
\end{lemma}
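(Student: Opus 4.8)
The plan is to prove the stronger statement that \emph{every} $J_D$-eigenvalue $\lambda_d$ of $C_x$ is nonnegative; the claim $\lambda_1 + \lambda_d \ge 0$ then follows immediately. The one ingredient beyond the algebra already assembled in the proof of \Cref{thm:hyperbolic_pca_problem} is positive semidefiniteness of $C_x$ in the ordinary (Euclidean) sense: since $C_x = \mathbb{E}_N[x_n x_n^{\top}]$ is an average of the rank-one matrices $x_n x_n^{\top} \succeq 0$, we have $u^{\top} C_x u \ge 0$ for every $u \in \mathbb{R}^{D+1}$.

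Next I would extract $\lambda_d$ as a quadratic form in $C_x$. Recall from \Cref{lem:eigenvalue_norms} that $V$ satisfies $V^{\top} J_D V = J_D$, so the $d$-th column $v_d$ of $V$ has $[v_d, v_d] = (J_D)_{d,d} \in \{-1, +1\}$; in particular $|[v_d,v_d]| = 1$ and $\mathrm{sgn}([v_d,v_d]) = (J_D)_{d,d}$. Substituting this into the $J_D$-eigenequation $C_x J_D v_d = \mathrm{sgn}([v_d,v_d])\,\lambda_d\, v_d$ of \Cref{def:j_eigenequation} and left-multiplying by $v_d^{\top} J_D$ gives
\[
v_d^{\top} J_D\, C_x\, J_D v_d = (J_D)_{d,d}\, \lambda_d\, v_d^{\top} J_D v_d = (J_D)_{d,d}^{2}\, \lambda_d = \lambda_d .
\]
Writing $u = J_D v_d$, the left-hand side equals $u^{\top} C_x u \ge 0$ by the previous paragraph, so $\lambda_d \ge 0$ for every $d \in [D+1]$. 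Hence $\lambda_1 \ge 0$ and $\lambda_d \ge 0$, and therefore $\lambda_1 + \lambda_d \ge 0$ for all $d \ge 2$, which is the assertion.

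I do not anticipate a genuine obstacle here; the only point needing care is bookkeeping the sign convention of \Cref{def:j_eigenequation} together with the normalization $|[v_d,v_d]| = 1$ guaranteed by \Cref{lem:eigenvalue_norms}, so that the factor $\mathrm{sgn}([v_d,v_d])$ squares away to $1$. As an alternative route, one could instead invoke Sylvester's law of inertia on the congruence $C_x = V \Lambda V^{\top}$ supplied by \Cref{prop:j_diagonalizable_matrices}: since $V$ is $J_D$-unitary, hence invertible, and $C_x \succeq 0$, the diagonal matrix $\Lambda$ has the same inertia as $C_x$, which forces all $\lambda_d \ge 0$.
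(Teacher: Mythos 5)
Your proof is correct and takes essentially the same route as the paper: in both cases the key identity is $\lambda_d = v_d^{\top} J_D C_x J_D v_d$ (after normalizing $|[v_d,v_d]| = 1$), together with the observation that the right-hand side is the Euclidean quadratic form $u^{\top} C_x u \ge 0$ with $u = J_D v_d$, since $C_x$ is an average of rank-one PSD matrices. The paper makes the last step explicit by writing $v_d^{\top} J_D C_x J_D v_d = N^{-1}\sum_{n}[x_n,v_d]^2 \ge 0$, but this is the same computation. Like you, the paper actually proves the stronger claim $\lambda_d \ge 0$ for all $d \ge 1$ and only then specializes to $\lambda_1 + \lambda_d \ge 0$. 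Your alternative via Sylvester's law of inertia on $C_x = V\Lambda V^{\top}$ is a clean abstraction of the same fact, though it trades the elementary direct computation for reliance on a named theorem.
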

\begin{proof}
Let $v_d$ be the $d$-th $J_D$-eigenvector of $C_x$. We have $\lambda_d = v_d^{\top} J_D \big( \mathrm{sgn}([v_d,v_d]) \lambda_d v_d \big) = v_d^{\top} J_D C_x J_D  v_d = N^{-1} \sum_{n \in [N]} [x_n, v_d]^2 \geq 0$, for all $d \geq 1$. QED.
\end{proof}

From \Cref{lem:eigenvalues_are_positive}, equation \eqref{eq:W_ii}, and $\| w_k \|_2 \geq 1$ for all $k \in [K^{\prime}]$, the minimum of the cost function in equation \eqref{eq:cost_in_wii} happens only if $\sum_{d = 2}^{D+1} \mathcal{W}_{d,d} = \sum_{k=1}^{K^{\prime}}  \|w_k\|_2^2  = K^{\prime}$, i.e., $\|w_1\|_2 = \cdots = \|w_{K^{\prime}} \|_2 = 1$. Therefore, we have $W = V^{\top} J_D H^{\prime} = \begin{bmatrix} 0 & \cdots & 0 \\ w_1 & \cdots & w_{K^{\prime}} \end{bmatrix}$. The first row of $V^{\top} J_D H^{\prime}$ corresponds to the Lorentzian product of $h^{\prime}_1, \ldots, h^{\prime}_{K^{\prime}}$ and the first column of $V$, i.e., $v_1$. Hence, we have $ h^{\prime}_1, \ldots, h^{\prime}_{K^{\prime}} \in v_{1}^{\perp}$. Since $v_1$ is the only negative $J_D$-eigenvector of $C_x$, then we have $p = v_1$. From the unit norm constraintS for $w_1, \ldots, w_{K^{\prime}}$, we have $W^{\top} W = I_{K^{\prime}}$, i.e., $W$ is a sliced unitary matrix and $W W^{\top}$ has zero-one eigenvalues. The cost function $\mathrm{cost}(\mathbb{H}^D_H| \mathcal{X})  = \mathrm{Tr} \{ W^{\top} \Lambda W \} = \mathrm{Tr} \{  \Lambda W W^{\top} \}$  achieves its minimum if and only if the non-zero singular values of $ W W^{\top}$ are aligned with the $K^{\prime}$ smallest diagonal values of $\Lambda$ --- from the von Neumann's trace inequality \cite{mirsky1975trace}. Let $\lambda_2 \leq \lambda_3 \leq \cdots$. If $w_i = e_i$ for $i = 2, \ldots, K^{\prime}+1$, then we achieve the minimum of the cost function, that is,  $h^{\prime}_1, \ldots, h^{\prime}_{K^{\prime}}$ are $K^{\prime}$ negative $J_D$-eigenvectors paired to the smallest $J_D$-eigenvalues of $C_x$.

\section{Additional Experiments}\label{sec:more_exps}
We present experiments on the \textsc{Newsgroups} dataset to demonstrate the impact of spherical PCA on classification accuracy. Using random forest and a five-layer neural network classifiers with a $90\%$ training and $10\%$ test split, the average accuracy is based on $20$ random splits. As shown in \Cref{fig:spca_experiment_classification}, SFPCA  outperforms PGA in average accuracy over $K$ (target dimension) experiments. Both methods significantly improve random forest performance and modestly improve the neural network's. This may be due to the neural network's denoising ability. For large $K$, random forest accuracy declines, unlike the neural network. This may be due to the \textsc{Newsgroups} dataset's high sparsity. Discarding small eigenvalues of the second-moment matrix significantly alters the data's sparsity, adversely affecting the random forest's accuracy.

\begin{figure}[t!]
\centering
\includegraphics[width=.5\textwidth]{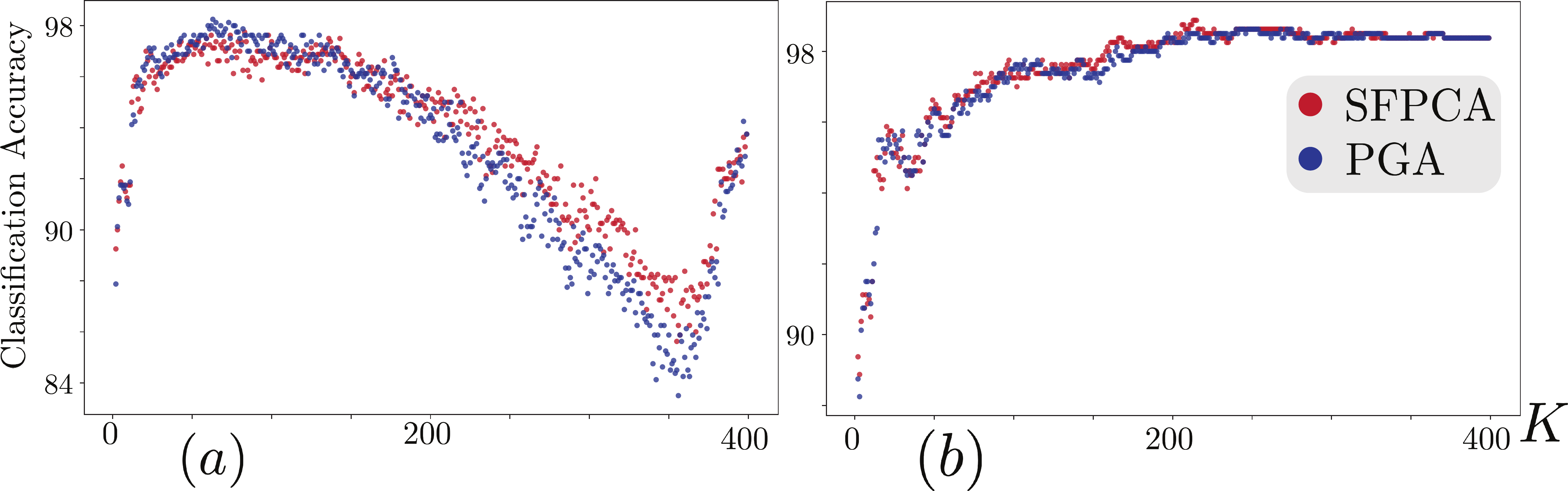}
\caption{Classification accuracy (\%) of PCA to dimension $K$ on reconstructed compositional data with $(a)$ random forest and $(b)$ neural net.}
\label{fig:spca_experiment_classification}
\end{figure}

\bibliographystyle{siamplain}
\bibliography{refs}

\begin{thebibliography}{10}

\bibitem{anirudh2015elastic}
{\sc R.~Anirudh, P.~Turaga, J.~Su, and A.~Srivastava}, {\em Elastic functional
  coding of human actions: From vector-fields to latent variables}, in
  Proceedings of the IEEE Conference on Computer Vision and Pattern
  Recognition, 2015, pp.~3147--3155.

\bibitem{bishop1998bayesian}
{\sc C.~Bishop}, {\em Bayesian {PCA}}, Advances in Neural Information
  Processing Systems, 11 (1998).

\bibitem{cai2013sparse}
{\sc T.~T. Cai, Z.~Ma, and Y.~Wu}, {\em Sparse {PCA}: Optimal rates and
  adaptive estimation}, The Annals of Statistics, 41 (2013), pp.~3074--3110.

\bibitem{chakraborty2016efficient}
{\sc R.~Chakraborty, D.~Seo, and B.~C. Vemuri}, {\em An efficient exact-{PGA}
  algorithm for constant curvature manifolds}, in Proceedings of the IEEE
  conference on computer vision and pattern recognition, 2016, pp.~3976--3984.

\bibitem{chami2021horopca}
{\sc I.~Chami, A.~Gu, D.~P. Nguyen, and C.~R{\'e}}, {\em Horo{PCA}: Hyperbolic
  dimensionality reduction via horospherical projections}, in International
  Conference on Machine Learning, PMLR, 2021, pp.~1419--1429.

\bibitem{charlson2010disordered}
{\sc E.~S. Charlson, J.~Chen, R.~Custers-Allen, K.~Bittinger, H.~Li, R.~Sinha,
  J.~Hwang, F.~D. Bushman, and R.~G. Collman}, {\em Disordered microbial
  communities in the upper respiratory tract of cigarette smokers}, PloS one, 5
  (2010), p.~e15216.

\bibitem{chien2021highly}
{\sc E.~Chien, C.~Pan, P.~Tabaghi, and O.~Milenkovic}, {\em Highly scalable and
  provably accurate classification in {P}oincar{\'e} balls}, in 2021 IEEE
  International Conference on Data Mining, IEEE, 2021, pp.~61--70.

\bibitem{chien2022hyperaid}
{\sc E.~Chien, P.~Tabaghi, and O.~Milenkovic}, {\em Hyperaid: Denoising in
  hyperbolic spaces for tree-fitting and hierarchical clustering}, in
  Proceedings of the 28th ACM SIGKDD Conference on Knowledge Discovery and Data
  Mining, 2022, pp.~201--211.

\bibitem{PhylteR}
{\sc A.~Comte, T.~Tricou, E.~Tannier, J.~Joseph, A.~Siberchicot, S.~Penel,
  R.~Allio, F.~Delsuc, S.~Dray, and D.~M. De~Vienne}, {\em {PhylteR}:
  {Efficient} {Identification} of {Outlier} {Sequences} in {Phylogenomic}
  {Datasets}}, Molecular Biology and Evolution, 40 (2023), p.~msad234.

\bibitem{dai2018principal}
{\sc X.~Dai and H.-G. M{\"u}ller}, {\em Principal component analysis for
  functional data on {R}iemannian manifolds and spheres}, The Annals of
  Statistics, 46 (2018), pp.~3334--3361.

\bibitem{firstquartet}
{\sc G.~F. Estabrook, F.~R. McMorris, and C.~A. Meacham}, {\em Comparison of
  {Undirected} {Phylogenetic} {Trees} {Based} on {Subtrees} of {Four}
  {Evolutionary} {Units}}, Systematic Biology, 34 (1985), pp.~193--200.

\bibitem{fletcher2004principal2}
{\sc P.~T. Fletcher and S.~Joshi}, {\em Principal geodesic analysis on
  symmetric spaces: Statistics of diffusion tensors}, in Computer Vision and
  Mathematical Methods in Medical and Biomedical Image Analysis, Springer,
  2004, pp.~87--98.

\bibitem{fletcher2003statistics}
{\sc P.~T. Fletcher, C.~Lu, and S.~Joshi}, {\em Statistics of shape via
  principal geodesic analysis on {L}ie groups}, in 2003 IEEE Computer Society
  Conference on Computer Vision and Pattern Recognition, 2003. Proceedings.,
  vol.~1, IEEE, 2003, pp.~I--I.

\bibitem{fletcher2004principal}
{\sc P.~T. Fletcher, C.~Lu, S.~M. Pizer, and S.~Joshi}, {\em Principal geodesic
  analysis for the study of nonlinear statistics of shape}, IEEE transactions
  on medical imaging, 23 (2004), pp.~995--1005.

\bibitem{gallier2020differential}
{\sc J.~Gallier and J.~Quaintance}, {\em Differential geometry and Lie groups:
  a computational perspective}, vol.~12, Springer Nature, 2020.

\bibitem{gu2018learning}
{\sc A.~Gu, F.~Sala, B.~Gunel, and C.~R{\'e}}, {\em Learning mixed-curvature
  representations in product spaces}, in International Conference on Learning
  Representations, 2018.

\bibitem{guan2009sparse}
{\sc Y.~Guan and J.~Dy}, {\em Sparse probabilistic principal component
  analysis}, in Artificial Intelligence and Statistics, PMLR, 2009,
  pp.~185--192.

\bibitem{higham2003j}
{\sc N.~J. Higham}, {\em J-orthogonal matrices: Properties and generation},
  SIAM review, 45 (2003), pp.~504--519.

\bibitem{hotelling1933analysis}
{\sc H.~Hotelling}, {\em Analysis of a complex of statistical variables into
  principal components.}, Journal of educational psychology, 24 (1933), p.~417.

\bibitem{huckemann2010intrinsic}
{\sc S.~Huckemann, T.~Hotz, and A.~Munk}, {\em Intrinsic shape analysis:
  Geodesic {PCA} for {R}iemannian manifolds modulo isometric lie group
  actions}, Statistica Sinica,  (2010), pp.~1--58.

\bibitem{huckemann2006principal}
{\sc S.~Huckemann and H.~Ziezold}, {\em Principal component analysis for
  riemannian manifolds, with an application to triangular shape spaces},
  Advances in Applied Probability, 38 (2006), pp.~299--319.

\bibitem{jiang2022learning}
{\sc Y.~Jiang, P.~Tabaghi, and S.~Mirarab}, {\em Learning hyperbolic embedding
  for phylogenetic tree placement and updates}, Biology, 11 (2022), p.~1256.

\bibitem{jolliffe2002principal}
{\sc I.~T. Jolliffe}, {\em Principal {C}omponent {A}nalysis}, Springer, 2002.

\bibitem{jolliffe2003modified}
{\sc I.~T. Jolliffe, N.~T. Trendafilov, and M.~Uddin}, {\em A modified
  principal component technique based on the {LASSO}}, Journal of
  {C}omputational and {G}raphical {S}tatistics, 12 (2003), pp.~531--547.

\bibitem{jung2012analysis}
{\sc S.~Jung, I.~L. Dryden, and J.~S. Marron}, {\em Analysis of principal
  nested spheres}, Biometrika, 99 (2012), pp.~551--568.

\bibitem{klimovskaia2020poincare}
{\sc A.~Klimovskaia, D.~Lopez-Paz, L.~Bottou, and M.~Nickel}, {\em Poincar{\'e}
  maps for analyzing complex hierarchies in single-cell data}, Nature
  Communications, 11 (2020), pp.~1--9.

\bibitem{lahav2023procrustes}
{\sc A.~Lahav and R.~Talmon}, {\em Procrustes analysis on the manifold of spsd
  matrices for data sets alignment}, IEEE Transactions on Signal Processing,
  (2023).

\bibitem{lahti2014tipping}
{\sc L.~Lahti, J.~Saloj{\"a}rvi, A.~Salonen, M.~Scheffer, and W.~M. De~Vos},
  {\em Tipping elements in the human intestinal ecosystem}, Nature
  communications, 5 (2014), p.~4344.

\bibitem{lawrence2003gaussian}
{\sc N.~Lawrence}, {\em Gaussian process latent variable models for
  visualisation of high dimensional data}, Advances in Neural Information
  Processing Systems, 16 (2003).

\bibitem{lazar2017scale}
{\sc D.~Lazar and L.~Lin}, {\em Scale and curvature effects in principal
  geodesic analysis}, Journal of Multivariate Analysis, 153 (2017), pp.~64--82.

\bibitem{lee2006riemannian}
{\sc J.~M. Lee}, {\em Riemannian manifolds: An introduction to curvature},
  vol.~176, Springer Science \& Business Media, 2006.

\bibitem{liu2019spherical}
{\sc K.~Liu, Q.~Li, H.~Wang, and G.~Tang}, {\em Spherical principal component
  analysis}, in Proceedings of the 2019 SIAM International Conference on Data
  Mining, SIAM, 2019, pp.~387--395.

\bibitem{lou2020differentiating}
{\sc A.~Lou, I.~Katsman, Q.~Jiang, S.~Belongie, S.-N. Lim, and C.~De~Sa}, {\em
  Differentiating through the fr{\'e}chet mean}, in International Conference on
  Machine Learning, PMLR, 2020, pp.~6393--6403.

\bibitem{TreeShrink}
{\sc U.~Mai and S.~Mirarab}, {\em {TreeShrink}: fast and accurate detection of
  outlier long branches in collections of phylogenetic trees}, BMC Genomics, 19
  (2018), p.~272.
\newblock ISBN: 9783319679785.

\bibitem{meng2019spherical}
{\sc Y.~Meng, J.~Huang, G.~Wang, C.~Zhang, H.~Zhuang, L.~Kaplan, and J.~Han},
  {\em Spherical text embedding}, Advances in neural information processing
  systems, 32 (2019).

\bibitem{mirsky1975trace}
{\sc L.~Mirsky}, {\em A trace inequality of {J}ohn von {N}eumann}, Monatshefte
  f{\"u}r {M}athematik, 79 (1975), pp.~303--306.

\bibitem{pearson1901liii}
{\sc K.~Pearson}, {\em {LIII}. {O}n lines and planes of closest fit to systems
  of points in space}, The London, Edinburgh, and Dublin philosophical magazine
  and journal of science, 2 (1901), pp.~559--572.

\bibitem{pennec2018barycentric}
{\sc X.~Pennec}, {\em Barycentric subspace analysis on manifolds}, The Annals
  of Statistics, 46 (2018), pp.~2711--2746.

\bibitem{rahman2005multiscale}
{\sc I.~U. Rahman, I.~Drori, V.~C. Stodden, D.~L. Donoho, and P.~Schr{\"o}der},
  {\em Multiscale representations for manifold-valued data}, Multiscale
  Modeling \& Simulation, 4 (2005), pp.~1201--1232.

\bibitem{roweis1997algorithms}
{\sc S.~Roweis}, {\em Em algorithms for pca and spca}, Advances in Neural
  Information Processing Systems, 10 (1997).

\bibitem{slapnivcar1999bound}
{\sc I.~Slapni{\v{c}}ar and K.~Veseli{\'c}}, {\em A bound for the condition of
  a hyperbolic eigenvector matrix}, Linear algebra and its applications, 290
  (1999), pp.~247--255.

\bibitem{sommer2010manifold}
{\sc S.~Sommer, F.~Lauze, S.~Hauberg, and M.~Nielsen}, {\em Manifold valued
  statistics, exact principal geodesic analysis and the effect of linear
  approximations}, in Computer Vision--ECCV 2010: 11th European Conference on
  Computer Vision, Heraklion, Crete, Greece, September 5-11, 2010, Proceedings,
  Part VI 11, Springer, 2010, pp.~43--56.

\bibitem{sommer2014optimization}
{\sc S.~Sommer, F.~Lauze, and M.~Nielsen}, {\em Optimization over geodesics for
  exact principal geodesic analysis}, Advances in Computational Mathematics, 40
  (2014), pp.~283--313.

\bibitem{sonthalia2020tree}
{\sc R.~Sonthalia and A.~Gilbert}, {\em Tree! i am no tree! i am a low
  dimensional hyperbolic embedding}, Advances in Neural Information Processing
  Systems, 33 (2020), pp.~845--856.

\bibitem{stewart1993early}
{\sc G.~W. Stewart}, {\em On the early history of the singular value
  decomposition}, SIAM review, 35 (1993), pp.~551--566.

\bibitem{tabaghi2021linear}
{\sc P.~Tabaghi, E.~Chien, C.~Pan, J.~Peng, and O.~Milenkovi{\'c}}, {\em Linear
  classifiers in product space forms}, arXiv preprint arXiv:2102.10204,
  (2021).

\bibitem{tabaghi2020hyperbolic}
{\sc P.~Tabaghi and I.~Dokmani{\'c}}, {\em Hyperbolic distance matrices}, in
  Proceedings of the 26th ACM SIGKDD International Conference on Knowledge
  Discovery \& Data Mining, 2020, pp.~1728--1738.

\bibitem{tabaghi2020geometry}
{\sc P.~Tabaghi, J.~Peng, O.~Milenkovic, and I.~Dokmani{\'c}}, {\em Geometry of
  similarity comparisons}, arXiv preprint arXiv:2006.09858,  (2020).

\bibitem{thurstone1931multiple}
{\sc L.~L. Thurstone}, {\em Multiple factor analysis.}, Psychological Review,
  38 (1931), p.~406.

\bibitem{tipping1999probabilistic}
{\sc M.~E. Tipping and C.~M. Bishop}, {\em Probabilistic principal component
  analysis}, Journal of the Royal Statistical Society: Series B (Statistical
  Methodology), 61 (1999), pp.~611--622.

\bibitem{tournier2009motion}
{\sc M.~Tournier, X.~Wu, N.~Courty, E.~Arnaud, and L.~Reveret}, {\em Motion
  compression using principal geodesics analysis}, in Computer Graphics Forum,
  vol.~28, Wiley Online Library, 2009, pp.~355--364.

\bibitem{vidal2005generalized}
{\sc R.~Vidal, Y.~Ma, and S.~Sastry}, {\em Generalized principal component
  analysis ({GPCA})}, IEEE transactions on pattern analysis and machine
  intelligence, 27 (2005), pp.~1945--1959.

\bibitem{Warnow2017}
{\sc T.~Warnow}, {\em Computational phylogenetics: {An} introduction to
  designing methods for phylogeny estimation}, Cambridge University Press,
  Cambridge, United Kingdom, 2017.
\newblock Address: Cambridge, United Kingdom.

\bibitem{1kp-pilot}
{\sc N.~J. Wickett, S.~Mirarab, N.~Nguyen, T.~Warnow, E.~J. Carpenter,
  N.~Matasci, S.~Ayyampalayam, M.~S. Barker, J.~G. Burleigh, M.~A.
  Gitzendanner, B.~R. Ruhfel, E.~Wafula, J.~P. Der, S.~W. Graham, S.~Mathews,
  M.~Melkonian, D.~E. Soltis, P.~S. Soltis, N.~W. Miles, C.~J. Rothfels,
  L.~Pokorny, A.~J. Shaw, L.~DeGironimo, D.~W. Stevenson, B.~Surek, J.~C.
  Villarreal, B.~Roure, H.~Philippe, C.~W. DePamphilis, T.~Chen, M.~K.
  Deyholos, R.~S. Baucom, T.~M. Kutchan, M.~M. Augustin, J.~J. Wang, Y.~Zhang,
  Z.~Tian, Z.~Yan, X.~Wu, X.~Sun, G.~K.-S. Wong, and J.~J. Leebens-Mack}, {\em
  Phylotranscriptomic analysis of the origin and early diversification of land
  plants}, Proceedings of the National Academy of Sciences, 111 (2014),
  pp.~E4859--4868.
\newblock arXiv: 1301.5277v2 ISBN: 0027-8424.

\bibitem{wold1987principal}
{\sc S.~Wold, K.~Esbensen, and P.~Geladi}, {\em Principal component analysis},
  Chemometrics and Intelligent Laboratory Systems, 2 (1987), pp.~37--52.

\bibitem{xu2010robust}
{\sc H.~Xu, C.~Caramanis, and S.~Sanghavi}, {\em Robust pca via outlier
  pursuit}, Advances in {N}eural {I}nformation {P}rocessing {S}ystems, 23
  (2010).

\bibitem{astral3}
{\sc C.~Zhang, M.~Rabiee, E.~Sayyari, and S.~Mirarab}, {\em {ASTRAL}-{III}:
  polynomial time species tree reconstruction from partially resolved gene
  trees}, BMC Bioinformatics, 19 (2018), p.~153.

\bibitem{zhou2021hyperbolic}
{\sc Y.~Zhou and T.~O. Sharpee}, {\em Hyperbolic geometry of gene expression},
  Iscience, 24 (2021), p.~102225.

\bibitem{zou2006sparse}
{\sc H.~Zou, T.~Hastie, and R.~Tibshirani}, {\em Sparse principal component
  analysis}, Journal of {C}omputational and {G}raphical {S}tatistics, 15
  (2006), pp.~265--286.

\end{thebibliography}

\vfill

\end{document}